\begin{document}

\title{New perspective on sampling-based motion planning via random
  geometric graphs}

%\author{Author Names Omitted for Anonymous Review. Paper-ID 45}

\author{Kiril Solovey, Oren Salzman and Dan Halperin\thanks{ Kiril
    Solovey, Oren Salzman and Dan Halperin are with the Blavatnik
    School of Computer Science, Tel Aviv University, Israel;
    {\tt\footnotesize
      \{kirilsol,orensal,danha\}@post.tau.ac.il}.\newline \indent This
    work has been supported in part by the Israel Science Foundation
    (grant no.\ 1102/11), by the German-Israeli Foundation (grant no.\
    1150-82.6/2011), and by the Hermann Minkowski-Minerva Center for
    Geometry at Tel Aviv University. Kiril Solovey is also supported
    by the Clore Israel Foundation.}  }

\maketitle
\thispagestyle{empty}

\def\frechet{Fr\'echet\xspace}

\newcommand{\cupdot}{\mathbin{\mathaccent\cdot\cup}}

%unlabeled PSPACE-hardness paper
\newcommand{\mtm}{\emph{multi-to-multi}\xspace}
\newcommand{\mts}{\emph{multi-to-single}\xspace}
\newcommand{\sts}{\emph{multi-to-single-restricted}\xspace}
\newcommand{\dtd}{\emph{single-to-single}\xspace}

\newcommand{\cte}{\emph{full-to-edge}\xspace}
\newcommand{\ctc}{\emph{full-to-full}\xspace}
\newcommand{\ete}{\emph{edge-to-edge}\xspace}

\newcommand{\AND}{{\sc and}\xspace}
\newcommand{\OR}{{\sc or}\xspace}

%tex tools
\newcommand{\ignore}[1]{}

%%% algorithms
\def\vor{\text{Vor}}

\def\P{\mathcal{P}} \def\C{\mathcal{C}} \def\H{\mathcal{H}}
\def\F{\mathcal{F}} \def\U{\mathcal{U}} \def\L{\mathcal{L}}
\def\O{\mathcal{O}} \def\I{\mathcal{I}} \def\E{\mathcal{E}}
\def\S{\mathcal{S}} \def\G{\mathcal{G}} \def\Q{\mathcal{Q}}
\def\I{\mathcal{I}} \def\T{\mathcal{T}} \def\L{\mathcal{L}}
\def\N{\mathcal{N}} \def\V{\mathcal{V}} \def\B{\mathcal{B}}
\def\D{\mathcal{D}} \def\W{\mathcal{W}} \def\R{\mathcal{R}}
\def\M{\mathcal{M}} \def\X{\mathcal{X}} \def\A{\mathcal{A}}
\def\Y{\mathcal{Y}} \def\L{\mathcal{L}}

\def\dS{\mathbb{S}} \def\dT{\mathbb{T}} \def\dC{\mathbb{C}}
\def\dG{\mathbb{G}} \def\dD{\mathbb{D}} \def\dV{\mathbb{V}}
\def\dH{\mathbb{H}} \def\dN{\mathbb{N}} \def\dE{\mathbb{E}}
\def\dR{\mathbb{R}} \def\dM{\mathbb{M}} \def\dm{\mathbb{m}}
\def\dB{\mathbb{B}} \def\dI{\mathbb{I}} \def\dM{\mathbb{M}}

\def\eps{\varepsilon}

\def\limn{\lim_{n\rightarrow \infty}}

\def\orabf{\mathcal{O}_{B\!F}}
\def\orabfs{\mathcal{O}'_{B\!F}}
\def\orad{\mathcal{O}_{D}}

\def\obs{\mathrm{obs}}
\newcommand{\defeq}{%
  \mathrel{\vbox{\offinterlineskip\ialign{%
    \hfil##\hfil\cr
    $\scriptscriptstyle\triangle$\cr
    %\noalign{\kern0ex}
    $=$\cr
}}}}
\def\Int{\mathrm{Int}}

\def\Reals{\mathbb{R}}
\def\Naturals{\mathbb{N}}
\renewcommand{\leq}{\leqslant}
\renewcommand{\geq}{\geqslant}
\newcommand{\compl}{\mathrm{Compl}}

\newcommand{\sig}{\text{sig}}

\newcommand{\sbs}{sampling-based\xspace}
\newcommand{\mr}{multi-robot\xspace}
\newcommand{\mpl}{motion planning\xspace}
\newcommand{\mrmp}{multi-robot motion planning\xspace}
\newcommand{\sr}{single-robot\xspace}
\newcommand{\cs}{configuration space\xspace}
\newcommand{\conf}{configuration\xspace}
\newcommand{\confs}{configurations\xspace}

% libs
\newcommand{\stl}{\textsc{Stl}\xspace}
\newcommand{\boost}{\textsc{Boost}\xspace}
\newcommand{\core}{\textsc{Core}\xspace}
\newcommand{\leda}{\textsc{Leda}\xspace}
\newcommand{\cgal}{\textsc{Cgal}\xspace}
\newcommand{\qt}{\textsc{Qt}\xspace}
\newcommand{\gmp}{\textsc{Gmp}\xspace}

% programming
\newcommand{\Cpp}{C\raise.08ex\hbox{\tt ++}\xspace}

% Generic programming code:
\def\concept#1{\textsf{\it #1}}
\def\ccode#1{{\texttt{#1}}}

\newcommand{\ch}{\mathrm{ch}}
\newcommand{\pspace}{{\sc pspace}\xspace}
\newcommand{\threesum}{{\sc 3Sum}\xspace}
\newcommand{\np}{{\sc np}\xspace}
\newcommand{\degree}{\ensuremath{^\circ}}
\newcommand{\argmin}{\operatornamewithlimits{argmin}}

\newcommand{\Gdisk}{\G^\textup{disk}}
\newcommand{\Gbt}{\G^\textup{BT}}
\newcommand{\Gsoft}{\G^\textup{soft}}
\newcommand{\Gnear}{\G^\textup{near}}
\newcommand{\Gembed}{\G^\textup{embed}}

\newcommand{\dist}{\textup{dist}}

\newcommand{\Cfree}{\C_{\textup{free}}}
\newcommand{\Cforb}{\C_{\textup{forb}}}

\newtheorem{lemma}{Lemma}
\newtheorem{theorem}{Theorem}
\newtheorem{corollary}{Corollary}
\newtheorem{claim}{Claim}

\theoremstyle{definition}
\newtheorem{definition}{Definition}
\newtheorem{remark}{Remark}
\theoremstyle{plain}
\newtheorem{observation}{Observation}

\def\rfunc{\left(\frac{\log n}{n}\right)^{1/d}}
\def\rfuncs{\left(\frac{\log n}{n}\right)^{1/d}}
\def\cfunc{\sqrt{\frac{\log n}{\log\log n}}}
\def\rtrs{\gamma\rfunc}
\def\ctrs{2\cfunc}
\def\aconn{\A_\textup{conn}}
\def\abd{\A_\textup{str}}
\def\aspan{\A_\textup{span}}
\def\aopt{\A_\textup{opt}}
\def\ao{\A_\textup{ao}}
\def\acfo{\A_\textup{acfo}}
\def\binomial{\textup{Binomial}}
\def\twin{\textup{twin}}

\def\as{a.s.\xspace}

\def\distU#1{\|#1\|_{\G_n}^U}
\def\distW#1{\|#1\|_{\G_n}^W}

\def\tooth{\scalerel*{\includegraphics{./../fig/tooth}}{b}}

\makeatletter
\def\thmhead@plain#1#2#3{%
  \thmname{#1}\thmnumber{\@ifnotempty{#1}{ }\@upn{#2}}%
  \thmnote{ {\the\thm@notefont#3}}}
\let\thmhead\thmhead@plain
\makeatother

\def\todo#1{\textcolor{blue}{\textbf{TODO:} #1}}
\def\new#1{\textcolor{magenta}{#1}}
\def\old#1{\textcolor{red}{#1}}

\def\removed#1{\textcolor{green}{#1}}

\def\dx{\,\mathrm{d}x}
\def\dy{\,\mathrm{d}y}
\def\drho{\,\mathrm{d}\rho}

%\def\removed#1{}
%%% Local Variables:
%%% mode: plain-tex
%%% TeX-master: "main"
%%% End:

\begin{abstract}
  Roadmaps constructed by many sampling-based motion planners
  coincide, in the absence of obstacles, with standard models of
  random geometric graphs (RGGs). Those models have been studied for
  several decades and by now a rich body of literature exists
  analyzing various properties and types of RGGs. In their seminal
  work on optimal motion planning Karaman and Frazzoli~\cite{KF11}
  conjectured that a sampling-based planner has a certain property if
  the underlying RGG has this property as well. In this paper we
  settle this conjecture and leverage it for the development of a
  general framework for the analysis of sampling-based planners. Our
  framework, which we call \emph{localization-tessellation}, allows
  for easy transfer of arguments on RGGs from the free unit-hypercube
  to spaces punctured by obstacles, which are geometrically and
  topologically much more complex.  We demonstrate its power by
  providing alternative and (arguably) simple proofs for probabilistic
  completeness and asymptotic (near-)optimality of probabilistic
  roadmaps (PRMs). Furthermore, we introduce several variants of PRMs,
  analyze them using our framework, and discuss the implications of
  the analysis.
\end{abstract}

%\IEEEpeerreviewmaketitle

%%%%%%%%%%%%%%%%%%%%%%%%%%%%%%%%%%%%%%%%%%%%%%%%%%%%%
%Intro
%%%%%%%%%%%%%%%%%%%%%%%%%%%%%%%%%%%%%%%%%%%%%%%%%%%%%
%\begin{comment}
%\vspace{20pt}
\section{Introduction}
\label{sec:intro}
Motion planning is a fundamental research area in robotics with
applications in diverse domains such as graphical animation, surgical
planning, computational biology and computer games. For a general
overview of the subject and its applications, see,
e.g.,~\cite{clhbkt05, Latombe91, lavalle06}.

The basic problem of motion planning is concerned with finding a
collision-free path for a robot in a \emph{workspace} cluttered with
static obstacles.  The spatial pose of the robot, or its
\emph{configuration}, is uniquely defined by its degrees of freedom
(DOF{s}). The set of all configurations $\C$ is termed the
\emph{configuration space} of the robot, and decomposes into the
disjoint sets of free and forbidden configurations, namely~$\F$
and~$\C\setminus \F$, respectively. Thus, given start and target
configurations, the problem can be restated as the task of finding a
continuous curve in~$\F$ connecting the two configurations. This can
be very challenging, as $\F$ can be exponentially complex (see,
e.g.,~\cite{Can88, Rei79, SolHal15}) in the number of DOFs.

The high computational complexity of exact solutions to motion
planning have led to the development of sampling-based planners.
These algorithms, which trade completeness with applicability in
practical settings, aim to capture the connectivity of~$\F$ in a graph
data structure, called a roadmap, by randomly sampling~$\C$.  Most of
the theoretical properties of these algorithms are stated in terms of
their \emph{asymptotic} behavior, i.e., assuming that the number of
samples is sufficiently large: The property of \emph{probabilistic
  completeness} indicates that a given algorithm will eventually find
a solution (if one exists); algorithms that are known to be
\emph{asymptotically optimal} also return a solution whose cost
converges to the optimum.

Interestingly, roadmaps constructed by many sampling-based planners
coincide, in the absence of obstacles, with standard models of random
geometric graphs (RGGs).  These models have been studied for several
decades and by now a rich body of literature exists analyzing various
properties and types of RGGs.  Indeed, in their seminal work on
optimal motion planning, Karaman and Frazzoli~\cite{KF11} observed
this relation. They employed techniques that were initially developed
for the anaylsis of RGGs to the study of sampling-based
planners. Subsequent proofs regarding completeness and optimality of
new planners (see, e.g.,~\cite{GSB15, JSCP15,SH14}) rely, to some
extent, on the proofs in~\cite{KF11}.  Karaman and Frazzoli
conjectured that a sampling-based planner possesses a certain property
if the underlying RGG has this property as well (see~\cite[Section
6]{KF11}). The validity of this conjecture, which is settled in this
paper, allows to import existing results on RGGs directly to the
corresponding sampling-based planners.  \vspace{3pt}

\noindent \textbf{Contribution.} We introduce the
\emph{localization-tessellation} framework for the analysis of
sampling-based algorithms in motion planning. Our framework
facilitates the extension of properties of RGGs to sampling-based
techniques in motion planning. This is done using conceptually simple
ideas and elementary tools in probability theory. The underlying
result of the framework is that RGGs demonstrate similar behavior in
the absence as well as in the presence of obstacles. The framework
consists of two main components. First we show through
\emph{localization} that RGGs maintain their properties in
arbitrarily-small neighborhoods. The \emph{tessellation} stage extends
these properties to complex domains which can be viewed as free spaces
of motion-planning problems. Namely, the configuration space punctured
by obstacles.

We demonstrate the power of the framework by providing conditions for
probabilistic completeness and asymptotic (near-)optimality of
Probabilistic Roadmaps (PRMs)~\cite{KSLO96}. Our proofs are (arguably)
much simpler than the original proofs of Karaman and
Frazzoli~\cite{KF11}.

Furthermore, we introduce several variants of PRMs, which perform
connections in a randomized fashion, and analyze them using our
framework. Using those variants we show that the standard PRM still
maintains its favorable properties even when implemented using
\emph{approximate} nearest-neighbor search queries. \vspace{3pt}

\noindent \textbf{Organization.} In Section~\ref{sec:related} we
review related work. In Section~\ref{sec:RGG} we provide formal
definitions of several types of RGGs and describe their properties,
which will be employed by our localization-tessellation framework. In
Section~\ref{sec:local} we describe the \emph{localization} component
of the framework, that is, we show that RGGs maintain a wide range of
their properties in arbitrarily-small neighborhoods. In
Section~\ref{sec:tessellation} we focus on the two specific properties
of connectivity and bounded stretch and show that they hold in general
domains via a \emph{tessellation} argument.  In Section~\ref{sec:mp}
we make the transition to motion planning: we describe several
planners---including the standard PRM---and study their asymptotic
behavior using the framework. In Section~\ref{sec:evaluation} we show
empirically that the theoretical results obtained by the framework
also hold in practice. We conclude the paper with a discussion and
state several future research directions
(Section~\ref{sec:discussion}).

%%%%%%%%%%%%%%%%%%%%%%%%%%%%%%%%%%%%%%%%%%%%%%%%%%%%%
% Related work
%%%%%%%%%%%%%%%%%%%%%%%%%%%%%%%%%%%%%%%%%%%%%%%%%%%%%
\section{Related work}\label{sec:related}
We review related work in the area of sampling-based algorithms for
motion planning and random geometric graphs.

\subsection{Sampling-based motion planning}
Sampling-based algorithms, such as PRMs~\cite{KSLO96}, Expansive Space
Trees (EST)~\cite{HLM99} and Rapidly-exploring Random Trees
(RRT)~\cite{KL00}, as well as their many variants, have proven to be
effective tools for motion planning.  These algorithms, and others were 
shown to be probabilistically complete.  While this is a
desirable property of any algorithm, in certain applications stronger
guarantees are required.

In recent years we have seen an increasing interest in
\emph{high-quality}\footnote{Quality can be measured in terms of
  length, clearance, smoothness, energy, to mention a few
  criteria. However, in this paper we will restrict our focus to the
  standard length measure.}  motion planning.  The literature contains
many examples of planners that are shown empirically to produce
high-quality paths (for a partial list see~\cite{ABDJV98, GO07, LTA03,
  LSMK13, REH11, SLN00, US03}).  Unfortunately, they are not
  backed by rigorous proofs pertaining to the quality of the solution
  produced by the algorithm.  A complementary work proves that in
certain settings RRT can produce paths of arbitrarily-poor
quality~\cite{NRH10}.

In their seminal work, Karaman and Frazzoli~\cite{KF11} develop the
first rigorous analysis of quality in the setting of sampling-based
motion planning: They provide conditions under which existing planners
are not asymptotically optimal. More importantly, they introduce two
new variants of RRT and PRM, termed RRT* and PRM*, which are shown to
be asymptotically optimal, under the right choice of parameters.
Following this exposition, several asymptotically-optimal algorithms
have emerged (see e.g.,~\cite{APD11, AT13,GSB15, JSCP15, SH15}). To
reduce the running time of such algorithms several asymptotically
\emph{near} optimal planners have been suggested, which trade the
quality of the solution with speed of computation (see
e.g.,~\cite{DB14, LLB13, SH14, SSAH14}).

Although the focus of this paper is on the simplified ``geometric''
setting of motion planning, we mention that some planners can
cope with more complex robotic systems in which uncertainty and
physical constraints come into play (see, e.g.,~\cite{KarFra10,
  LadKav04b, LiLitBek14, PerETAL12, SucKav12, WebBer13,
  XieETAL15}). Some of these planners can also produce high-quality
paths.

\subsection{Random geometric graphs}
The study of random geometric graphs (RGGs) was initiated by
Gilbert~\cite{Gil61} who considered the following model: a collection
of points is sampled at random in a given subspace of $\dR^d$, and a
graph is formed by drawing edges between points that are closer than a
given $r>0$, called the \emph{connection radius}.

An immediate question that follows is for which values of $r$ the
graph is connected (with high probability). Several works have
addressed this question and showed that it is both necessary and
sufficient that the connection radius will be proportional to
$\left(\frac{\log n}{n}\right)^{1/d}$, where $n$ is the number of
points and the points are sampled from the unit hypercube $[0,1]^d$
(see, e.g.,~\cite{AppRus02, KozLotStu10, Pen97}). Penrose~\cite{Pen99}
established that connectivity occurs approximately when the graph has
no isolated vertices. The monograph~\cite{Pen03} of the same author on
this subject studies many more properties of RGGs, including vertex
degree, clique size and coloring. The reader is also referred to a
survey on the subject by Walters~\cite{Wal11}.

In recent years RGGs have attracted much attention as a tool for
modeling large-scale communication networks, and in particular sensor
networks: the vertices of the graph represent sensors and an edge is
drawn between two sensors that are in the communication range. Gupta
and Kumar used this analogy in order to deduce the transmission power
necessary for the network to be connected~\cite{GuKum99}. An important
parameter that arises in this context is the number of transmitters a
message has to traverse in order to establish a broadcast between two
given transmitters.  Several works have established that this
parameter is proportional to the Euclidean distance between the two
nodes (see, e.g.,~\cite{BEFSS10,DMPP14, EllMarYan07, TobSauSta13,
  MitPer12, MP10}).

Various alternative connection strategies for RGGs have been proposed
over the years, the most studied of which is the $k$-nearest model
(see, e.g.,~\cite{BagSoh08, BBSW09,XueK04}). More complex models
assign edges between vertices in a randomized fashion (see,
e.g.,~\cite{BroETAL14, FriPeg14, Pen13}).  Some models introduce an
ordering on the sampled points (see, e.g.,\cite{BhaRoy04,PenWad10a,
  PenWad10, SchTha14, Wade09}) which results in a directed graph that
resembles the RRT tree~\cite{KL00}.

\section{Preliminaries}
\label{sec:RGG}
We describe several models of random geometric graphs (RGGs) and
mention useful properties that will be used throughout the paper. When
possible, we follow the notation and conventions in the standard
literature of RGGs (see, e.g.,~\cite{Pen03}). Let
$\X_n=\{X_1,\ldots,X_n\}$ be $n$ points chosen independently and
uniformly at random from the Euclidean $d$-dimensional
cube~$[0,1]^d$. We assume that the dimension~$d$ of the domain is
fixed and greater than one.  Let $\|x-y\|_2$ denote the Euclidean
distance between two points $x,y \in \dR^d$ and $\theta_d$ denote the
Lebesgue measure of the unit ball in~$\dR^d$.  Finally, denote by
$\B_{r}(x)$ be the $d$-dimensional ball of radius $r>0$ centered at
$x\in \dR^d$ and $\B_{r}(\Gamma) = \bigcup_{x \in
  \Gamma}\B_{r}(x)$ for any $\Gamma \subseteq \dR^d$. 
  Similarly, given a curve $\sigma:[0,1]\rightarrow
\dR^d$ denote $\B_r(\sigma)=\bigcup_{\tau\in[0,1]}\B_r(\sigma(\tau))$.

Throughout the paper we will use the standard notation for asymptotic
bounds: Let $f = f(n), g = g(n)$ be two functions. The notation
$f=\omega(g)$ indicates that $\limn f/g\rightarrow \infty$, and
$f=o(g)$ indicates that $\limn f/g\rightarrow 0$.  Let
$A_1,A_2,\ldots$ be random variables in some probability space and let
$B$ be an event depending on $A_n$. We say that $B$ occurs
\emph{almost surely} (\as, in short) if $\limn\Pr[B(A_n)]=1$. Finally,
all logarithms are at base $e$. 
\begin{definition}[\cite{Pen03}]\label{def:rgg}
  Given $r_n \in \dR^+$, the \emph{random geometric graph} (RGG)
  $\Gdisk(\X_n;r_n)$ is an undirected graph with the vertex set
  $ \X_n$.  For any two given vertices $x,y\in \X_n$ the graph
  contains the edge $(x,y)$ if $\|x-y\|_2\leq r_n$.
\end{definition}

We use the term RGG to refer both to the family of random geometric
graphs and to the specific model described in
Definition~\ref{def:rgg}.  This slight abuse of notation is introduced
to be consistent with existing literature and the exact meaning of RGG
will be clear from the context.

The following definition is concerned with a more complex structures
called \emph{random Bluetooth graphs}, also known as \emph{random
  irrigation graphs}.

\begin{definition}[\cite{BroETAL14}]\label{def:rbg}
  Let $2\leq c_n\leq n$ be a positive integer and $r_n \in \dR^+$.
  The random \emph{Bluetooth} graph (RBG) $\Gbt_n=\Gbt(\X_n;r_n;c_n)$
  is an undirected graph with the vertex set $\X_n$.  For every $x \in \X_n$ let
  $E(x, r_n)$ denote the set of points within maximal distance $r_n$ from
  $x$, i.e.,
  $E(x, r_n) = \{(x,y) : y\in \X_n \setminus \{x\}, \|x-y\|_2\leq
  r_n\}$.
  For every $x\in \X_n$ we pick randomly and independently $c_n$ edges
  from $E(x, r_n)$, and denote this set of edges by $E(x, r_n, c_n)$.
  The edge set of $\Gbt_n$ is defined to be
  $\bigcup_{x\in \X_n}E(x, r_n ,c_n)$.
\end{definition}

The following model is also a generalization of RGGs. Here a pair of
vertices are connected by an edge with a probability that depends on
the length of the edge.
\begin{definition}[\cite{Pen13}]\label{def:srbg}
  Let $r_n \in \dR^+$, and $\phi_n:\dR^+\rightarrow [0,1]$. The
  \emph{soft} random geometric graph (SRGG)
  $\Gsoft(\X_n;r_n;\phi_n)$ is an undirected graph with the
  vertex set $\X_n$. Denote by $E$ the edge set of this graph. For a
  pair of vertices $x,y\in \X_n$ such that $\|x-y\|_2\leq r_n$ it
  holds that $\Pr[(x,y)\in E]= \phi_n(\|x-y\|_2)$, independently for each
  edge.
\end{definition}

The following model can be viewed as a special case of SRGG where
$r_n=\infty$ and $\phi_n$ is constant.

\begin{definition}[\cite{FriPeg14}]\label{def:embedded}
  The randomly-\emph{embedded} geometric graph (REGG)
  $\Gembed(\X_n;p_n)$ is an undirected graph with the vertex set
  $\X_n$. For every two distinct vertices $x,y\in \X_n$, the graph
  contains the edge $(x,y)$ with probability $p_n$, and independently
  from the other edges.
\end{definition}

Throughout the text we will omit the superscript indicating the graph
type, and use instead the notation $\G_n$, if the exact type in
question is clear from the context.

%%%%%%%%%%%%%%%%%%%%%%%%%%%%%%%%%%%%%%%%%%%%%%%%%%%%%
%Connectivity
%%%%%%%%%%%%%%%%%%%%%%%%%%%%%%%%%%%%%%%%%%%%%%%%%%%%%
\subsection{Connectivity}\label{sec:rgg_connectivity}
Recall that for every undirected graph~$\G$, two vertices~$u$ and~$v$
are called \emph{connected} if~$\G$ contains a path from~$u$ to~$v$.
A graph is said to be connected if every pair of its vertices is
connected.  We mention three results related to the connectivity of
the RGG,RBG, and SRGG models.
\begin{theorem}[\cite{BroETAL14}]\label{thm:rgg_con}
  Let $\G_n=\Gdisk(\X_n,r_n)$ and $r_n=\rtrs$.   Then
  $$\limn\Pr[\G_n\textup{ is connected}]=
  \begin{cases}
    0  & \quad \textup{if }\gamma<\gamma^*,\\
    1  & \quad \textup{if }\gamma>\gamma^*,\\
  \end{cases}$$
  where $\gamma^*=2(2d\theta_d)^{-1/d}$.
\end{theorem}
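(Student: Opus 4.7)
The plan is to follow the classical Penrose-style argument linking graph connectivity with the vanishing of isolated vertices: in the sharp-threshold regime $r_n \sim (\log n/n)^{1/d}$, the random geometric graph is almost-surely connected if and only if it has no isolated vertices almost surely, so both halves of the theorem reduce to locating the threshold at which isolated vertices disappear.

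For the subcritical direction ($\gamma < \gamma^*$), I would introduce $I_n = \sum_{i=1}^n \mathbf{1}[X_i\text{ isolated}]$ and compute
\[
\dE[I_n] \;=\; n\int_{[0,1]^d}\bigl(1-|\B_{r_n}(x)\cap[0,1]^d|\bigr)^{n-1}\,dx.
\]
The integrand decays most slowly where the ball volume inside the cube is smallest, and the cube boundary stratifies into faces of codimension $k = 0, 1, \ldots, d$ whose associated ball-volume fraction is roughly $1/2^k$. A Laplace-type asymptotic estimation of the integral shows that the dominant contribution is from tubes of width $r_n$ around the codimension-$(d-1)$ stratum (the $1$-dimensional edges of $[0,1]^d$), giving $\dE[I_n]$ of order $n^{1/d - \theta_d\gamma^d/2^{d-1}}$ up to logarithmic factors. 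The condition $\gamma<\gamma^*=2(2d\theta_d)^{-1/d}$ translates precisely to $\theta_d\gamma^d/2^{d-1}<1/d$, so $\dE[I_n]\to\infty$. A second-moment computation---splitting pairs into ``close'' ($\|X_i-X_j\|\leq 2r_n$) and ``far'' contributions, and noting that far pairs are essentially independent while close pairs contribute a lower-order term---yields $\mathrm{Var}(I_n)=o(\dE[I_n]^2)$; then Chebyshev gives $\Pr[I_n\geq 1]\to 1$, forcing disconnection.

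For the supercritical direction ($\gamma > \gamma^*$), the same asymptotic reverses sign, giving $\dE[I_n]\to 0$ and ruling out isolated vertices by Markov. To upgrade ``no isolated vertex'' to full connectivity, I would exclude components of intermediate size: if $\G_n$ is disconnected, pick a smallest component $S$ with $2\leq |S|\leq n/2$; then the ``moat'' $\B_{r_n}(S)\setminus S$ must be empty of sample points, else $S$ would have an external neighbor. A union bound over candidate subsets $S$, combined with an isoperimetric-type lower bound on $|\B_{r_n/2}(S)\setminus S|$ for small $|S|$ (and a standard occupation argument ruling out balanced cuts), shows that the probability of any such moat existing vanishes.

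The main obstacle is pinpointing the correct constant $\gamma^*$: it requires comparing the exponential rates $(d-k)/d - \theta_d\gamma^d/2^k$ across all boundary strata $k$ and verifying that the maximum of $2^k(d-k)/d$ over $k\in\{0,\ldots,d-1\}$ is attained at $k=d-1$, producing exactly the factor $2^{d-1}/(d\theta_d)$ inside $\gamma^*$. The second-moment bound in the subcritical case and the moat union-bound in the supercritical case are standard but technical, and in a concise write-up I would cite~\cite{BroETAL14} and present only the threshold computation for isolated vertices, which is what transparently explains where the constant $\gamma^*$ comes from.
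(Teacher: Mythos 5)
The paper does not prove Theorem~\ref{thm:rgg_con} at all: it is quoted verbatim from~\cite{BroETAL14} (the underlying result being Penrose's sharp connectivity threshold for RGGs on the cube), and the paper's own machinery only starts downstream of it, in Lemma~\ref{lem:local_rgg} and the tessellation arguments. So there is no in-paper proof to compare against; what you have written is a sketch of the standard external argument.

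As such a sketch it is correct, and in particular you have located the constant correctly: writing $(\gamma^*)^d = 2^d/(2d\theta_d) = 2^{d-1}/(d\theta_d)$, the contribution of the codimension-$k$ boundary stratum to $\dE[I_n]$ is, up to logarithmic factors, $n^{1-k/d-\theta_d\gamma^d/2^k}$, and since $2^k(d-k)$ is non-decreasing in $k$ on $\{0,\dots,d-1\}$ with maximum $2^{d-1}$ at $k=d-1$, the binding stratum is the set of $1$-dimensional edges of the cube (for $d=2$ the interior ties with it), which gives exactly $\theta_d(\gamma^*)^d=2^{d-1}/d$, i.e.\ $\gamma^*=2(2d\theta_d)^{-1/d}$. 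The two places where your write-up is a pointer rather than a proof are precisely the two hard parts of the literature argument: the second-moment bound $\mathrm{Var}(I_n)=o(\dE[I_n]^2)$ in the subcritical case, and, more seriously, the upgrade from ``no isolated vertices'' to ``connected'' in the supercritical case. The latter is not a one-line union bound over moats: the number of candidate components $S$ is exponential, so one must discretize, group components by the region they occupy, and combine a Peierls-type count with the isoperimetric lower bound on the moat volume. Since you explicitly defer both to the reference this is acceptable for a cited theorem, but note that your opening sentence (``connected iff no isolated vertices'' at the threshold) is itself the deep part of what is being deferred, not an available starting point.
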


\begin{theorem}[\cite{BroETAL14}]\label{thm:rbg_con}
  Let $\G_n=\Gbt(\X_n;r_n;c_n)$, where $d\geq 2$, and $r_n=\rtrs$,
  where $\gamma >\gamma^{**}$ for $\gamma^{**}=d\cdot 2^{1+1/d}$.
  Then
$$\limn\Pr[\G_n\textup{ is connected}]=
  \begin{cases}
    0  & \quad \textup{if }c_n <  c_n^*,\\
    1  & \quad \textup{if }c_n > c_n^*,\\
  \end{cases}$$
where $c^*_n=\sqrt{\frac{2\log n}{\log \log n}}$.
\end{theorem}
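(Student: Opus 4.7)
The plan is to establish both halves of this threshold result separately: the sufficient direction via a tessellation argument and the necessary direction via a second-moment computation on locally isolated configurations.

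For the sufficient direction ($c_n>c_n^*$ forces connectivity almost surely), I would partition $[0,1]^d$ into axis-aligned cubes of side length $\ell_n=r_n/(2\sqrt{d})$, so that any two points lying in the same cube or in adjacent cubes are within distance $r_n$. Since $\gamma>\gamma^{**}$, the expected number of points per cube is $\Theta(\log n)$ with a large enough constant, and a Chernoff bound combined with a union bound over the $\Theta(n/\log n)$ cubes shows that \emph{every} cube contains $\Theta(\log n)$ points almost surely. The key probabilistic step is then to bound, for each pair of adjacent cubes $A,B$, the probability that no Bluetooth edge crosses from $A$ to $B$. For a fixed vertex $u\in A$, a constant fraction of $u$'s RGG neighbors lie in $B$, so the probability that none of $u$'s $c_n$ Bluetooth selections lands in $B$ is at most $\bigl(1-\Theta(1)\bigr)^{c_n}$. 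Taking the product over the $\Theta(\log n)$ vertices of $A$ (whose edge choices are independent) yields a failure probability of at most $\exp\bigl(-\Theta(c_n\log n)\bigr)$, which for $c_n>c_n^*$ beats the $\Theta(n/\log n)$ adjacent-pair union bound. Connectivity of $\G_n$ then follows because the dual grid on the cubes is connected and each cube is itself internally connected by the underlying RGG (via Theorem~\ref{thm:rgg_con}, whose hypothesis is satisfied by $\gamma>\gamma^{**}>\gamma^*$).

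For the necessary direction ($c_n<c_n^*$ precludes connectivity almost surely), I would isolate a natural class of bad configurations: small clusters $S$ of $k=\Theta(1)$ sample points concentrated in a tiny region such that (i) every vertex of $S$ picks \emph{all} of its $c_n$ Bluetooth edges inside $S$, and (ii) no vertex of $\X_n\setminus S$ within distance $r_n$ of $S$ happens to pick an edge into $S$. By linearity of expectation, the count of such configurations factors into a combinatorial probability $\binom{k-1}{c_n}/\binom{m-1}{c_n}$ for each internal vertex (with $m=\Theta(\log n)$ the typical local RGG degree) and a $(1-\Theta(c_n/\log n))^{\Theta(\log n)}$ factor for each external vertex. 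Matching exponents, the expected number tends to infinity precisely when $c_n<c_n^*$, and a second-moment inequality (or Janson's inequality, to tame the weak dependencies between overlapping candidate sets and between different vertices' independent edge choices) upgrades a large first moment into the event "at least one such cluster exists" having probability tending to $1$.

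The main obstacle is on the negative side: the positive direction's union bound is quite generous, so getting the \emph{exact} constant $\sqrt{2}$ in $c_n^*=\sqrt{2\log n/\log\log n}$ comes entirely from the combinatorial accounting of isolated clusters. One must track the exponents carefully without loose $\exp$-style bounds, choose the optimal cluster size $k$, and verify that the dependencies among overlapping candidates are negligible at second order. Getting the $\log\log n$ factor to fall out with the correct constant, rather than merely obtaining the order $\sqrt{\log n/\log\log n}$, is where the real work lies, and is the reason why this theorem is nontrivial even though the high-level structure (tessellation on one side, isolated small components on the other) parallels the classical connectivity analysis of $\Gdisk$.
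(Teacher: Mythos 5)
First, a point of reference: the paper does not prove Theorem~\ref{thm:rbg_con} at all --- it is imported verbatim from \cite{BroETAL14} --- so there is no in-paper proof to compare against, and I can only assess your sketch on its own terms. It has a decisive gap in the sufficient direction. Your tessellation shows that every pair of adjacent cubes is joined by at least one Bluetooth edge, and you then invoke connectivity of the underlying RGG \emph{inside} each cube to finish. But $\Gbt$ is a strict subgraph of $\Gdisk$: internal connectivity of a cube in the RGG says nothing about its internal connectivity in the Bluetooth graph, and linking adjacent cubes pairwise only connects the particular crossing endpoints, not all the points of a cube. The intra-cube connectivity of the sparse random-choice graph is exactly where the whole difficulty lives: a tight cluster of $c_n+1$ points inside a single cube, all of whose members choose their edges among themselves and into which no outsider chooses an edge, disconnects the graph, and such clusters exist \as whenever $c_n<c_n^*$. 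The tell that something is missing is that your argument, as written, would prove connectivity for any sufficiently large \emph{constant} $c_n$ (the per-pair failure probability $\exp(-\Theta(c_n\log n))=n^{-\Theta(c_n)}$ already beats the union bound over $\Theta(n/\log n)$ pairs for fixed $c_n$), contradicting the negative half of the very theorem you are proving.

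The necessary direction has the right shape but a wrong parameter choice: the bad clusters cannot have size $k=\Theta(1)$ in the regime that determines the threshold. For every vertex of $S$ to place all $c_n$ of its choices inside $S$ you need $|S|\geq c_n+1$, and since the critical case is $c_n\to\infty$ (just below $c_n^*$), the relevant clusters have size $k\approx c_n+1\to\infty$; with $k=\Theta(1)$ your factor $\binom{k-1}{c_n}$ is eventually zero. The constant $\sqrt{2}$ in $c_n^*$ then falls out of the first-moment computation with $k=c_n+1$, $\binom{m}{c_n}^{-1}\approx c_n!\,m^{-c_n}$ and $m=\Theta(\log n)$ local neighbors, balanced against the roughly $n(\log n)^{k-1}$ count of geometrically concentrated $k$-tuples. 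So the skeleton you propose (tessellation for the upper bound, isolated configurations plus a second moment for the lower bound) points in the right direction, but both halves hinge on the cluster analysis that the sketch currently only gestures at, and the positive half needs it just as much as the negative one.
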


Recently, Penrose~\cite{Pen13} developed a general characterization of
the necessary condition over $r_n$ and $\phi_n$ so that
$\Gsoft(\X_n;r_n;\phi_n)$ will be connected. We chose to focus here on
a specific range of values which can be of interest to motion
planning. The following theorem is proven in the appendix. 

\begin{theorem}\label{thm:srgg_con}
  Let $\G_n=\Gsoft(\X_n;r_n;\phi_n)$ and $r_n=\rtrs$. Set
  $\gamma > (d+1)^{1/d} \gamma^*$ (see Theorem~\ref{thm:rgg_con}), and
  define $\phi_n(z)=1-z/r_n$, for any $z\in \dR^+$. Then $\G_n$ is
  connected \as.
\end{theorem}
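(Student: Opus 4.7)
My plan begins with the heuristic observation that the profile $\phi_n(z)=1-z/r_n$ is calibrated so that the expected degree of a vertex in $\Gsoft$ exactly matches that of a standard RGG with a smaller radius. For a vertex $x$ far from the boundary of $[0,1]^d$,
\[
n\int_{\B_{r_n}(x)}\left(1-\|x-y\|_2/r_n\right)\dy \;=\; n\,\theta_d r_n^d/(d+1),
\]
which coincides with the expected degree in $\Gdisk(\X_n;\tilde r_n)$ for $\tilde r_n=r_n/(d+1)^{1/d}$. Writing $\tilde r_n=\tilde\gamma(\log n/n)^{1/d}$ with $\tilde\gamma=\gamma/(d+1)^{1/d}$, the hypothesis $\gamma>(d+1)^{1/d}\gamma^*$ becomes precisely $\tilde\gamma>\gamma^*$, so by Theorem~\ref{thm:rgg_con} the auxiliary $\Gdisk(\X_n;\tilde r_n)$ is \as connected. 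This degree-matching is the target I would aim to exploit.

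Since $\Gsoft$ does not stochastically dominate $\Gdisk(\X_n;\tilde r_n)$---an edge between points at distance at most $\tilde r_n$ is retained only with probability at most $1-(d+1)^{-1/d}$, bounded away from $1$---a direct coupling is unavailable. Instead, I would run a tessellation argument in the spirit of the framework developed later in the paper. Partition $[0,1]^d$ into axis-aligned subcubes of side $\ell_n=c(\log n/n)^{1/d}$ for a constant $c$ to be tuned. A Chernoff bound combined with a union bound over the $\Theta(n/\log n)$ subcubes shows that \as every subcube contains $\Theta(\log n)$ points of $\X_n$. Any two points in face-adjacent subcubes are at Euclidean distance at most $2\sqrt{d}\,\ell_n$, so provided $2\sqrt{d}\,c<\gamma$ the corresponding edge appears in $\Gsoft$ independently with probability at least $p_0:=1-2\sqrt{d}\,c/\gamma>0$.

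The connectivity argument then has two halves. For the inter-subcube part, the subcube adjacency graph is a finite grid and hence connected; using the $\Theta(\log^2 n)$ candidate cross-edges between each pair of adjacent subcubes, each independently present with probability at least $p_0$, a union bound over the $O(n/\log n)$ adjacent pairs and over the $n$ vertices yields that \as every vertex has at least one neighbor in each of its neighboring subcubes. For the intra-subcube part, every internal pair lies at distance at most $\sqrt{d}\,\ell_n$, so the intra-cube edge probability is at least $1-\sqrt{d}\,c/\gamma$, and one obtains an Erd\H{o}s--R\'enyi-type graph on $\Theta(\log n)$ vertices whose connectivity holds \as by the standard isolated-vertices calculation. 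Combining, any two vertices can be joined by a path that alternates intra-cube hops with cross-cube jumps along a grid route, which gives connectivity of $\Gsoft$.

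The main obstacle is tuning $c$ so that both competing requirements---enough points per subcube to beat the $\Theta(n)$-vertex union bound on isolated vertices, and a sufficiently small ratio $\ell_n/r_n$ to keep $p_0$ large---hold exactly at the threshold $\gamma>(d+1)^{1/d}\gamma^*$. A naive choice of $c$ yields a suboptimal constant, so to recover the sharp threshold I expect to supplement the tessellation with a second-moment analysis of the count of isolated vertices in $\Gsoft$, which gives expected number of isolated vertices tending to $0$ precisely when $\theta_d\gamma^d/(d+1)>1$---the expected-degree criterion from the first paragraph---and then argue, in the spirit of Penrose's strategy in~\cite{Pen13}, that above this threshold the absence of isolated vertices already forces connectivity of the whole of $\Gsoft$.
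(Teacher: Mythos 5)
Your endgame coincides with the paper's actual proof: the paper simply invokes Penrose's criterion from~\cite{Pen13} that $\G_n$ is connected \as once
$I_n = n\int_{x\in[0,1]^d}\exp\left(-n\int_{y\in[0,1]^d}\phi_n(y-x)\,\mathrm{d}y\right)\mathrm{d}x$
tends to $0$ --- this is exactly the first moment of your isolated-vertex count (no second moment is needed for the connectivity direction) --- and your interior computation $n\int_{\B_{r_n}(x)}\phi_n = n\theta_d r_n^d/(d+1)$ is the paper's Claim~\ref{clm:y_int} in the case $j=0$. The genuine gap is that you evaluate this integral only for $x$ far from $\partial[0,1]^d$ and then assert that the expected number of isolated vertices vanishes ``precisely when $\theta_d\gamma^d/(d+1)>1$.'' That is false for $d\geq 3$. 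Writing $t=\theta_d\gamma^d/(d+1)$, a point $x$ whose ball $\B_{r_n}(x)$ meets $j$ faces of the cube has connection mass only about $\theta_d r_n^d/((d+1)2^j)$; the set $R_j$ of such points has volume $O(r_n^j)$, so its contribution to $I_n$ is of order $n^{1-j/d-t/2^j}$ up to logarithmic factors. Maximizing $2^j(d-j)/d$ over $j$ shows the binding case is $j=d-1$ (points near the one-dimensional edges of the hypercube), which forces $t>2^{d-1}/d$ --- exactly the hypothesis $\gamma>(d+1)^{1/d}\gamma^*$, since $\theta_d(\gamma^*)^d=2^{d-1}/d$. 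In the intermediate range $1<t<2^{d-1}/d$ (nonempty for $d\geq3$) the expected number of isolated vertices near the edges diverges, so your criterion would ``prove'' connectivity in a regime where it fails; the same boundary-driven phenomenon is why the hard-RGG threshold $\gamma^*$ itself exceeds the interior value for $d\geq 3$. The entire content of the paper's proof is this boundary bookkeeping: the decomposition into regions $R_0,\dots,R_d$, the bounds $|R_j|\leq c_jr_n^j$ and $\int\phi_n\geq\theta_dr_n^d/((d+1)2^j)$, and the verification that $j/d+t/2^j>1$ for every $j$.

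The tessellation half of your proposal is internally consistent but, as you concede, only yields connectivity for $\gamma$ exceeding a constant of the form $2\sqrt{d}\,c$ with $c^d$ large enough to survive the union bound over the $\Theta(n/\log n)$ subcubes, which is far above $(d+1)^{1/d}\gamma^*$; it contributes nothing at the stated threshold and can be dropped. What must be supplied instead is the boundary-aware first-moment estimate above; the reduction from $I_n\to0$ to connectivity can legitimately be cited from~\cite{Pen13} as a black box, which is precisely what the paper does.
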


\subsection{Bounded stretch}\label{sec:rgg_distance}
Let $\G$ be a graph whose vertices are embedded in $\dR^d$. For every
two vertices $x,y\in \G$ denote their \emph{weighted graph distance},
i.e., the sum of lengths of the shortest path from $x$ to $y$, by
$\textup{dist}(\G,x,y)$.  Throughout the paper we will use the term
\emph{stretch} to denote the ratio between $\textup{dist}(\G,x,y)$ and
the length of the shortest path between $x,y$ in the domain in which
the graph is embedded. For instance, if this domain is convex, then
for every $x,y\in \G$ the stretch is defined to be
$\textup{dist}(\G,x,y)/\|x-y\|_2$.

In the setting of motion planning, we will
use the graph distance to bound the asymptotic path length of
sampling-based planners.

\begin{theorem}[\cite{TobSauSta13}]\label{thm:span2}
  Let $\G_n=\Gdisk(\X_n;r_n)$ with $r_n = \rtrs$ where
  $\gamma > \gamma^*$ (see Theorem~\ref{thm:rgg_con}).  Then there
  exists a constant $\zeta$ such that for every two vertices $x, y$ in
  the same connected component of $\G_n$, with
  $ \|x - y\|_2 = \omega(r_n)$, it holds that $\dist(\G_n,x,y)$ is at
  most $\zeta \|x-y\|_2$ \as
\end{theorem}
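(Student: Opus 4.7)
The plan is to prove Theorem~\ref{thm:span2} by a tessellation argument combined with a routing scheme that walks along (a small neighborhood of) the straight line segment between $x$ and $y$, hopping from a sample in one cell to a sample in the next.

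First, I would tessellate $[0,1]^d$ into axis-aligned cubic cells of side $s_n = \alpha r_n$, where $\alpha \in (0, 1/(2\sqrt{d})]$ is a small constant to be fixed later. The choice $\alpha \leq 1/(2\sqrt{d})$ guarantees that any two points lying in cells that share at least a vertex are at Euclidean distance at most $r_n$, so that one sample per cell suffices to form a chain of $\G_n$-edges between adjacent cells. The number of cells is $\Theta(n/\log n)$ and each cell has expected sample count $n s_n^d = \alpha^d \gamma^d \log n$.

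Second, I call a cell \emph{good} if it contains at least one sample of $\X_n$, and \emph{bad} otherwise. A standard Chernoff/union-bound computation shows $\Pr[\textup{cell is bad}] \leq (1 - s_n^d)^n \leq n^{-\alpha^d \gamma^d}$, so a union bound over all cells shows that \as every cell is good provided $\alpha^d \gamma^d > 1$. If $\gamma$ is large enough that $\alpha$ can be chosen in $(1/\gamma, 1/(2\sqrt{d})]$, then the entire grid is good \as. For $\gamma$ only slightly above $\gamma^*$ the two constraints may be incompatible; in that regime I would instead fix $\alpha$ so that bad cells are polynomially rare and show via a simple percolation/isoperimetric argument that \as every connected cluster of bad cells has diameter $O(1)$ (constant), so each bad cluster can be bypassed by a local detour of $O(r_n)$ Euclidean length through good cells around it.

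Third, given $x,y$ in the same connected component with $\|x-y\|_2 = \omega(r_n)$, I would follow the segment $\overline{xy}$: it traverses a sequence of cells $C_1,\ldots,C_k$ with $k = O(\|x-y\|_2 / s_n)$. In each good $C_i$ pick any sample $p_i$; consecutive $p_i, p_{i+1}$ lie in cells sharing at least a vertex and hence form an edge of $\G_n$ of length at most $r_n$. Detours around bad clusters add only $O(r_n)$ per cluster, and in total a vanishing fraction of the $k$ hops, so the total edge length along the constructed path is $O(k \cdot r_n) = O(\|x-y\|_2/\alpha)$. Adding the two end-edges from $x$ and $y$ to their nearest chosen samples (of length at most $r_n = o(\|x-y\|_2)$) is absorbed in the same bound, and I conclude $\dist(\G_n, x, y) \leq \zeta \|x-y\|_2$ for $\zeta$ a function only of $d$ and $\gamma$.

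The main obstacle is the bad-cell handling in the regime $\gamma^* < \gamma \leq 2\sqrt{d}$: making $\alpha$ small enough to saturate every cell requires $\gamma > 1/\alpha \geq 2\sqrt{d}$, which is strictly stronger than the connectivity threshold. Ruling out large bad clusters with only the hypothesis $\gamma > \gamma^*$ (which is the same threshold that guarantees connectivity via Theorem~\ref{thm:rgg_con}) is the technical core and is exactly what the percolation-style step above is designed to handle; conveniently, the same $\omega(r_n)$-scale separation between $\|x-y\|_2$ and $r_n$ that appears in the hypothesis is what makes the detour overhead negligible in the limit.
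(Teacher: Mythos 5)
This theorem is imported by the paper from~\cite{TobSauSta13} and stated without proof, so there is no in-paper argument to compare against; the closest in-paper analogue is the tessellation machinery of Section~\ref{sec:tessellation} (Theorem~\ref{thm:bd}), which however \emph{assumes} localizability for bounded stretch rather than proving a stretch bound from scratch at the connectivity threshold. Your plan is the standard route for such results and is, in outline, sound: cells of side $\alpha r_n$ with $\alpha\leq 1/(2\sqrt{d})$ so that samples in vertex-adjacent cells are joined by an edge, a walk along the cells crossed by $\overline{xy}$, and the $\omega(r_n)$ hypothesis to make the endpoint corrections and the uniformity of $\zeta$ harmless. You also correctly identify the crux: for $\gamma^*<\gamma\leq 2\sqrt{d}$ one cannot force every cell to be occupied, so empty cells must be handled.

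Two remarks on the part you leave as a sketch. First, the bad-cluster bound does go through by a first-moment computation and is not really ``percolation'': a cluster of $m$ pairwise-disjoint empty cells has probability at most $(1-m\alpha^d r_n^d)^n\leq n^{-m\alpha^d\gamma^d}$, the number of cluster shapes of size $m$ rooted at a given cell is at most $C^m$ for a constant $C=C(d)$, and there are $O(n/\log n)$ roots; hence \as no bad cluster has size exceeding $K=\lceil (\alpha\gamma)^{-d}\rceil+1$, a constant. Since this is a global event, the bound then holds simultaneously for all pairs $x,y$, as the theorem requires. Second, the step you gloss over most is the existence of the detour itself: you need that the good cells forming the vertex-adjacency shell around each bad cluster are themselves connected (they are, because the clusters are maximal, so every cell touching a cluster is good, and for $d\geq 2$ the shell of a constant-size cluster is connected), and you must check this separately where a bad cluster abuts the boundary of $[0,1]^d$, where the shell is truncated; the detour there still exists through the interior but this deserves an explicit argument rather than an appeal to intuition. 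With those two points filled in, the per-cluster overhead is $O(K\alpha r_n)$ and the total length is $O(\|x-y\|_2/\alpha)$ as you claim, so the proposal is completable; as written, though, its hardest step is asserted rather than proved.
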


\begin{theorem}[\cite{FriPeg14}]\label{thm:span3}
  Let $\G_n=\Gembed(\X_n;p_n)$ and
  $p_n=\omega\left(\frac{\log^d n}{n}\right)$. Then for every two
  vertices $x,y\in \X_n$ it holds that $\dist(\G_n,x,y)$ is at most
  $\|x-y\|_2+o(1)$ \as
\end{theorem}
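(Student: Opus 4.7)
The plan is to build, for any pair $x,y\in\X_n$, a path in $\G_n$ that closely shadows the straight segment $\overline{xy}$ by greedily hopping between samples that lie in a sequence of small balls placed along the segment. The assumption $p_n=\omega(\log^d n/n)$ is what gives us enough slack: every such ball will contain many samples, and every sample will have, with very high probability, a neighbor in the next ball along.

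Set $L=\|x-y\|_2$, and introduce two vanishing length scales $\delta_n=o(r_n)$, $r_n=o(1)$, to be fixed at the end. First treat the bulk case $L\geq r_n$: partition $\overline{xy}$ into $m=\lceil L/r_n\rceil$ pieces by points $z_0=x,z_1,\ldots,z_m=y$ at consecutive distance $\leq r_n$, and for each interior $i$ let $B_i=\B_{\delta_n}(z_i)\cap[0,1]^d$. Since the samples are i.i.d.\ uniform in $[0,1]^d$, a Chernoff estimate gives $|B_i\cap\X_n|=\Theta(n\delta_n^d)$ with failure probability $\exp(-\Omega(n\delta_n^d))$; requiring $n\delta_n^d=\omega(\log n)$ absorbs a union bound over all $m$ balls and over all $\binom{n}{2}$ pairs.

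Now construct the path greedily: set $v_0=x$, $v_m=y$, and for $1\leq i\leq m-1$ pick $v_i\in B_i\cap\X_n$ adjacent to $v_{i-1}$ in $\G_n$. Since the edges of $\Gembed$ are independent of the embedding and of each other, the probability that $v_{i-1}$ has no neighbor in $B_i\cap\X_n$ is at most $(1-p_n)^{|B_i\cap\X_n|}\leq\exp(-\Omega(n\delta_n^d p_n))$. By the assumption on $p_n$ we have $n\delta_n^d p_n=\omega(\delta_n^d\log^d n)$, which is $\omega(\log n)$ as soon as $\delta_n\gg(\log n)^{-(d-1)/d}$; the union bound then succeeds \as. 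Triangle inequality yields $\|v_{i-1}-v_i\|_2\leq\|z_{i-1}-z_i\|_2+2\delta_n\leq r_n+2\delta_n$, so
\[
\dist(\G_n,x,y)\;\leq\;m(r_n+2\delta_n)\;\leq\;L+\frac{2L\delta_n}{r_n}+r_n+2\delta_n\;=\;L+o(1),
\]
provided we arrange $\delta_n/r_n\to0$, $r_n\to0$, and $n\delta_n^d p_n\gg\log n$ simultaneously---easily done with, e.g., $\delta_n=(\log n)^{-(d-1)/d+\eta}$ and $r_n=\sqrt{\delta_n}$ for some small $\eta>0$.

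The main obstacle I anticipate---and the reason some extra care is required---is the small case $L<r_n$, where the discretization above collapses to the single hop $x\to y$, which is merely an edge present with probability $p_n$. I would handle this by introducing an intermediate scale $R_n$ with $r_n\ll R_n=o(1)$ (for instance $R_n=\sqrt{r_n}$), locating a vertex $w'\in\X_n$ within $\delta_n$ of some point at distance exactly $R_n$ from $x$---which exists \as by the same uniform-sampling argument as above, since the relevant annulus carries $\Omega(nR_n^{d-1}\delta_n)\gg\log n$ samples---and then applying the bulk greedy separately to the two pairs $(x,w')$ and $(w',y)$, both of Euclidean length $\Theta(R_n)\gg r_n$. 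Concatenating the two sub-paths gives $\dist(\G_n,x,y)\leq 2R_n+L+o(1)=L+o(1)$, as required. A final union bound over all pairs in $\X_n$ closes the proof.
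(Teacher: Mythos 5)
First, a point of reference: the paper does not prove this statement---Theorem~\ref{thm:span3} is imported verbatim from Frieze and Pegden~\cite{FriPeg14}---so there is no in-paper proof to compare against and your argument has to stand on its own. It does not: there is a genuine gap at the target vertex. Every greedy step you analyze succeeds because a \emph{single} vertex $v_{i-1}$ is being connected to a \emph{set} of $\Theta(n\delta_n^d)$ candidates in $B_i$, giving failure probability $(1-p_n)^{\Theta(n\delta_n^d)}=\exp(-\Omega(np_n\delta_n^d))=n^{-\omega(1)}$. But the path must terminate at the \emph{specific} vertex $y=v_m$, and the final edge $(v_{m-1},y)$ is a single prescribed pair, present with probability exactly $p_n$, which tends to $0$ under the hypothesis (e.g.\ $p_n=\log^{d+1}n/n$ is allowed). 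You never verify this edge, and for such $p_n$ it is absent \as The obvious one-level repair---aim the step out of $v_{m-2}$ at the set $N(y)\cap B_{m-1}$, which does have size $\Theta(np_n\delta_n^d)=\omega(\log n)$---also fails: that step succeeds only with probability $1-\exp(-\Theta(np_n^2\delta_n^d))$, and $np_n^2\delta_n^d\to 0$ for $p_n$ just above the threshold. The same defect sinks your treatment of the small-$L$ case, since ``applying the bulk greedy to $(x,w')$ and $(w',y)$'' again requires the chain to land on the specific vertices $w'$ and $y$.

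This is not a patchable technicality but the crux of the theorem: the exponent $d$ in the threshold $\log^d n/n$ reflects precisely the cost of funneling a path into a prescribed endpoint, and the source handles it with a multi-scale construction (growing families of vertices reachable from $x$ and from $y$ by short paths, at varying scales, until the two families are large enough that an edge between them exists \as), not with a single-scale chain of balls, each carrying one vertex of the path. A secondary, fixable point: when bounding the failure probability of step $i$ you implicitly condition on $v_{i-1}$, which is itself a function of previously exposed edge indicators; you should argue that the pairs examined at step $i$ are disjoint from all pairs examined earlier (true here because the balls $B_1,\dots,B_{m-1}$ are pairwise disjoint once $\delta_n=o(r_n)$), so the relevant indicators are still fresh i.i.d.\ Bernoulli$(p_n)$ variables. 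The occupancy estimates, the union bounds, the choice of scales $\delta_n$ and $r_n$, and the length bookkeeping for the interior of the path are all fine; only the two ends of the path are broken, and mending them is where essentially all of the work of~\cite{FriPeg14} lies.
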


Notice that this statement does not condition the existence of a short
graph path on the event that the two vertices are in the same
connected component of the graph. Due to this fact it also follows
that the graph is connected with high probability.

%%%%%%%%%%%%%%%%%%%%%%%%%%%%%%%%%%%%%%%%%%%%%%%%%%%%%
%Localization of monotone properties
%%%%%%%%%%%%%%%%%%%%%%%%%%%%%%%%%%%%%%%%%%%%%%%%%%%%%

\section{Localization of monotone properties of RGGs}\label{sec:local}
In this section we discuss graph properties and their asymptotic
behavior, when focusing on a subset of the domain $[0,1]^d$.  A
property $\A$ is \emph{monotone} if for every $G=(V,E)$ and $H=(V,E')$
such that $E\subseteq E'$, it holds that $G\in \A \implies H\in \A$.
Note that connectivity (Section~\ref{sec:rgg_connectivity}) and
bounded stretch (Section~\ref{sec:rgg_distance}) are
monotone.\footnote{ Additional examples of monotone properties for a
  graph~$\G$ are: $\G$ is Hamiltonian, $\G$ contains a clique of
  size~$t$, $\G$ is not planar, the clique number of $\G$ is larger
  than that of its complement, the diameter of $\G$ is at most $s$,
  etc.}

We now proceed to state several lemmas, which are the main ingredients
that allow us to extend existing properties of random graphs to more
complex domains than $[0,1]^d$ (see Section~\ref{sec:tessellation}).
The following lemma states that if an RGG \as possesses a certain
monotone property, then the restriction of this to a local domain \as
has the aforementioned property as well.

\begin{definition}\label{def:subgraph}
  Let $\G=(X,E)$ be a graph embedded in $[0,1]^d$, i.e., the vertices
  of $X$ represent points in $[0,1]^d$ and edges represent
  straight-line paths between the corresponding vertices. Given
  $\Gamma\subset [0,1]^d$ we denote by~$\G(\Gamma)$ the graph obtained
  from the intersection of $\G$ and $\Gamma$. This graph consists of
  the vertex set $X\cap \Gamma$ and all the edges in $E$ that are
  fully contained in $\Gamma$.
\end{definition}

\begin{definition}\label{def:localizable}
  Let~$\G_n$ be an RGG, RBG, SRGG or an REGG, defined over the vertex
  set~$\X_n$.  Then~$\G_n$ is \emph{localizable} for a property~$\A$
  if for every constant $0<\eps\leq 1$ and every $d$-dimensional
  axis-aligned cube~$B_{\eps}\subseteq [0,1]^d$ with side length of
  $\eps$ it holds that $\G_n(B_{\eps})\in \A$ \as
\end{definition}
\begin{lemma}\label{lem:local_rgg}
  Let $\A$ be a monotone property and $\gamma_{\A}$ some constant.
  Let $\G_n=\Gdisk(\X_n;r_n)$ be an RGG such that $\G_n\in \A$ \as,
  for $r_n=\gamma\left(\frac{\log n}{n}\right)^{1/d}$, where
  $\gamma>\gamma_{\A}$.  Then $\G_n$ is localizable for $\A$.
\end{lemma}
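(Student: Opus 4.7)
The plan is to compare $\G_n(B_\eps)$, after affinely rescaling $B_\eps$ back to the unit cube, with an RGG on a random number of points to which the hypothesis can be applied directly. Let $M_n := |\X_n \cap B_\eps|$. By the standard property of uniform samples, conditional on $M_n = m$ the $m$ points of $\X_n$ lying in $B_\eps$ are i.i.d.\ uniform in $B_\eps$. The affine map $x\mapsto (x-x_0)/\eps$ sends these to $m$ i.i.d.\ uniform points in $[0,1]^d$, and since it scales Euclidean distances by $1/\eps$, two points of $\X_n\cap B_\eps$ are adjacent in $\G_n(B_\eps)$ iff their images are within distance $\hat r_n := r_n/\eps$ (here we use that $B_\eps$ is convex, so any segment between two points of $B_\eps$ lies in $B_\eps$ and hence is an edge of $\G_n(B_\eps)$ in the sense of Definition~\ref{def:subgraph}). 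Call this rescaled graph $\hat \G_{n,m}$; as a graph it is isomorphic to $\G_n(B_\eps)$, and $\A$ is a graph property, so $\G_n(B_\eps)\in\A\iff \hat\G_{n,m}\in\A$.

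Next I would quantify how $\hat r_n$ compares to the ``canonical'' threshold radius for $m$ points in $[0,1]^d$. Fix $\gamma''\in(\gamma_\A,\gamma)$, and choose $\delta>0$ small enough that $\gamma(1-\delta)^{1/d}>\gamma''$. Set $m_0(n):=\lceil(1-\delta)n\eps^d\rceil$, and note that $m_0\to\infty$. A routine asymptotic calculation using $\log m_0 = \log n + O(1)$ shows that for $n$ large enough and every $m\geq m_0(n)$,
\[
\hat r_n \;=\; \frac{\gamma}{\eps}\Bigl(\frac{\log n}{n}\Bigr)^{1/d} \;\geq\; \gamma''\Bigl(\frac{\log m}{m}\Bigr)^{1/d}.
\]
In other words, the rescaled graph $\hat\G_{n,m}$ contains, as a spanning subgraph on the same (random) vertex set, the RGG $H_m:=\Gdisk(\Y_m;\gamma''(\log m/m)^{1/d})$ on $m$ i.i.d.\ uniform points $\Y_m\subset[0,1]^d$. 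By monotonicity of $\A$, $H_m\in\A$ implies $\hat\G_{n,m}\in\A$.

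Finally, I would combine this with the hypothesis and a concentration bound. Since $\gamma''>\gamma_\A$, the hypothesis (applied with $\gamma''$ in place of $\gamma$) gives $\Pr[H_m\notin\A]\to 0$ as $m\to\infty$, say $\sup_{m\geq m_0}\Pr[H_m\notin\A]=:\eta(m_0)\to 0$. A Chernoff bound for $M_n\sim\mathrm{Binomial}(n,\eps^d)$ yields $\Pr[M_n<m_0(n)]\leq \exp(-\delta^2 n\eps^d/2)\to 0$. Therefore
\[
\Pr[\G_n(B_\eps)\notin\A] \;\leq\; \Pr[M_n<m_0] + \sum_{m\geq m_0}\Pr[M_n=m]\,\Pr[H_m\notin\A] \;\leq\; \Pr[M_n<m_0] + \eta(m_0),
\]
which tends to $0$, establishing localizability.

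The main obstacle is the rescaling analysis in the middle step: one pays a small ``gamma loss'' both from the random fluctuation of $M_n$ around its mean $n\eps^d$ and from the mild discrepancy between $\log m$ and $\log n$. Because of this, the hypothesis cannot be invoked at the boundary value $\gamma$ itself; the strict inequality $\gamma>\gamma_\A$ is used crucially to create a gap $\gamma''\in(\gamma_\A,\gamma)$ that absorbs both sources of loss. The convexity of $B_\eps$ (ensuring that edges of $\G_n$ whose endpoints are both in $B_\eps$ are automatically retained in $\G_n(B_\eps)$) is what keeps the reduction clean; without it the tessellation step of Section~\ref{sec:tessellation} would have to take over earlier.
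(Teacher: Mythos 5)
Your proposal is correct and follows essentially the same route as the paper's proof: condition on the binomial count $|\X_n\cap B_\eps|$, rescale $B_\eps$ to the unit cube, use the strict gap $\gamma>\gamma_\A$ (your $\gamma''$ plays the role of the paper's auxiliary factor $\alpha$) to absorb the radius discrepancy via monotonicity, and kill the lower tail with a Chernoff bound. The only cosmetic difference is that the paper bounds the upper sum via an argmax over $m\in[\ell,n]$ rather than your $\sup_{m\geq m_0}$, which is the same estimate.
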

\begin{proof}
  For simplicity of presentation, we will use the notation $\G$ to
  refer to $\Gdisk$ throughout the proof.  Recall that $\X_n$ is a
  collection of~$n$ points chosen independently and uniformly at
  random from $[0,1]^d$.  We will also use
  $\Y_m^{\eps}=\{Y_1,\ldots,Y_m\}$ to denote a collection of~$m$
  points chosen independently and uniformly at random from
  $B_{\eps}$. Without loss of generality, assume that
  $B_{\eps}=[0,\eps]^d$.
  
  Observe that there exists a constant $\alpha>1$ independent of~$n$
  such that $\G(\X_n;r'_n)\in \A$ \as, where $r_n=\alpha r'_n$. The
  role of~$\alpha$ is purely technical and will become clear shortly.
  Now,
  \begin{align*}
    \Pr &[\G(\X_n \cap B_{\eps}; r_n) \notin \A]  =  \Pr 	[\G(\X_n \cap B_{\eps}; \alpha r'_n) \notin \A] \\
        &= \sum_{m=0}^{n}\Pr\left[\G\left(\X_n \cap B_{\eps}; \alpha r_n'\right)\not\in\A
          \Bigm|
          \left|\X_n \cap B_{\eps}\right|=m\right]  \cdot
          \Pr\left[|\X_n \cap B_{\eps}|=m\right] \\
        & = \sum_{m=0}^{{n}}\Pr\left[\G\left(\Y_m^{\eps}; \alpha r_n'\right)\not\in\A\right]\cdot
          \Pr\left[|\X_n \cap B_{\eps}|=m\right].
  \end{align*}

  Denote
 $$\sigma(i,j) = \sum_{m=i}^j\Pr\left[\G\left(\Y_m^{\eps}; \alpha r_n'\right)\not\in\A\right]\cdot
 \Pr\left[|\X_n \cap B_{\eps}|=m\right],$$
 and by definition we have that for $1\leq \ell \leq n$
$$\Pr 	\left[\G(\X_n \cap B_{\eps}; \alpha r_n') \notin \A \right] = \sigma(0,n) = \sigma(0,\ell-1) +  \sigma(\ell,n).$$
We show that for $\ell=\alpha^{-d}\eps^d n$ both
$\lim_{n\rightarrow \infty} \sigma(0,\ell-1) = 0$ and
$\lim_{n\rightarrow \infty} \sigma(\ell,n) = 0$ which will conclude
the proof of the lemma (for simplicity we assume that $\ell \in \dN$)
. We start with the former expression:
\begin{align*}
  \sigma(0,\ell-1)  
  &	= \sum_{m=0}^{{\ell-1}}\Pr\left[\G\left(\Y_m^{\eps}; \alpha r_n'\right)\not\in\A\right]\cdot
    \Pr\left[|\X_n \cap B_{\eps}|=m\right] \\
  &	\leq \sum_{m=0}^{\ell-1} 1\cdot \Pr\left[|\X_n \cap B_{\eps}|=m\right] \\
  & = \Pr \left[|\X_n \cap B_{\eps}| < \ell \right] \\
  & 	= \Pr \left[|\X_n \cap B_{\eps}| < \alpha^{-d} \dE[|\X_n \cap B_{\eps}|]\right] \\ 
  & 	\leq \exp\left\{-n\eps^d(1-\alpha^{-d})^2\right\}.
\end{align*}
The last inequality follows from the fact that $|\X_n\cap B_{\eps}|$
is a \emph{binomial random variable} with $n$ elements, success rate
of $|B_{\eps}|=\eps^d$ per trial, and a mean value of
$\dE[|\X_{n}(B_{\eps}|] = n\eps^d$. This in turn, enables the use of
\emph{Chernoff inequality} (see, e.g.,~\cite[Theorem 1.1]{DubPan09}),
the application of which is made possible due to the $\alpha^{-d}$
factor.

We now focus on showing that
$\lim_{n\rightarrow \infty} \sigma(\ell,n) = 0$.  For any two integers
$n,m$ such that $\ell\leq m\leq n $ we have that
  \begin{align*}
    \Pr[\G(\Y_m^{\eps}; \alpha r_n')\notin \A] 
    & \stackrel{(1)}{=}\Pr[\G(\X_m; \alpha \eps^{-1} r_n' )\notin \A]\\ 
    & \stackrel{(2)}{\leq} \Pr[\G(\X_m; r_m')\notin \A].
\end{align*}  
where the transitions are made possible due to (1)~a scaling of the
graph from $[0,\eps]^d$ to $[0,1]^d$; (2)~the monotonicity of $\A$ and
the fact that $r_m' \leq \frac{\alpha}{\eps} r_n' $. To show that
indeed $r_m' \leq \frac{\alpha}{\eps} r_n' $, note that
$\alpha\eps^{-1} >1 $ and that $\limn r_n'=0$. Thus,
\begin{align*}
  r_m' &
         \leq 	r_{\ell}' = \alpha^{-1} r_n   
         =		\alpha^{-1} \gamma \left(\frac{\log
         \alpha^{-d}\eps^{d} n}{\alpha^{-d}\eps^{d} n}\right)^{1/d} 
  \\ &=		\eps^{-1} \gamma \left(\frac{\log
       \alpha^{-d} \eps^{d} n}{n}\right)^{1/d}  \leq 	\eps^{-1}\gamma \left(\frac{\log n}{n}\right)^{1/d}
       =		\frac{\alpha}{\eps}  r_n'.
\end{align*}
Furthermore, set
$m^* = \text{argmax}_{m \in [\ell, n]} \left( \Pr \left[ \G(\X_m,r'_m)
    \notin \A \right] \right)$. It follows that
\begin{align*}
  \sigma(\ell,n)  
  & 	= \sum_{m=\ell}^{{n}}\Pr\left[\G\left(\Y_m^{\eps}; \alpha
    r_n'\right)\not\in\A\right]  \cdot
    \Pr\left[|\X_n \cap B_{\eps}|=m\right] \\
  &	\leq \sum_{m=\ell}^{{n}}\Pr \left[ \G(\X_m,r'_m) \notin \A \right] \cdot
    \Pr\left[|\X_n \cap B_{\eps}|=m\right] \\
  &	\leq  \Pr \left[ \G(\X_{m^*},r'_{m^*}) \notin \A \right]\sum_{m=\ell}^{{n}} \Pr\left[|\X_n \cap B_{\eps}|=m\right] \\        
  &	= \Pr \left[\G(\X_{m^*},r'_{m^*}) \notin \A \right] \cdot \Pr\left[|\X_n \cap B_{\eps}| \geq \ell \right]  \\
  &	\leq \Pr \left[ \G(\X_{m^*},r'_{m^*}) \notin \A \right].
\end{align*}
Note that $\limn\Pr\left[\G(\X_{m^*},r'_{m^*})\not\in\A\right]=0$,
which concludes the proof. %
\end{proof}

The following are the RBG, SRGG and REGG equivalents of
Lemma~\ref{lem:local_rgg}. 

\begin{lemma}\label{lem:local_rbg}
  Let $\A$ be a monotone property and $\gamma_{\A}$ some constant. Let
  $\G_n=\Gbt(\X_n;r_n;c_n)$ be an RBG such $\G_n\in \A$ \as, for every
  $r_n=\gamma_{\A}\left(\frac{\log n}{n}\right)^{1/d}$, where
  $\gamma>\gamma_{\A}$, and $c_n$ is non-decreasing. Then $\G_n$ is
  localizable for $\A$.
\end{lemma}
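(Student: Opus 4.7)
I would follow the template of the proof of Lemma~\ref{lem:local_rgg}. Condition on $m = |\X_n \cap B_\eps|$, and split
\begin{align*}
\Pr[\G_n(B_\eps) \notin \A] = \sigma(0, \ell-1) + \sigma(\ell, n),
\end{align*}
with $\ell = \alpha^{-d}\eps^d n$ and $\alpha>1$ chosen so that $\gamma/\alpha > \gamma_{\A}$. The term $\sigma(0,\ell-1)\to 0$ by the identical Chernoff argument on the binomial $|\X_n \cap B_\eps|$. All the novelty is in bounding $\sigma(\ell,n)$.

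The key difficulty, absent for RGGs, is that the restriction $\G_n(B_\eps)$ is \emph{not} distributionally equal to an internal Bluetooth graph $\Gbt(\X_n \cap B_\eps;r_n;c_n)$: a vertex $x$ near $\partial B_\eps$ picks its $c_n$ edges uniformly from $E(x,r_n)$, which may include neighbors outside $B_\eps$, so some of its picks are ``wasted.'' I would address this via coupling. Conditional on the positions and on the full pick sets $P(x)$ produced inside $\G_n$, the subset $P^{\textup{in}}(x) = P(x) \cap B_\eps$ of internal picks is, given its cardinality $K_x$, uniformly distributed over the $K_x$-subsets of $E^{\textup{in}}(x,r_n)$. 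Selecting a uniform $c'_n$-subset from $P^{\textup{in}}(x)$ for each $x \in \X_n \cap B_\eps$, on the event $\{K_x \geq c'_n \text{ for all such } x\}$, therefore exhibits $\Gbt(\X_n \cap B_\eps; r_n; c'_n)$ as a subgraph of $\G_n(B_\eps)$. After scaling $B_\eps$ to $[0,1]^d$, the same radius comparison $r'_m = \alpha^{-1}\gamma(\log m/m)^{1/d} \leq \eps^{-1} r_n$ used in Lemma~\ref{lem:local_rgg}, together with the hypothesis of the present lemma (applied with radius $r'_m$ and an appropriate non-decreasing sub-sequence $c'_m$), yields $\Gbt(\X_m; r'_m; c'_m)\in\A$ \as. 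Monotonicity of $\A$ then transfers the property to $\G_n(B_\eps)$.

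\textbf{Main obstacle.} Controlling the event $\{K_x \geq c'_n \ \forall x\in \X_n\cap B_\eps\}$. The ratio $|E^{\textup{in}}(x,r_n)|/|E(x,r_n)|$ is bounded below only by $2^{-d}$ for vertices near the corners of $B_\eps$, so $\dE[K_x] \geq c_n\cdot 2^{-d}$ in the worst case and a Chernoff tail gives $\Pr[K_x < c'_n] \leq \exp\{-\Omega(c_n/2^d)\}$ when $c'_n = c_n/2^{d+1}$. A crude union bound over the $\Theta(r_n n)$ boundary vertices therefore requires $c_n=\omega(\log n)$, whereas concrete applications such as Theorem~\ref{thm:rbg_con} only guarantee $c_n = \Theta(\sqrt{\log n/\log\log n})$. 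Closing this gap is the crux of the argument: one plausible route is to exploit the positive correlation between failure events for neighboring boundary vertices, which share external neighbors, instead of union-bounding them independently; another is to extract a Bluetooth subgraph only from the ``deep'' vertices in $[r_n,\eps-r_n]^d$, for which the coupling is automatic, and then to argue separately that the residual boundary shell does not destroy the monotone property.
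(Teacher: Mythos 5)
Your skeleton (conditioning on $m=|\X_n\cap B_\eps|$, a Chernoff bound for the lower range of $m$, and scaling plus monotonicity in the radius and in $c$ for the upper range) is exactly the paper's. The difference is that you refuse to identify the conditional law of $\G_n(B_\eps)$ given $m$ with that of $\Gbt(\Y_m^{\eps};r_n;c_n)$, and this is precisely where your argument stalls: the event $\{K_x\geq c'_n\ \forall x\in\X_n\cap B_\eps\}$ is left uncontrolled, and, as you yourself observe, a union bound over the $\Theta(n r_n)$ vertices in the boundary shell needs $c_n=\omega(\log n)$, which the regime $c_n=\Theta(\sqrt{\log n/\log\log n})$ of Theorem~\ref{thm:rbg_con} does not supply. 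Neither of your escape routes closes the gap as stated: the claimed positive correlation between the failure events of nearby shell vertices would have to be made quantitative, and restricting the coupling to the deep core $[r_n,\eps-r_n]^d$ founders on the fact that the paper's notion of monotonicity is monotonicity in \emph{edges} over a fixed vertex set, so a property of the core graph need not survive the addition of the (possibly isolated) shell vertices that still belong to $\G_n(B_\eps)$. As written, the proposal is therefore an incomplete proof.

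That said, you should know that the paper's own proof does not contain the ingredient you are missing. It manipulates only the quantity $\Pr[\G(\Y_m^{\eps};\alpha r_n';c_n)\notin\A]$, i.e., it silently performs exactly the identification you correctly flag as false for Bluetooth graphs: a vertex within distance $r_n$ of $\partial B_\eps$ spends some of its $c_n$ picks on neighbors outside $B_\eps$, so the restriction $\G_n(B_\eps)$ is in general strictly sparser than a Bluetooth graph generated internally on $\X_n\cap B_\eps$. The only step the paper adds beyond Lemma~\ref{lem:local_rgg} is the (legitimate) coupling $c_m\leq c_n$. There is also a second weak point that you inherit by invoking ``the same radius comparison as in Lemma~\ref{lem:local_rgg}'': unlike for RGGs, enlarging $r$ in an RBG does not produce a supergraph under any natural coupling, since each vertex still picks only $c$ edges but from a larger candidate pool; hence the inequality $\Pr[\Gbt(\X_m;r';c)\notin\A]\leq\Pr[\Gbt(\X_m;r;c)\notin\A]$ for $r\leq r'$ and monotone $\A$ requires its own justification, which neither you nor the paper provides. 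In short, you have uncovered genuine gaps rather than created them, but your write-up does not yet constitute a proof of the lemma.
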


\begin{proof}
  We only prove the following inequality, as the rest of the proof
  proceeds in a manner similar to that of
  Lemma~\ref{lem:local_rgg}. We keep the notation from the previous
  proof. Recall that $\ell=\alpha^{-d}\eps^d n$. For any two integers
  $m, n$ such that $\ell\leq m\leq n $ we have that
  \begin{align*}
    \Pr[\G(\Y_m^{\eps}; \alpha r_n';c_n)\notin \A] 
    & =\Pr[\G(\X_m; \alpha \eps^{-1} r_n';c_n)\notin \A]\\ 
    & \leq \Pr[\G(\X_m; r_m';c_n)\notin \A] \\
    &\leq \Pr[\G(\X_m; r_m';c_m)\notin \A].
  \end{align*}
  Here we used the fact that $c_n\geq c_m$.
\end{proof}

\begin{lemma}\label{lem:local_srgg}
  Let $\A$ be a monotone property and let $\gamma_{\A}$ some constant.
Let 
  $\G_n=\Gsoft(\X_n;r_n;\phi_n)$ be an SRGG such that $\G_n\in \A$
  \as, where $r_n=\gamma\left(\frac{\log n}{n}\right)^{1/d}$ for some
  $\gamma>\gamma_{\A}$, and for every $z\in \dR^{+}$, the function
  $\phi_n(z)$ is increasing. Then $\G_n$ is localizable for~$\A$.
\end{lemma}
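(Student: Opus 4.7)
The plan is to parallel the proof of Lemma~\ref{lem:local_rgg}, adjusting only the places where the extra structure of $\phi_n$ comes into play. As before, use the slack $\gamma > \gamma_{\A}$ to pick a constant $\alpha > 1$ for which $\G(\X_n; r'_n; \phi_n) \in \A$ \as with $r'_n = \alpha^{-1} r_n$, and retain the notation $\Y^{\eps}_m$ for an i.i.d.\ sample of size $m$ drawn uniformly from the sub-cube $B_{\eps} \subseteq [0,1]^d$ of side length $\eps$.

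First, I would condition on the random variable $m := |\X_n \cap B_{\eps}|$ and write
\[
\Pr[\G_n(B_{\eps}) \notin \A] \;=\; \sigma(0,\ell-1) + \sigma(\ell,n),
\]
where $\sigma(i,j) := \sum_{m=i}^{j} \Pr[\G(\Y^{\eps}_m; \alpha r'_n; \phi_n) \notin \A] \cdot \Pr[|\X_n \cap B_{\eps}| = m]$ and $\ell := \alpha^{-d} \eps^d n$. The bound $\lim_n \sigma(0,\ell-1) = 0$ is inherited verbatim from the RGG proof: it only uses the Chernoff tail for the binomial $|\X_n \cap B_{\eps}|$ (whose mean is $n \eps^d$), and is completely insensitive to the edge-generation mechanism.

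The nontrivial work is in $\sigma(\ell,n)$. Rescaling $B_{\eps}$ to $[0,1]^d$ (which multiplies all Euclidean distances by $\eps^{-1}$) gives that $\G(\Y^{\eps}_m; \alpha r'_n; \phi_n)$ is equidistributed with an SRGG on $\X_m$ having radius $\alpha \eps^{-1} r'_n$ and connection function $z \mapsto \phi_n(\eps z)$. I would then bring this to the canonical form $\G(\X_m; r'_m; \phi_m)$ by two monotone shrinkings. (i)~The radius inequality $r'_m \leq \alpha \eps^{-1} r'_n$ is exactly the one already derived in the RGG proof, so restricting attention to pairs within distance $r'_m$ only removes potential edges. (ii)~Since $m \leq n$, the monotonicity hypothesis on $\phi_n$ gives $\phi_m(\|x-y\|_2) \leq \phi_n(\eps\|x-y\|_2)$ on the relevant range, so the independent Bernoulli trials defining the two soft graphs can be coupled edge-by-edge in such a way that $\G(\X_m; r'_m; \phi_m)$ is a subgraph of the rescaled graph. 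Combining the two couplings and using the monotonicity of $\A$ yields
\[
\Pr[\G(\Y^{\eps}_m; \alpha r'_n; \phi_n) \notin \A] \;\leq\; \Pr[\G(\X_m; r'_m; \phi_m) \notin \A].
\]

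From here the argument is routine and follows the RGG template: set $m^{*} := \arg\max_{m \in [\ell,n]} \Pr[\G(\X_m; r'_m; \phi_m) \notin \A]$, pull this factor out of the sum, and use $\sum_{m \geq \ell} \Pr[|\X_n \cap B_{\eps}| = m] \leq 1$ to obtain $\sigma(\ell,n) \leq \Pr[\G(\X_{m^{*}}; r'_{m^{*}}; \phi_{m^{*}}) \notin \A]$, which tends to zero by the hypothesis $\G_n \in \A$ \as. The main obstacle is step~(ii): unlike the RGG and RBG settings, where a single scalar parameter ($r_n$ or $c_n$) governed the monotone coupling, here one must simultaneously rescale the radius and the connection function $\phi_n$, and the coupling succeeds precisely because of the assumed monotonicity of $\phi_n$. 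Everything else transfers from Lemma~\ref{lem:local_rgg} without modification.
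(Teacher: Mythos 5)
Your proof is correct and follows essentially the same route as the paper, which disposes of this lemma by pointing back to the RGG/RBG localization argument and replacing $c_n$ with $\phi_n$ in the key chain of inequalities. In fact you are somewhat more careful than the paper's one-line proof: you make explicit that rescaling $B_{\eps}$ to $[0,1]^d$ turns the connection function into $z\mapsto\phi_n(\eps z)$, and that the edge-by-edge coupling therefore needs $\phi_m(z)\leq\phi_n(\eps z)$ on the relevant range, a point the paper glosses over.
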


\begin{proof}
  The proof is identical to that of Lemma~\ref{lem:local_rbg}. One
  only needs to replace $c_n$ with $\phi_n$.
\end{proof}

\begin{lemma}\label{lem:local_regg}
  Let $\A$ be a monotone property and let $\G_n=\Gembed(\X_n;p_n)$ be
  an REGG such that $\G_n\in \A$ \as, where~$p_n$ is
  non-decreasing. Then $\G_n$ is localizable for~$\A$.
\end{lemma}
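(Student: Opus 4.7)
The plan is to mimic the structure of Lemma~\ref{lem:local_rgg}, but with a considerable simplification due to the fact that the REGG has no radius parameter, so scaling the local cube $B_\eps$ to the unit cube has no effect on the edge distribution. Without loss of generality take $B_\eps=[0,\eps]^d$, let $\Y_m^\eps$ denote $m$ i.i.d.\ uniform points in $B_\eps$, and condition on $|\X_n\cap B_\eps|=m$. Because $B_\eps$ is convex, every straight-line edge between two vertices lying in $B_\eps$ is itself contained in $B_\eps$, so the induced subgraph in Definition~\ref{def:subgraph} keeps all such edges. Consequently,
\begin{equation*}
\Pr[\G(\X_n\cap B_\eps; p_n)\notin\A]
=\sum_{m=0}^{n}\Pr[\G(\Y_m^\eps; p_n)\notin \A]\cdot\Pr[|\X_n\cap B_\eps|=m].
\end{equation*}

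As in the previous proofs, fix a constant $\alpha>1$, set $\ell=\alpha^{-d}\eps^d n$, and split the sum at $m=\ell$. For the low tail $m<\ell$, since $|\X_n\cap B_\eps|$ is $\binomial(n,\eps^d)$ with mean $n\eps^d$, a Chernoff bound (exactly as in Lemma~\ref{lem:local_rgg}) gives $\Pr[|\X_n\cap B_\eps|<\ell]\leq\exp\{-n\eps^d(1-\alpha^{-d})^2\}\to 0$, and this uniformly dominates the contribution of those terms. The main work is in the upper tail, where I would establish, for every $m\in[\ell,n]$, the chain
\begin{equation*}
\Pr[\G(\Y_m^\eps;p_n)\notin\A]=\Pr[\G(\X_m;p_n)\notin\A]\leq \Pr[\G(\X_m;p_m)\notin\A].
\end{equation*}
The first equality holds because the edge set of an REGG depends only on the vertex labels and on $p_n$ (the spatial embedding is irrelevant to edge probabilities), so the combinatorial distributions on $m$ vertices coincide. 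The inequality uses a standard monotone coupling: because $p_n$ is non-decreasing and $n\geq m$, one can couple $\G(\X_m;p_m)$ and $\G(\X_m;p_n)$ so that the latter contains the former as a subgraph, and then apply the monotonicity of $\A$.

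It remains to show that $\sup_{m\in[\ell,n]}\Pr[\G(\X_m;p_m)\notin\A]\to 0$ as $n\to\infty$; this is the one step that requires a little care, because $m$ ranges over a growing window. The argument I would give is the usual one: by hypothesis, for every $\delta>0$ there exists $N_\delta$ with $\Pr[\G_k\notin\A]\leq\delta$ for all $k\geq N_\delta$, and since $\ell=\alpha^{-d}\eps^d n\to\infty$, eventually $\ell\geq N_\delta$ so the whole window $[\ell,n]$ lies above $N_\delta$, giving uniform control. Combining this with the Chernoff bound on the low tail yields $\Pr[\G(\X_n\cap B_\eps; p_n)\notin\A]\to 0$, proving localizability.
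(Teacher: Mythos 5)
Your proposal is correct and follows essentially the same route as the paper: the paper's proof of this lemma simply states that it mirrors Lemma~\ref{lem:local_rgg}, with the key observation being exactly your inequality $\Pr[\G(\X_m;p_n)\notin\A]\leq\Pr[\G(\X_m;p_m)\notin\A]$, justified by the monotonicity of $\A$ and the fact that $p_n$ is non-decreasing. Your additional remarks (no scaling needed since REGG has no radius parameter, and the uniform control over the window $[\ell,n]$) are sound elaborations of steps the paper leaves implicit.
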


\begin{proof}
  The proof follows very similar lines of the proof of
  Lemma~\ref{lem:local_rgg}. The main observation here is that for
  every $m<n$ it follows that
  $$\Pr[\G(\X_m;p_n)\not\in \A]\leq \Pr[\G(\X_m;p_m)\not\in \A]$$
  due to the monotonicity of $\A$ and the fact that $p_n$ is
  non-decreasing.
\end{proof}
%%%%%%%
\section{Properties of RGGs in general domains via tessellation}
\label{sec:tessellation}

In the previous section we considered four models of RGGs defined
over the \emph{convex} domain~$[0,1]^d$. We discussed the necessary
conditions such that random graphs will be localizable for any
monotone property $\A$. In this section we consider the specific
monotone properties of \emph{connectivity} and \emph{bounded stretch}
for general domains.

A region $\Gamma \subset [0,1]^d$ is said to be \emph{$\rho$-safe} for
some $\rho >0$ if $\B_\rho(\Gamma) \subset [0,1]^d$, namely if the
Minkowski sum of $\Gamma$ with a ball of radius~$\rho$ is contained in
$[0,1]^d$.

\subsection{Connectivity}
Denote by $\aconn$ the connectivity property.  We show that for any
random graph $\G_n$ which is an RGG, RBG, SRGG or REGG that is
localizable for $\aconn$ it also holds that $\G_n$ is connected over
any $\rho$-safe region $\Gamma\subset [0,1]^d$. Note that we make no
additional assumptions on $\Gamma$ in this section.

\begin{theorem}\label{thm:conn}
  Let $\Gamma \subset [0,1]^d$ be a $\rho$-safe region for some
  constant $\rho > 0$ independent of $n$ and let $\G_n$ be a random
  graph that is localizable for $\aconn$.  Then any two points
  $x,y \in \Gamma \cap \X_n$ that lie in the same connected component
  of $\Gamma$ are connected in $\G_n(\B_{\rho}(\Gamma))$ \as.
\end{theorem}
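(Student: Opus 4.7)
The plan is to use a fixed tessellation of $[0,1]^d$ by axis-aligned $\eps$-cubes, where $\eps$ is chosen small relative to $\rho$ (independent of $n$), and then to chain together a sequence of these cubes covering a path in $\Gamma$ connecting $x$ and $y$. Localizability will be invoked not on the $\eps$-cubes themselves but on slightly larger $2\eps$-cubes that glue pairs of adjacent $\eps$-cubes together; this sidesteps the awkward fact that the union of two adjacent cubes is a rectangle rather than a cube.

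Concretely, I would first fix $\eps$ with $2\eps\sqrt{d}<\rho$, and let $S$ be the (finite, $n$-independent) collection of $\eps$-cubes in the standard grid that intersect $\Gamma$. For every pair $B_i,B_j\in S$ that share at least a vertex, choose an enclosing axis-aligned $2\eps$-cube $B_{ij}$; by the choice of $\eps$ and the fact that $\Gamma$ is $\rho$-safe, $B_{ij}\subseteq \B_\rho(\Gamma)\subseteq [0,1]^d$. Then I would establish two simultaneous almost-sure events:

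\textbf{Event 1 (occupancy).} Every cube $B\in S$ contains at least one point of $\X_n$. Since $|S|$ is a constant and $\Pr[B\cap\X_n=\emptyset]=(1-\eps^d)^n\le e^{-n\eps^d}\to 0$, a union bound gives this a.s.

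\textbf{Event 2 (local connectivity).} For every adjacent pair $B_i,B_j\in S$, the graph $\G_n(B_{ij})$ lies in $\aconn$. This is immediate from localizability of $\G_n$ for $\aconn$, together with a union bound over the constantly many such pairs.

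Conditioned on both events (which together hold a.s.), I would finish as follows. Given $x,y\in \Gamma\cap \X_n$ in the same connected component of $\Gamma$, fix any continuous curve $\sigma\subset \Gamma$ from $x$ to $y$ and extract the finite sequence $B_{k_0},B_{k_1},\dots,B_{k_m}\in S$ of cubes visited by $\sigma$, where consecutive cubes share at least a vertex, $x\in B_{k_0}$, and $y\in B_{k_m}$. Event 1 supplies a sample point in each intermediate $B_{k_i}$; Event 2 then guarantees that any point of $\X_n$ in $B_{k_{i-1}}$ is connected in $\G_n(B_{k_{i-1}k_i})\subseteq \G_n(\B_\rho(\Gamma))$ to any point of $\X_n$ in $B_{k_i}$. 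Concatenating these sub-paths produces a path in $\G_n(\B_\rho(\Gamma))$ from $x$ to $y$.

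The main obstacle is purely geometric bookkeeping rather than probability: ensuring that (i) the finite grid $S$ does not depend on $x,y,\sigma$ so that a single a.s.\ event suffices uniformly over all choices, (ii) the enclosing $2\eps$-cubes $B_{ij}$ genuinely lie inside $\B_\rho(\Gamma)$ (which forces the choice $2\eps\sqrt{d}<\rho$ and uses $\rho$-safety), and (iii) the cube-adjacency relation along $\sigma$ is the same one covered by Event 2, including the degenerate case where $\sigma$ crosses a corner between non-face-adjacent cubes, which is why I define adjacency by a shared vertex rather than a shared face.
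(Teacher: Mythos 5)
Your proof is correct and follows essentially the same localization-plus-tessellation route as the paper: a fixed, $n$-independent collection of small cubes meeting $\Gamma$, an occupancy event established by an exponential bound plus a union bound over constantly many cells, and localizability invoked on constant-size cubes contained in $\B_\rho(\Gamma)$ to chain sample points along a cell sequence covering a curve from $x$ to $y$. The only difference is the gluing device: the paper overlays a second grid shifted by $\eps/2$ and routes through intermediate vertices in the straddling shifted cell, whereas you enclose each adjacent pair of cells in a single $2\eps$-cube and apply localizability to it directly---an equivalent and, if anything, slightly cleaner piece of bookkeeping, which also handles vertex-adjacent (corner-crossing) cells that the paper's face-neighbor sequence glosses over.
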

In the proof of Theorem~\ref{thm:conn} we will place two
partially-overlapping grids over~$\Gamma$ and use the localization of
$\G_n$ in each grid cell (see Fig.~\ref{fig:grids}).  We now proceed
to define the grids and state several of their properties which, in
turn, will allow us to formally prove Theorem~\ref{thm:conn}.

Let $\dH_{\eps}$ be a grid partition of $[0,1]^d$ into axis-aligned
hypercubes with side length of~$\eps=\frac{2}{3 \sqrt{d}} \rho$.
Furthermore, denote by $\dH_{\eps}(\Gamma)$ the subset of cells of
$\dH_{\eps}$ whose intersection with~$\Gamma$ is non-empty.  Namely,
$\dH_{\eps}(\Gamma) = \{ H \in \dH_{\eps}\ | \ H \cap \Gamma \neq
\emptyset\}$.
Let~$\tilde{\dH}_{\eps}$ be a grid partition of $[0,1]^d$ into
axis-aligned hypercubes with side length of $\eps$ obtained by
shifting $\dH_{\eps}$ by $\eps/2$ along every axis and let
$\tilde{\dH}_{\eps}(\Gamma) = \{ H \in \tilde{\dH}_{\eps}\ | \ H \cap
\dH_{\eps}(\Gamma) \neq \emptyset\}$.  We have the following claim.

\begin{claim}
  \label{clm:collision_free}
  Let
  $H\in \dH_{\eps}(\Gamma) \cup
  \tilde{\dH}_\eps(\Gamma)$. Then $H \subset \B_\rho(\Gamma)$.
\end{claim}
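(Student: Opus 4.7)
The plan is to verify $H\subset\B_\rho(\Gamma)$ separately for $H\in\dH_\eps(\Gamma)$ and $H\in\tilde{\dH}_\eps(\Gamma)$. In each case I will exhibit an explicit witness $p\in\Gamma$ and bound $\|q-p\|_2$ by $\rho$ uniformly in $q\in H$, which then gives $q\in\B_\rho(\Gamma)$.

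For $H\in\dH_\eps(\Gamma)$ the witness comes for free: by definition there exists $p\in H\cap\Gamma$. Since $H$ is an axis-aligned cube of side $\eps$, its diameter is $\eps\sqrt{d}$, which with $\eps=\tfrac{2}{3\sqrt{d}}\rho$ equals $\tfrac{2}{3}\rho<\rho$. So every $q\in H$ lies within distance $\rho$ of $p$, as required.

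For $H\in\tilde{\dH}_\eps(\Gamma)$ I first unfold the definition: $H$ has non-empty intersection with some cube $H'\in\dH_\eps(\Gamma)$ (a legitimate reading since $\dH_\eps$ partitions $[0,1]^d$), and $H'$ in turn meets $\Gamma$ at some point $p$. The substantive step is bounding the diameter of $H\cup H'$. Because $\tilde{\dH}_\eps$ is obtained from $\dH_\eps$ by shifting $\eps/2$ along each coordinate axis, a one-dimensional inspection shows that any two overlapping intervals, one drawn from each of the two grids, span at most $\eps+\eps/2=\tfrac{3\eps}{2}$. Taking the product of these ranges over the $d$ coordinates, $H\cup H'$ is contained in an axis-aligned box of side $\tfrac{3\eps}{2}$ and hence of diameter $\tfrac{3\eps}{2}\sqrt{d}=\rho$, by the tuned value of $\eps$. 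This yields $\|q-p\|_2\leq\rho$ for every $q\in H$.

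I do not expect any real obstacle: the claim is essentially geometric bookkeeping, with the two grids decoupling cleanly into independent sub-cases. The only point worth emphasizing is that the constant $\eps=\tfrac{2}{3\sqrt{d}}\rho$ has been chosen precisely so that the second case is tight (it gives exactly $\rho$), while the first case has slack. This also clarifies why one cannot simply pick $\eps$ arbitrarily close to $\rho/\sqrt{d}$, which would suffice for the primary grid alone but fail for the shifted one.
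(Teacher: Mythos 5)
Your proof is correct and follows essentially the same route as the paper's: the same case split between $\dH_\eps(\Gamma)$ and $\tilde{\dH}_\eps(\Gamma)$, the same witness point in $H\cap\Gamma$ (respectively in the intersecting primary cell), and the same diameter bounds $\eps\sqrt{d}=\tfrac{2}{3}\rho$ and $\tfrac{3}{2}\eps\sqrt{d}=\rho$. The only cosmetic difference is that the paper phrases the first case contrapositively via points outside $\B_\rho(\Gamma)$, whereas you conclude directly from $q\in\B_\rho(p)$; your closing remark on the tightness of the choice of $\eps$ is a correct observation not made explicit in the paper.
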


\begin{proof}
  Consider a hypercube $H\in \dH_{\eps}(\Gamma)$.  By the definition
  of $\dB_{\eps}(\Gamma)$, $H$ intersects $\Gamma$ and let
  $x \in H \cap \Gamma$ be some intersection point.  Since
  $x \in \Gamma$, we have that $ \|x-y\|_2 > \rho$ for any
  point~$y \notin \B_\rho(\Gamma)$.  Recall that $H$ is an
  axis-aligned hypercube with side length of
  $\eps = \frac{2}{3\sqrt{d}}\rho$.  Thus, the maximal distance
  between any two points in $H$ is $\eps \sqrt{d} = \frac{2}{3}\rho$.
  Using the triangle inequality we have for every point $x' \in H$ and
  for any point~$y \notin \B_\rho(\Gamma)$,
  \[
  \|x'-y\|_2 \geq \|x-y\|_2 - \|x-x'\|_2 > \rho - \frac{2}{3}\rho =
  \frac{1}{3}\rho > 0,
  \]
  which implies that $x'\in \B_\rho(\Gamma)$.

  The proof for a hypercube $\tilde{H}\in \tilde{\dH}_{\eps}(\Gamma)$
  follows similar lines using the fact that for any
  point~$x' \in \tilde{H}$ and any point~$x \in H$ such that
  $H \cap \tilde{H} \neq \emptyset$ we have that
  $ \|x-x'\|_2 \leq \frac{3}{2}\eps \sqrt{d} = \rho$.
\end{proof}

We introduce some more terminology.  Every two
cells~$H,H'\in \dH_{\eps}(\Gamma)$ are called \emph{neighbors} if they
share a $(d-1)$-dimensional face. We now consider a refinement of
each grid cell~$H$ of~$\dH_\eps(\Gamma)$ (or of
$\tilde{\dH}_\eps(\Gamma)$) into $2^d$ sub-cells obtained by
splitting~$H$ by two along each axis through the middle point of $H$.
This induces the set of (refined) grid cells $\dH_{\eps / 2}(\Gamma)$
(or $\tilde{\dH}_{\eps / 2}(\Gamma)$, respectively).  Note that the
number of cells in~$\dH_{\eps / 2}(\Gamma)$ and
$\tilde{\dH}_{\eps / 2}(\Gamma)$ is fixed for the given
$d,\rho,\Gamma$, and does not depend on~$n$.

\begin{claim}\label{clm:nonempty}
  Let $H\in \dH_\eps(\Gamma) \cup \tilde{\dH}_\eps(\Gamma)$ (similarly
  for $H\in \dH_{\eps/2}(\Gamma) \cup \tilde{\dH}_{\eps/2}(\Gamma)$) . Then
  $\X_n\cap H\neq \emptyset$, \as.
\end{claim}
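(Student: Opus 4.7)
The plan is to reduce the claim to a straightforward finite union bound, exploiting the fact that all four grids have a fixed number of cells of fixed volume, both independent of $n$.

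First I would observe that every cell $H$ in any of the four grids $\dH_\eps(\Gamma)$, $\tilde{\dH}_\eps(\Gamma)$, $\dH_{\eps/2}(\Gamma)$, $\tilde{\dH}_{\eps/2}(\Gamma)$ is an axis-aligned hypercube whose side length is either $\eps = \frac{2}{3\sqrt{d}}\rho$ or $\eps/2$. In particular, $|H|$ is a strictly positive constant depending only on $d$ and $\rho$, and importantly not on $n$. By Claim~\ref{clm:collision_free}, $H \subset \B_\rho(\Gamma) \subset [0,1]^d$, so $H$ is actually a subset of the sampling domain and its Lebesgue measure within $[0,1]^d$ equals $|H|$.

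Next, since the samples $X_1,\ldots,X_n$ are drawn independently and uniformly from $[0,1]^d$, each $X_i$ lies in $H$ with probability exactly $|H|$, independently of the other samples. Therefore
\[
\Pr\!\left[\X_n \cap H = \emptyset\right] = (1 - |H|)^n \leq \exp\{-|H|\, n\},
\]
which tends to $0$ as $n \to \infty$ because $|H|$ is a positive constant.

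Finally I would apply a union bound over all four grids. The total number of cells,
\[
N := \left|\dH_\eps(\Gamma)\right| + \left|\tilde{\dH}_\eps(\Gamma)\right| + \left|\dH_{\eps/2}(\Gamma)\right| + \left|\tilde{\dH}_{\eps/2}(\Gamma)\right|,
\]
is bounded by a constant depending only on $d$ and $\rho$ (each grid has at most $\lceil 2/\eps\rceil^d$ cells since it tiles $[0,1]^d$), and in particular is independent of $n$. Hence
\[
\Pr\!\left[\exists H : \X_n \cap H = \emptyset\right] \leq N \cdot \exp\{-(\eps/2)^d\, n\} \xrightarrow[n\to\infty]{} 0,
\]
which gives the claim. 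There is no real obstacle here, the only subtlety being the explicit observation that both the cell volumes and the number of cells are fixed constants once $d$, $\rho$, and $\Gamma$ are fixed, so the union bound does not blow up with $n$.
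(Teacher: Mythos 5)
Your proof is correct and follows essentially the same route as the paper's: compute $\Pr[\X_n\cap H=\emptyset]=(1-|H|)^n\leq e^{-|H|n}$ for a single cell and then take a union bound over the constant (independent of $n$) number of cells. The only cosmetic difference is that you explicitly invoke Claim~\ref{clm:collision_free} to place the shifted-grid cells inside $[0,1]^d$, a point the paper leaves implicit.
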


\begin{proof}
  We show the proof for $\dH_{\eps}(\Gamma)$, and the proof for
  $\tilde{\dH}_{\eps}(\Gamma)$ follows similar lines.  We start by
  showing that the probability that a specific
  cell~$H\in \dH_{\eps}(\Gamma)$ does not contain a sample of~$\X_n$
  tends to $0$:
  \[\Pr[\X_n\cap H=\emptyset]=(1-|H|)^n
  = (1-\eps^d)^n \leq e^{-n\eps^d}.\]
  Using the union bound, we deduce,
  \begin{align*}
    \Pr[\exists H\in \dH_\eps(\Gamma):\X_n\cap H= \emptyset]\leq
    \sum_{H\in \dH_\eps(\Gamma)}\Pr[\X_n\cap H=\emptyset]\leq
    be^{-n\eps^d},
  \end{align*}
  where $b$ denotes the number of cells in $\dB_\eps(\Gamma)$. As $b$
  is independent of $n$ the last expression tends to $0$ as $n$ tends
  to $\infty$.
\end{proof}

We are ready for the main proof.
\begin{proof}[Proof (Theorem~\ref{thm:conn})] 
  Recall that $\G_n$ is localizable for~$\aconn$.  As
  $\bigcup_{H \in \dH_{\eps}(\Gamma)} \subset \B_{\rho}(\Gamma)$, and since
  $x$ and $y$ are in the same connected component of~$\Gamma$,
  there exists a sequence of hypercubes $H_1, \ldots ,H_k\in\dH_{\eps}(\Gamma)$ such that
  (i)~$x \in H_1$, (ii)~$y \in H_k$ and (iii)~$H_{i}$ and $H_{i+1}$
  are neighbors for $1 \leq i < k$.  By Claim~\ref{clm:collision_free}
  each $H_i$ is  contained in~$\B_{\rho}(\Gamma)$.
  
  Claim~\ref{clm:nonempty} ensures, using the fact that $\Gamma$ is $\rho$-safe, that each~$H_i$
  contains a vertex of $\G_n$ \as  Let
  $x = x_1, \ldots, x_k = y$ denote such a set of vertices where
  $x_i \in H_i$.  We will show (using the localization of monotone
  properties) that $x_i$ and $x_{i+1}$ are connected in~$\G_n(\B_{\rho}(\Gamma))$
  which will conclude our proof.

  Let $\tilde{H} \in \tilde{\dH}_{\eps}(\Gamma)$ be a hypercube
  that intersects both~$H_i$ and~$H_{i+1}$ (there are always $2^{d-1}$
  such hypercubes).  By Claim~\ref{clm:nonempty},
  both~$\tilde{H} \cap H_i$ and ~$\tilde{H} \cap H_{i+1}$ contain a
  vertex of~$\G_n$ \as, since both of these intersection represent
  hypercubes in~$\dH_{\eps/2}(\Gamma)$. Let~$z_i$ and $z_{i+1}$ be these
  vertices, respectively (see Fig.~\ref{fig:connection}).

  Using Lemmas~\ref{lem:local_rgg}-\ref{lem:local_regg} we have that $x_i$ and $z_i$
  are connected in $H_i$, that $z_i$ and~$z_{i+1}$ are connected in
  $\tilde{H}$, and that $z_{i+1}$ and $x_{i+1}$ are connected in $H_{i+1}$
  \as This must hold for every $1\leq i< k$ in
  order to ensure that $x$ and $y$ are connected in
  $\G_n(\B_{\rho}(\Gamma))$. Due to the fact that $k$ can be at most the number
  of cells in $\dH_\eps(\Gamma_\rho)$, which is independent of~$n$, we
  deduce that indeed $x,y$ are connected in $\G_n(\B_{\rho}(\Gamma))$ \as
\end{proof}

\begin{figure}[tb]
\centering
\includegraphics[height =5.5 cm]{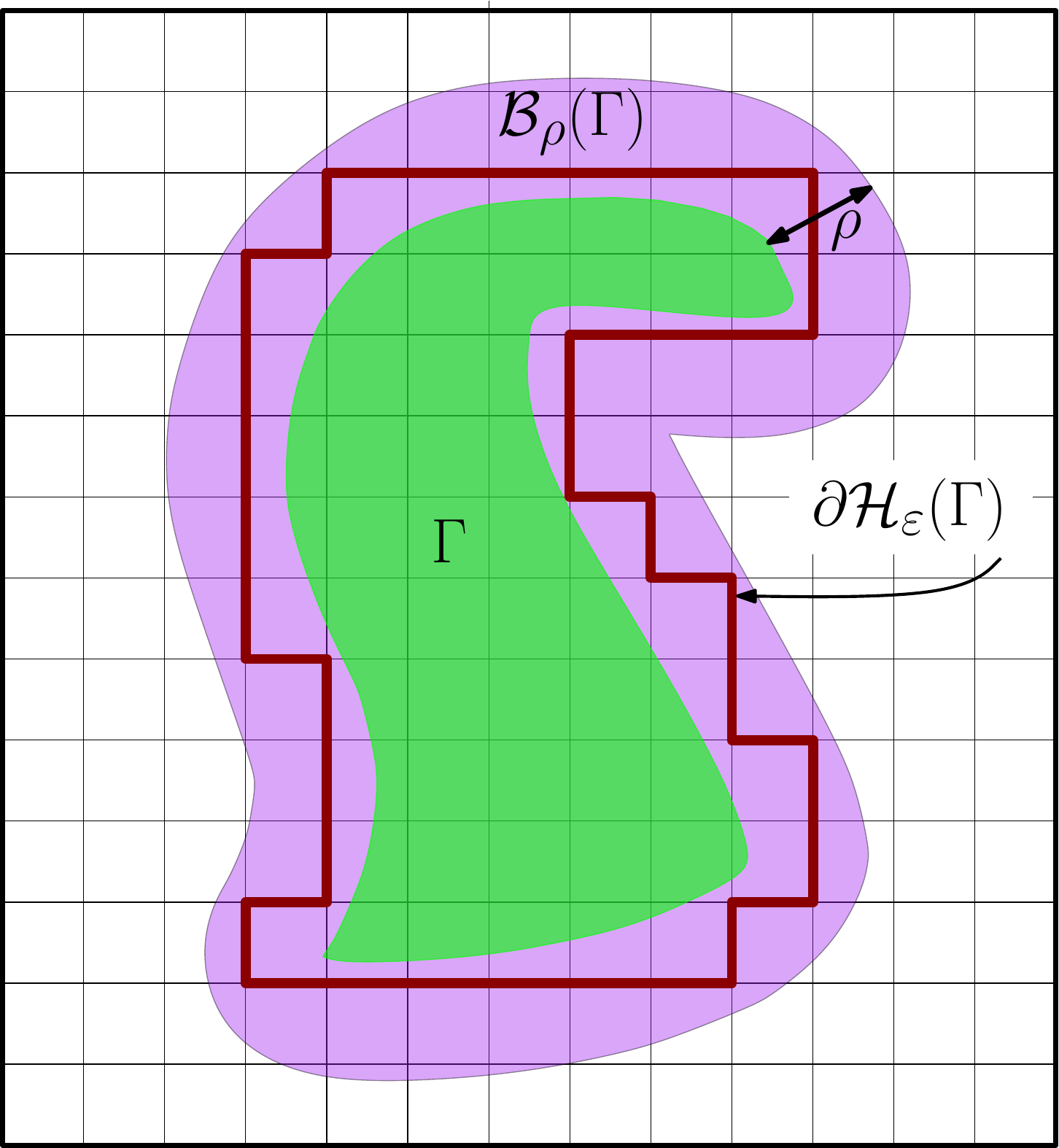}
  \caption{\sf \footnotesize Visualization of $\Gamma$ (green),
    $\B_\rho(\Gamma)$ (purple) and the grid $\dH_{\eps}$ used for the proof
    of Theorem~\ref{thm:conn}.  The boundary of the set of grid cells
    $\dH_{\eps}(\Gamma)$ is depicted using dark red lines.}
  \label{fig:grids}
\end{figure}

\begin{figure}[tb]
  \centering
  \includegraphics[height =5.5 cm]{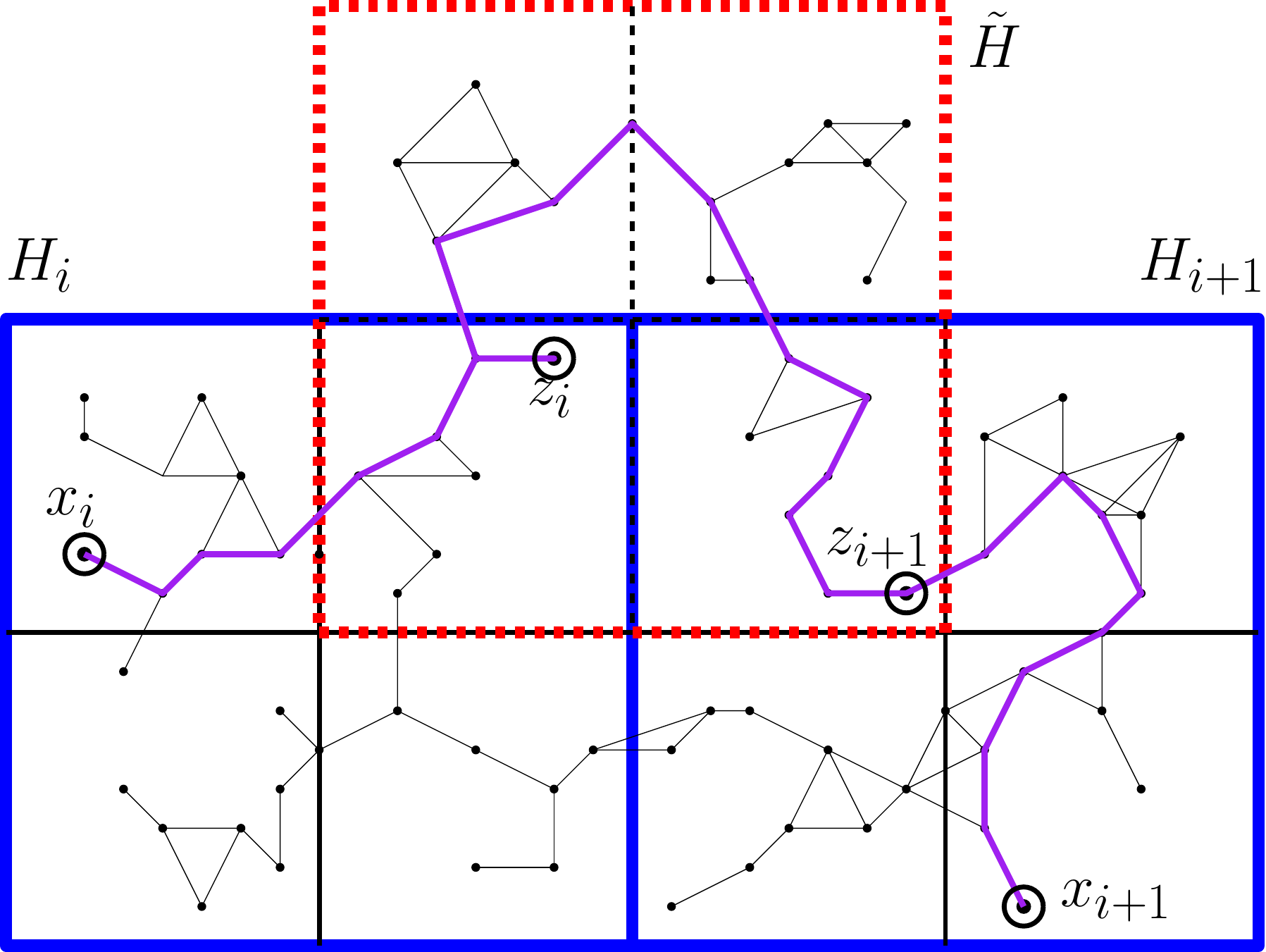}
  \caption{\sf \footnotesize Visualization of the proof of
    Theorem~\ref{thm:conn}.  Hypercubes $H_i,H_{i+1}$ and $\tilde{H}$
    of side length $\eps$ are depicted in solid blue lines and dashed
    red lines, respectively.  A path connecting $x_i \in H_i$ to
    $x_{i+1} \in H_{i+1}$ via intermediate points
    $z_i \in H_i \cap \tilde{H}$ and
    $z_{i+1} \in H_{i+1} \cap \tilde{H}$ is depicted by a purple
    line. }
	\label{fig:connection}
\end{figure}

\subsection{Bounded stretch}
Given $\zeta \geq 1 $ denote by $\abd^\zeta$ the property indicating
that a given geometrically-embedded graph has a \emph{bounded stretch}
of~$\zeta$, for any two vertices. Formally, let $\G$ be a graph
defined over a vertex set $X\subset[0,1]^d$. The notation
$\G\in \abd^\zeta$ indicates that for every $x,y\in X$ it holds that
$\dist(\G,x,y)\leq \zeta\|x-y\|_2$.  The proof of the following theorem
 is very similar to that of Theorem~\ref{thm:conn}.

\begin{theorem}\label{thm:bd}
  Let $\Gamma \subset [0,1]^d$ be a $\rho$-safe region for some
  constant $\rho > 0$ independent of $n$.  Let $\G_n$ be a random
  graph that is localizable for $\abd^\zeta$, for
  some~$\zeta \geq 1 $.  Additionally, let $x,y\in \X_n$ be two points
  that lie in the same connected component of~$\Gamma$. Then
  $\dist(\G_n(\B_{\rho}(\Gamma)),x,y)\leq \zeta \|x-y\|_{\Gamma}
  +o(1)$
  \as, where $\|x-y\|_{\Gamma}$ denotes the length of the shortest
  path between $x$ and $y$ that is fully contained in $\Gamma$.
\end{theorem}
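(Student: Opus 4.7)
The plan is to mirror the proof of Theorem~\ref{thm:conn}, but to track Euclidean distances carefully and to introduce a finer sub-grid that makes the additive overhead vanish. First I would take a shortest path $\sigma$ from $x$ to $y$ inside $\Gamma$ of length $L = \|x-y\|_\Gamma$, set $\eps = \frac{2}{3\sqrt{d}}\rho$, and instantiate the two grids $\dH_\eps(\Gamma)$ and $\tilde\dH_\eps(\Gamma)$ exactly as in the connectivity proof. Let $H_1,\ldots,H_k \in \dH_\eps(\Gamma)$ be the ordered sequence of cells visited by $\sigma$ (so consecutive cells share a $(d-1)$-face), and for each $i$ let $\tilde H_i \in \tilde\dH_\eps(\Gamma)$ be a shifted cell straddling the face between $H_i$ and $H_{i+1}$. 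By Claim~\ref{clm:collision_free} every such $H_i$ and $\tilde H_i$ lies inside $\B_\rho(\Gamma)$, and $k$ depends only on $\sigma,\Gamma,\rho$, not on $n$.

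The crucial extra ingredient, compared with Theorem~\ref{thm:conn}, is a further refinement at a vanishing scale $\eps_n = n^{-\alpha}$ for some fixed $0 < \alpha < 1/d$. Rerunning the Chernoff/union-bound estimate used in Claim~\ref{clm:nonempty} at this scale shows that every sub-cube of side $\eps_n$ lying inside one of the $H_i$ or $\tilde H_i$ contains a vertex of $\X_n$ almost surely, since $O(\eps_n^{-d}) \cdot e^{-n\eps_n^d} = n^{\alpha d} e^{-n^{1-\alpha d}} \to 0$. Let $a_i, b_i$ denote the entry and exit points of $\sigma$ in $H_i$ (with $a_1 = x$, $b_k = y$ and $b_i = a_{i+1}$) and pick vertices $v_i^{\textup{in}}, v_i^{\textup{out}} \in \X_n$ inside the sub-cubes of side $\eps_n$ containing $a_i, b_i$ respectively, arranged so that $v_i^{\textup{in}}$ and $v_{i-1}^{\textup{out}}$ both lie inside $\tilde H_{i-1}$.

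Next I would invoke the appropriate localization lemma among Lemmas~\ref{lem:local_rgg}--\ref{lem:local_regg} for the monotone property $\abd^\zeta$, which yields, almost surely,
\[
\dist(\G_n(H_i), v_i^{\textup{in}}, v_i^{\textup{out}}) \leq \zeta \|v_i^{\textup{in}} - v_i^{\textup{out}}\|_2 \quad \text{and} \quad \dist(\G_n(\tilde H_i), v_i^{\textup{out}}, v_{i+1}^{\textup{in}}) \leq \zeta \|v_i^{\textup{out}} - v_{i+1}^{\textup{in}}\|_2.
\]
Concatenating the resulting paths and applying the triangle inequality against $\sigma$ yields $\|v_i^{\textup{in}} - v_i^{\textup{out}}\|_2 \leq L_i + 2\eps_n\sqrt{d}$, where $L_i$ is the length of $\sigma \cap H_i$ (so $\sum_i L_i = L$), and $\|v_i^{\textup{out}} - v_{i+1}^{\textup{in}}\|_2 \leq 2\eps_n\sqrt{d}$ because $b_i = a_{i+1}$; summing over the $k$ cells gives a total polygonal length of $L + O(k\eps_n)$, whence $\dist(\G_n(\B_\rho(\Gamma)), x, y) \leq \zeta L + O(k\zeta\eps_n) = \zeta L + o(1)$ almost surely.

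The main obstacle is precisely getting a vanishing additive $o(1)$ rather than a constant-in-$n$ overhead: a naive transcription of Theorem~\ref{thm:conn}'s proof, using only the $\eps/2$-sized sub-cells of Claim~\ref{clm:nonempty}, would yield only $\leq \zeta L + O(\eps)$. The fix is the refinement at scale $\eps_n \to 0$, and the delicate point is checking that Claim~\ref{clm:nonempty}'s proof survives at this shrinking scale, which is where the restriction $\alpha < 1/d$ enters.
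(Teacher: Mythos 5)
Your proposal is correct and follows essentially the same strategy as the paper's proof: cover $\sigma$ by a constant number of constant-size cubes contained in $\B_\rho(\Gamma)$, snap prescribed points of $\sigma$ to nearby samples at a \emph{vanishing} scale so that the additive overhead is $o(1)$, and apply localization of $\abd^\zeta$ within each cube. The paper realizes the vanishing snapping scale by taking the sample nearest to each of $b$ equally spaced points on $\sigma$ (which lies within $R_n=\theta_d^{-1/d}\left(\frac{\log n}{n}\right)^{1/d}$ a.s., by the same occupancy/union-bound argument you use) and works with cubes of side $2\rho/\sqrt{d}$ centered at the midpoints between consecutive points, rather than your sub-grid at scale $n^{-\alpha}$ overlaid on the two grids of Theorem~\ref{thm:conn}; these are interchangeable bookkeeping choices, and you correctly identify the one essential refinement over the connectivity proof.
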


\begin{figure*}[tb]
\centering
\begin{minipage}{.45\textwidth}
  \centering
  \includegraphics[height =5.5 cm]{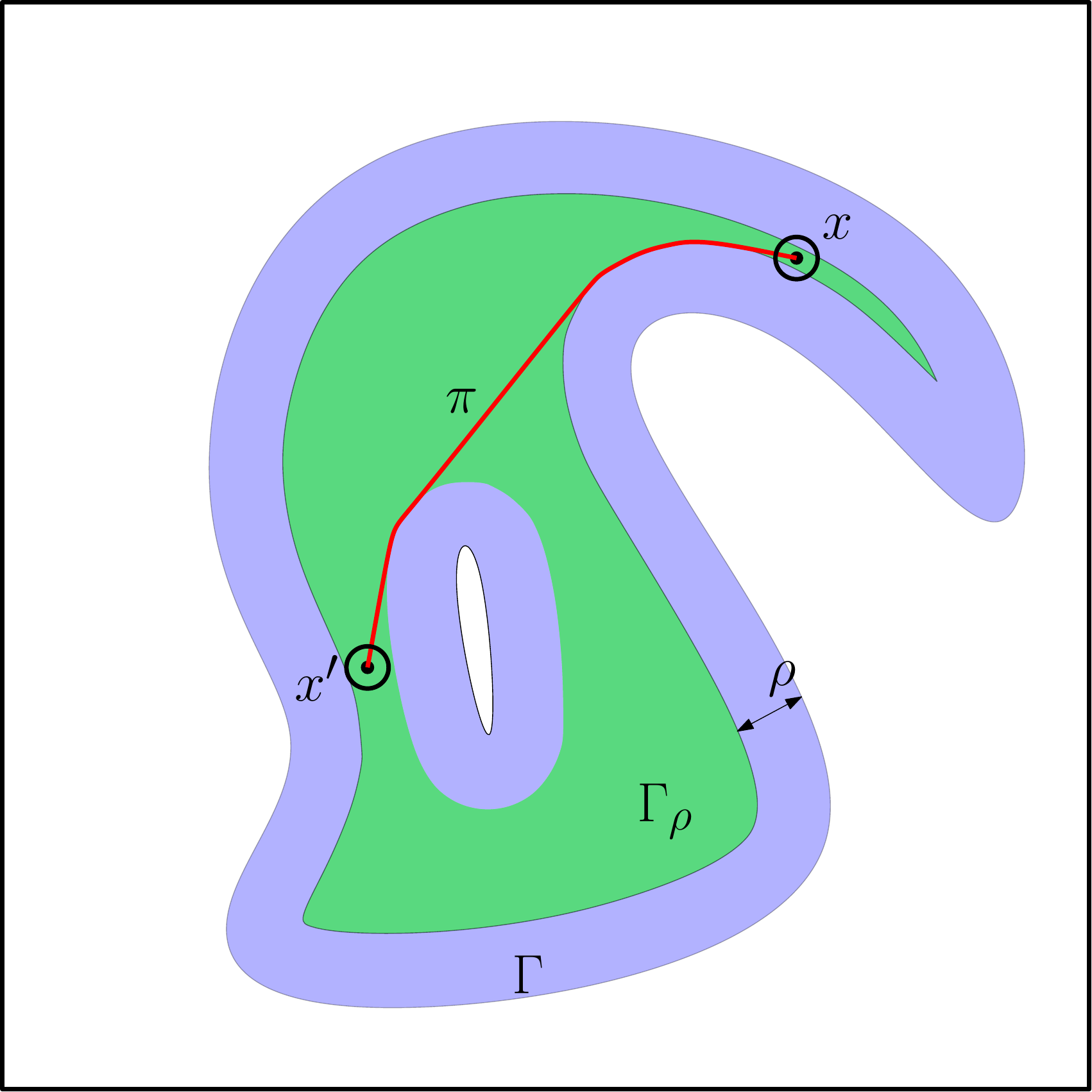}
%  \vspace{1.25 cm}
  \caption{\sf \footnotesize Visualization of shortest path $\sigma$
    (red) connecting two points $x, y$ within $\Gamma$ (green)
    used for the proof of Theorem~\ref{thm:bd}. }
    \vspace{1.15 cm}
     \label{fig:shortest_path}
\end{minipage}%
\hfill
\begin{minipage}{.45\textwidth}
  \centering
  \includegraphics[height =5.5 cm]{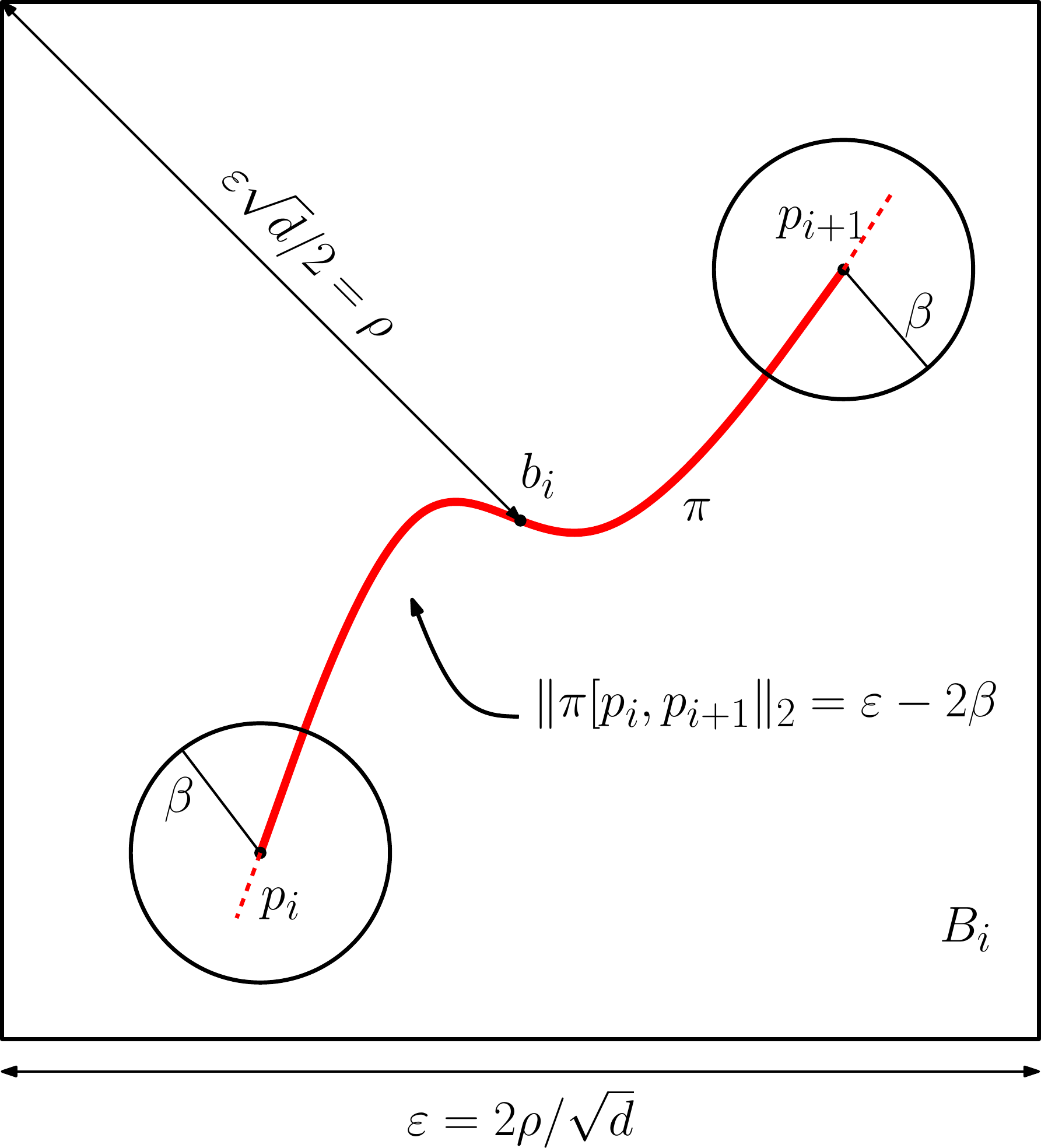}
  \caption{\sf \footnotesize Visualization of proof of
    Theorem~\ref{thm:bd}.  Points $p_i$ and $p_{i+1}$ along path $p_i$
    are connected such that the distance between the points along
    $\sigma$ is exactly $\eps/2$.  Hypercube~$H_i$ of side length
    $\eps$ centred at $q_i$ (midpoint between $p_i$ and
    $p_{i+1}$).  }
  \label{fig:construction}
 \end{minipage}
\end{figure*}

\begin{proof}
  Let $\sigma$ be the shortest path connecting $x$ and $y$, which is
  entirely contained in $\Gamma$ (see Fig.~\ref{fig:shortest_path}). We provide a brief summary of the
  proof. We define a sequence of $b$ points $p_1,\dots,p_b$ that are
  equally spaced along $\sigma$ (here $b$ plays a similar role to the number of hypercubes used in the proof of Theorem~\ref{thm:conn}). Next, we show that for every $p_i$
  there is a vertex $x_i$ of $\X_n$ that is sufficiently close to~$p_i$. 
  Moreover, we show that for every $1\leq i < b$, the
  points $x_i,x_{i+1}$ are contained in a hypercube $H_i$ whose size
  is independent of $n$, and which is contained in
  $\B_{\rho}(\Gamma)$.  This allows to exploit the fact that $\G_n$
  is localizable for $\abd^\zeta$ and show that $\G_n$ contains a path
  from $x_i$ to $x_{i+1}$ that is similar in length to the subpath of
  $\sigma$ connecting $p_i$ to $p_{i+1}$.

  Set $R=R_n:=\theta_d^{-1/d}\left(\frac{\log n}{n}\right)^{1/d}$ and
  $\eps = 2\rho / \sqrt{d}$.  Note that for sufficiently large $n$ it
  follows that $R < 2\rho$.  Consider the sequence of $b$ points
  $P=p_1, \ldots, p_b$ in~$[0,1]^d$ along $\sigma$ such that
  (i)~$p_1 = x$, (ii)~$p_b = y$ and (iii)~the subpath of $\sigma$
  between points $p_i$ and $p_{i+1}$ has length exactly $\eps/2$
  (except for, possibly, the last subpath). Note that
  $b=2|\sigma|/\eps$ is finite and independent of $n$.
  See Fig.~\ref{fig:construction}.

  For every $p_i\in P$ define $x_i=\argmin_{x\in \X_n}\|x-p_i\|_2$,
  namely the closest point from $\X_n$ to $p_i$. We show that \as for
  every $1\leq i\leq b$ it holds that $\|x_i-p_i\|_2\leq R_n$.
  Similarly to the proof of Claim~\ref{clm:nonempty}, for a given
  $1\leq i\leq b$ we have that
  \[\Pr[\X_n\cap
  \B_{R_n}(x_i)=\emptyset]=(1-|\B_{R_n}(x_i)|)^n=(1-\theta_dR_n^d)^n\leq
  e^{-\log n} \leq 1/n.\] Then, we use the union bound to establish the bound
  \[\Pr[\exists 1\leq i\leq b, \X_n\cap \B_{R_n}(x_i)=\emptyset] \leq
  b / n, \] which converges to $0$ as $n\rightarrow \infty$.

  Now, set~$q_i$ to be the point midway between~$p_i$ and $p_{i+1}$ on
  $\sigma$. Additionally, define $H_i$ to be the axis-aligned
  hypercube of side-length $\eps$ centred at $q_i$.  Note that for
  sufficiently large $n$ it holds that $x_i,x_{i+1}\in H_i$ \as
  Moreover, it can be shown using an argument similar to the one used
  in Claim~\ref{clm:collision_free} that $H_i\subset\B_\rho(\Gamma)$.
  This, combined with the fact that $\G_n$ is localizable for
  $\abd^\zeta$, yields that the following holds \as:
  \begin{align*}
    \dist(\G_n(\H_i),x_i,x_{i+1}) &\leq \zeta\|x_i-x_{i+1}\|_2 +o(1) \\
                                  & \leq
                                    \zeta(\|p_i-p_{i+1}\|_2 + 2R_n)
                                    +o(1)\\ &
                                              =\zeta\|p_i-p_{i+1}\|_2+o(1) \\
                                  & = \zeta\|p_i-p_{i+1}\|_\Gamma + o(1).
  \end{align*}
  
  Recall that $b$ is finite and independent of $n$. Thus, the
  following inequality, which concludes the proofs, holds \as:
  \begin{align*}
    \dist(\G_n(\B_\rho(\Gamma),x,y) &\leq \sum_{i=1}^{b-1}\dist(\G_n(H_i),x_i,x_{i+1})
    \\ & \leq  \sum_{i=1}^{b-1}  \zeta\|p_i-p_{i+1}\|_\Gamma + o(1) \\
                                    & \leq \zeta\|x-y\|_\Gamma+o(1).                   
  \end{align*}
\end{proof}

\begin{remark}
  The proof of Theorem~\ref{thm:conn} could be altered to use the same
  arguments presented for the proof of Theorem ~\ref{thm:bd}, i.e.,
  follow a specific path instead of constructing a grid over the
  entire $\Gamma$.
\end{remark}

\section{Application to sampling-based motion planning}
\label{sec:mp}
We now move to the setting of motion planning in which a robot
operates in the configuration space $\C = [0,1]^d$, and whose free space is
denoted by $\F\subseteq \C$. 
Recall that the problem consists of
finding a continuous path between two configurations (points)
$s,t\in \F$, that is fully contained in $\F$.

The reason why we cannot apply results on RGGs to motion
planning directly is that $\F$ is not the full
hypercube $[0,1]^d$ but rather could be a geometrically and
topologically very complicated subset of this hypercube.
However, the localization-tessellation approach that we
have devised enables us to fairly directly adapt results
from the theory of RGGs to this more involved setting, as
we do in this section.

Specifically, we start by introducing the Bluetooth-PRM, Soft-PRM and
Embedded-PRM algorithms, which are extensions of RBG, SRGG and REGG to
the setting of motion planning. We then continue to provide proofs for
probabilistic completeness and asymptotic optimality of these methods.

\begin{remark}
  Soft-PRM and Bluetooth-PRM are very similar to a technique that was
  studied experimentally by McMahon et al.~\cite{McmETAL12}; here we
  provide theoretical analysis for it.
\end{remark}

\subsection{Motion-planning algorithms}
We introduce the Soft-PRM algorithm. The description of PRM and
Embedded-PRM immediately follow, as they are special cases of
Soft-PRM. Recall that SRGG is defined for a connection radius $r_n$
and the function $\phi_n:\dR^+\rightarrow [0,1]$: two vertices
$x,y\in \X_n$ for which $\|x-y\|_2\leq r_n$ are connected with an edge
with probability $\phi_n(\|x-y\|_2)$.

We use the following standard procedures: 
\texttt{sample}$(n)$
returns $n$ configurations that are sampled uniformly and randomly
from $\C$; \texttt{nearest\_neighbors}$(x,V,r)$ returns all the
configurations from $V$ that are found within a distance of $r$ from
$x$; \texttt{collision\_free}$(x,y)$ tests whether the straight-line
segment connecting $x$ and $y$ is contained in $\F$;
\texttt{random\_variable}$()$ selects uniformly at random a real
number in the range $[0,1]$.

The \emph{preprocessing phase} of Soft-PRM is described in
Alg.~\ref{alg:soft}. In lines~1-4, $n$ configurations are sampled
(note that this slightly differs from some PRM descriptions in which
the samples are assumed to be collision free) and for each sample,
Soft-PRM retrieves the neighboring samples which are within a distance
of at most $r_n$ from it. For each sample point~$x$ and each candidate
neighbor~$y$ it decides with probability $\phi_n(\|x-y\|_2)$ whether
to attempt the connection (lines~5-6).  If this is the case, the
edge~$(x,y)$ is tested for being collision free (line 7), and added
accordingly to~$E$.

The (standard) PRM and Embedded-PRM are identical to
Alg.~\ref{alg:soft} using the parameters $r_n$ and $\phi_n = 1$ for
PRM and $r_n = \infty$ and $\phi_n = p_n$ for Embedded-PRM.  Note that
in the implementation of Embedded-PRM there is no need to maintain a
nearest-neighbor data structure (line 3) as every pair of vertices
$x,y\in\X_n$ is chosen with probability $p_n$. Bluetooth-PRM can be
obtained by replacing lines 5,6 in Alg.~\ref{alg:soft} with a suitable
procedure which uniformly samples a subset of $c_n$ neighbors from a
given collection.

In the \emph{query stage} each of the three algorithms is given two
configurations $s,t$, which are then connected to their neighbors in
the underlying roadmap by executing \texttt{nearest\_neighbors} with
the connection radius
$r_{\text{query}} = \gamma\left(\frac{\log n}{n}\right)^{1/d}$, where
$\gamma>\gamma^*= 2(2d\theta_d)^{-1/d}$.
Naturally, every connection is tested for collision. Finally, the
underlying graph is searched for the shortest path from $s$ to $t$ and
the respective path in $\F$ is returned (if exists).

\begin{observation}
  Denote by $\G_n$ the \textup{Soft-PRM} roadmap produced for $n$
  samples and the connection radius $r_n$. Then
  $\G_n=\Gsoft_n(\X_n;r_n;\phi_n)\cap \F$. The same applies for the
  relation of the underlying roadmaps of
  \textup{PRM},\textup{Bluetooth-PRM}, \textup{Embedded-PRM}, and RGG,
  RBG, REGG, respectively.
\end{observation}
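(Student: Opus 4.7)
The plan is to verify the equality by directly tracing the construction in Alg.~\ref{alg:soft} against Definition~\ref{def:srbg} of $\Gsoft$ and the graph-intersection convention of Definition~\ref{def:subgraph}. Since the statement is purely structural---not an asymptotic or probabilistic bound---no new tools beyond the definitions themselves are needed, so I would present it as a short sanity check rather than a full-blown argument.

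First, I would reconcile the vertex sets. Both $\G_n$ and $\Gsoft(\X_n;r_n;\phi_n)\cap\F$ are built on the sample $\X_n$ produced by the call to \texttt{sample}$(n)$ in line~2. The intersection $\cap\F$ prunes vertices outside $\F$, whereas Alg.~\ref{alg:soft} retains them; however, any vertex $x\in\X_n\setminus\F$ is isolated in $\G_n$, because every candidate edge incident to it has an endpoint in $\Cforb$ and is therefore rejected by \texttt{collision\_free} in line~7. Isolated vertices do not affect any of the monotone graph properties analysed in Sections~\ref{sec:local}--\ref{sec:tessellation}, so the two vertex sets may be identified.

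Next, for the edge sets, I would take an arbitrary pair $(x,y)\in\X_n\times\X_n$ and observe that $(x,y)$ enters $\G_n$ iff three \emph{independent} conditions all hold: (i)~$\|x-y\|_2\leq r_n$, enforced by \texttt{nearest\_neighbors} in line~3; (ii)~\texttt{random\_variable}$()\leq\phi_n(\|x-y\|_2)$, which is a Bernoulli trial with success probability $\phi_n(\|x-y\|_2)$ and is drawn independently for each unordered pair; (iii)~\texttt{collision\_free}$(x,y)$, which holds iff the segment $\overline{xy}$ lies in $\F$. Conditions~(i) and~(ii) together reproduce precisely the edge rule of Definition~\ref{def:srbg}, and condition~(iii) is precisely the edge-restriction of Definition~\ref{def:subgraph} with $\Gamma=\F$. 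The specialisations to PRM/RGG, Bluetooth-PRM/RBG and Embedded-PRM/REGG follow immediately by plugging in parameters: $\phi_n\equiv 1$ makes condition~(ii) vacuous and recovers $\Gdisk$; replacing lines~5--6 by a uniform $c_n$-subset selection reproduces the edge rule of Definition~\ref{def:rbg}; and setting $r_n=\infty$ with $\phi_n\equiv p_n$ makes condition~(i) vacuous and recovers $\Gembed$.

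The only mild subtlety, and the one worth stating explicitly rather than skipping, is that the independence of the \texttt{random\_variable}$()$ draws across distinct pairs preserves the joint distribution demanded by Definition~\ref{def:srbg} after conditioning on $\X_n$. This is immediate from the algorithm's description---the draws in line~5 are by assumption mutually independent and independent of $\X_n$---so I do not anticipate any genuine technical obstacle anywhere in the argument; the observation is really a bookkeeping statement that licenses transferring the RGG/RBG/SRGG/REGG results established earlier, together with the localization-tessellation machinery, to the motion-planning roadmaps in the sequel.
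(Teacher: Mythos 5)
The paper states this observation without proof, treating it as immediate from the definitions, and your direct unfolding of Algorithm~\ref{alg:soft} against Definitions~\ref{def:srbg} and~\ref{def:subgraph} is exactly the intended (omitted) argument, so the proposal is correct. Your explicit treatment of samples lying in $\C\setminus\F$ --- which the algorithm retains as isolated vertices while the intersection with $\F$ discards them --- is, if anything, more careful than the paper's literal statement of the equality.
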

\begin{algorithm}[tb]
  \caption{Soft-PRM($n, r_n,\phi_n$)}
  \label{alg:soft}
  \begin{algorithmic}[1]
	\State
      $V \gets \{ x_{\text{init}} \} \cup \texttt{sample}(n)$;
      $E \gets \emptyset$; $\G \gets (V,E)$ 
      \ForAll{$x\in V$} 
      \State $U\gets\texttt{nearest\_neighbors}(x,V,r_n)$
      \ForAll {$y \in U$}
      \State $\xi\gets\texttt{random\_variable}()$
      \If {$\xi \leq \phi_n(\|x-y\|_2)$}
      \If {\texttt{collision\_free}($x, y$)}
      \State	$E \leftarrow E \cup (x,y)$
      \EndIf
      \EndIf
      \EndFor
      \EndFor
      \Return $\G$
  \end{algorithmic}
\end{algorithm}
\subsection{Probabilistic completeness}
Let $(\F, s, t)$ be a motion-planning problem that consists of the
free space $\F \subset [0,1]^d$, and $s,t\in \F$ are the start and
target configurations, respectively. We provide the definition of
\emph{probabilistic completeness} and state the conditions under which
the aforementioned algorithms posses this property. 
\begin{definition}[\cite{KF11}]
  Let $\sigma:[0,1]\rightarrow \F$ be a continuous path, and let
  $\delta>0$. The path $\sigma$ is $\delta$-robust if
  $\B_\delta(\sigma) \subseteq \F$.
\end{definition}
\begin{definition}[\cite{KF11}]
  A motion-planning problem $(\F, s, t)$ is \emph{robustly
    feasible} if there exists a $\delta$-robust path $\sigma$
  connecting~$s$ to $t$, for some fixed $\delta >0$.
\end{definition}
\begin{definition}[\cite{KF11}]
  A planner ALG is probabilistically complete if for any
  robustly-feasible $(\F, s, t)$, the probability that ALG finds a
  solution with $n$ samples converges to $1$ as $n$ tends to $\infty$.
\end{definition}
\begin{lemma}\label{lem:complete_planners}
  Let
  $\textup{ALG}\in \{\textup{PRM}, \textup{Bluetooth-PRM},
  \textup{Soft-PRM}, \textup{Embedded-PRM}\}$
  with a selection of parameters for which the corresponding random
  graph $\G_n$ be \emph{localizable for connectivity}. Then
  \textup{ALG} is probabilistically complete.
\end{lemma}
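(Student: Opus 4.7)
The plan is to reduce probabilistic completeness to the tessellation-based connectivity guarantee in Theorem~\ref{thm:conn}. The key observation is that a $\delta$-robust path $\sigma$ from $s$ to $t$ provides a tubular region inside $\F$ on which the underlying random graph inherits connectivity with high probability, and every edge of that random graph lying in this tube is automatically collision-free, hence present in the planner's roadmap as well.

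First, I fix a $\delta$-robust path $\sigma$ from $s$ to $t$ and set $\Gamma := \B_{\delta/2}(\sigma) \cap [0,1]^d$ with $\rho := \delta/2$. Then $\B_\rho(\Gamma) \subseteq \B_\delta(\sigma) \subseteq \F \subseteq [0,1]^d$, so $\Gamma$ is $\rho$-safe in the sense of Theorem~\ref{thm:conn}, and $\Gamma$ is path-connected since $\sigma$ is a continuous curve. By assumption, the underlying random graph $\G_n$ is localizable for $\aconn$, so Theorem~\ref{thm:conn} applies: almost surely every pair of samples in $\X_n \cap \Gamma$ is connected in $\G_n(\B_\rho(\Gamma))$. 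By Definition~\ref{def:subgraph}, each edge of $\G_n(\B_\rho(\Gamma))$ is a straight-line segment fully contained in $\B_\rho(\Gamma) \subseteq \F$; by the observation relating each PRM variant to its underlying random graph, such edges pass the collision-free test and hence appear in the planner's roadmap.

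Second, I bridge $s$ and $t$ to sampled vertices in $\Gamma$. For sufficiently large $n$, the query radius $r_{\text{query}} = \gamma\left(\frac{\log n}{n}\right)^{1/d}$ drops below $\delta/2$, so $\B_{r_{\text{query}}}(s) \subseteq \Gamma$ and analogously for $t$, and any straight-line segment from $s$ or $t$ to a point in the respective ball lies inside $\B_\delta(s) \subseteq \F$ (respectively $\B_\delta(t)$). A routine Chernoff-and-union-bound argument, exactly as in Claim~\ref{clm:nonempty}, gives $\Pr[\X_n \cap \B_{r_{\text{query}}}(s) = \emptyset] \leq n^{-\theta_d \gamma^d} \to 0$, and similarly for $t$. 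Thus, almost surely there exist samples $s', t' \in \X_n \cap \Gamma$ at distance at most $r_{\text{query}}$ from $s$ and $t$, yielding edges $(s, s')$ and $(t, t')$ in the query-stage roadmap. Concatenating these with the path from $s'$ to $t'$ supplied by the previous step produces an $s$-to-$t$ path in the roadmap almost surely.

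The main obstacle is the mild bookkeeping needed to ensure that the edges guaranteed by Theorem~\ref{thm:conn} on the abstract random graph are genuinely present in the planner's roadmap; this is resolved by the $\rho$-safety of $\Gamma$, which places every such edge inside $\F$. Once this is in place, the conclusion follows directly from Theorem~\ref{thm:conn} combined with the elementary sampling-density estimate for the query-stage connections, and the argument is uniform across the four PRM variants since the only hypothesis invoked on the underlying random graph is localizability for $\aconn$.
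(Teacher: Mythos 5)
Your proposal is correct and follows essentially the same route as the paper's proof: the same tube $\Gamma=\B_{\delta/2}(\sigma)$ with $\rho=\delta/2$, the same invocation of Theorem~\ref{thm:conn} via localizability for $\aconn$, and the same query-stage density argument connecting $s$ and $t$ to $\X_n\cap\Gamma$. You merely spell out two steps the paper leaves implicit (that edges of $\G_n(\B_\rho(\Gamma))$ survive the collision test, and the Chernoff bound for the query balls), which is fine.
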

\begin{proof} 
  Suppose that $(\F,s,t)$ is robustly feasible.  By definition, there
  exists a path $\sigma$ connecting $s$ to $t$ and $\delta>0$ for
  which $\B_{\delta}(\sigma)\subseteq \F$.

  Let $\delta':=\delta/2$ and let $\Gamma:={\B}_{\delta'}(\sigma)$.
  Note that $s,t\in \Gamma$ and also
  ${\B}_{\delta'}(\Gamma) = {\B}_{\delta}(\sigma) \subseteq {\F}
  \subset [0,1]^d$,
  which implies that $\Gamma$ is $\delta'$-safe.  By the fact that ALG
  is localizable for connectivity, and by Theorem~\ref{thm:conn}, we
  have that for every $x,y\in \X_n\cap \Gamma$ it follows that $x,y$
  are connected in $\G_n(\B_{\delta'}(\Gamma))$ \as

  It remains to show that during the query stage~$s$ and~$t$ are
  connected to $\G_n(\B_{\delta'}(\Gamma))$. It is not hard to verify
  that
  $\X_n\cap \B_{r_{\text{query}}}(s)\neq \emptyset, \X_n\cap
  \B_{r_{\text{query}}}(t)\neq \emptyset$,
  \as, which implies connectivity.
\end{proof}

\begin{theorem}\label{thm:complete_final}
  Recall that $\gamma^* = 2(2d\theta_d)^{-1/d}, \gamma^{**}=d\cdot 2^{1+1/d}$ and that $d\geq 2$.
  Then the following algorithms are probabilistically complete:
  \begin{description}
  \item[(i)] \textup{PRM($r_n$)}, where
    $r_n=\gamma\left(\frac{\log n}{n}\right)^{1/d}$, for
    $\gamma>\gamma^*$;
  \item[(ii)] \textup{Bluetooth-PRM($r_n;c_n$)}, where $r_n=\rtrs$, for
    $\gamma>\gamma^{**}$ and $c_n>\sqrt{\frac{2\log n}{\log\log n}}$;
  \item[(iii)] \textup{Soft-PRM($r_n;\phi_n$)}, where $r_n=\rtrs$, for
    $\gamma>(d+1)^{1/d}\gamma^*$ and $\phi_n(z)=1-z/r_n$, for any
    $z\in \dR^+$;
  \item[(iv)] \textup{Embedded-PRM($p_n$)}, where
    $p_n = \omega\left(\frac{\log^d n}{n}\right)$.
  \end{description}
\end{theorem}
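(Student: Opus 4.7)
The plan is to assemble this theorem from the three layers of machinery already developed in the paper: the a.s.\ connectivity theorems for each RGG variant (Section~\ref{sec:RGG}), the localization lemmas that lift those properties to arbitrarily small sub-cubes (Section~\ref{sec:local}), and the completeness criterion of Lemma~\ref{lem:complete_planners}, which itself relies on the tessellation argument of Theorem~\ref{thm:conn}. For each of the four algorithms, I would follow the same three-step pattern: (a) identify the underlying random graph model and verify that the parameter choice in the theorem statement lies in the regime where that model is connected almost surely; (b) apply the corresponding localization lemma to conclude that the graph is \emph{localizable} for $\aconn$; (c) invoke Lemma~\ref{lem:complete_planners} to deduce probabilistic completeness of the planner.

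Concretely: for (i), the Soft-PRM reduces to standard PRM, whose roadmap coincides with $\Gdisk(\X_n;r_n)$. The choice $\gamma>\gamma^*$ places us in the connected regime of Theorem~\ref{thm:rgg_con}, so Lemma~\ref{lem:local_rgg} with $\A=\aconn$ applies. For (ii), Bluetooth-PRM's roadmap coincides with $\Gbt(\X_n;r_n;c_n)$; here $\gamma>\gamma^{**}$ and $c_n>c_n^*$ together satisfy the hypotheses of Theorem~\ref{thm:rbg_con}, and Lemma~\ref{lem:local_rbg} then yields localizability. For (iii), the Soft-PRM roadmap is $\Gsoft(\X_n;r_n;\phi_n)$, and the radius $\gamma>(d+1)^{1/d}\gamma^*$ together with $\phi_n(z)=1-z/r_n$ is precisely the hypothesis of Theorem~\ref{thm:srgg_con}, after which Lemma~\ref{lem:local_srgg} applies. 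For (iv), the Embedded-PRM roadmap coincides with $\Gembed(\X_n;p_n)$; by Theorem~\ref{thm:span3}, $p_n=\omega(\log^d n/n)$ forces a.s.\ connectivity (as noted in the remark following that theorem), and Lemma~\ref{lem:local_regg} closes the loop.

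The main obstacle is really a bookkeeping one: each localization lemma carries a \emph{monotonicity} hypothesis on the parameter sequence (e.g.\ $c_n$ non-decreasing, $\phi_n$ monotone in $n$, $p_n$ non-decreasing), and these must be checked against the concrete parameter choices of the theorem. For (ii) the sequence $c_n=\sqrt{2\log n/\log\log n}$ is eventually monotone non-decreasing, and only the eventual behavior matters for an a.s.\ statement. For (iv), any $p_n=\omega(\log^d n/n)$ may be replaced, without loss of generality, by its running maximum, which is non-decreasing and preserves the asymptotic growth condition; this is permissible because connectivity is a monotone property so enlarging $p_n$ can only help. For (iii) the analogous point must be verified for $\phi_n(z)=1-z/r_n$ in the sense required by Lemma~\ref{lem:local_srgg}. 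Once these monotonicity technicalities are dispatched, the rest is a direct chain of citations.

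Finally, I would note that the query stage requires $s$ and $t$ themselves to attach to the connected roadmap component that Lemma~\ref{lem:complete_planners} produces, but this is already handled inside that lemma via the choice $r_{\text{query}}=\gamma(\log n/n)^{1/d}$ with $\gamma>\gamma^*$, so no additional argument is needed beyond observing that the same query radius is used across all four planners.
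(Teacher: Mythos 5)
Your proposal matches the paper's own (one-line) proof exactly: for each planner, it simply combines the relevant connectivity theorem (Theorems~\ref{thm:rgg_con}--\ref{thm:srgg_con}, and the connectivity consequence noted after Theorem~\ref{thm:span3}) with the corresponding localization lemma and Lemma~\ref{lem:complete_planners}. If anything you are more careful than the paper, which silently omits the monotonicity bookkeeping you flag --- e.g.\ that for fixed $z$ the function $\phi_n(z)=1-z/r_n$ is in fact decreasing in $n$, a point worth reconciling with the hypothesis of Lemma~\ref{lem:local_srgg}.
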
 

Item~\textbf{(i)} follows from combining Theorem~\ref{thm:rgg_con}
with the localization lemma for RGGs (i.e.,
Lemma~\ref{lem:local_rgg}), and Lemma~\ref{lem:complete_planners}. The
other items similarly follow.

\begin{remark}
  The conditions in \textbf{(i)} are not only sufficient but also
  necessary, according to Theorem~\ref{thm:rgg_con}.
\end{remark}

\begin{remark}
  The connection radius in~\textbf{(i)} is smaller by a factor of
  $2^{-1/d}$ than the one obtained by Janson et al.~\cite{JSCP15}, and
  smaller by a factor of $2^{-1/d}(d+1)^{-1/d}$ than the connection
  radius proposed by Karaman and Frazzoli~\cite{KF11} when no
  obstacles are present.
  % \new{When there are obstacles, then the aforementioned factor
  % should be multiplied by
  % $\left( \log n / \log (|\F|n) \right)^{1/d}$, where $n$ is the
  % number of (not necessarily free) samples used.}
  We also mention
  that, similarly to these two works, $r_n$ can be reduced by a factor
  of $|\F|^{1/d}$, with a slight modification to
  Theorem~\ref{thm:conn}.
\end{remark}

\subsection{Asymptotic (near-)optimality}
Given a path $\sigma$ denote its length by $|\sigma|$. We define the
property of asymptotic near-optimality and state the conditions under
which PRM and Embedded-PRM have this property.
\begin{definition}
  Suppose that $(\F, s, t)$ is robustly feasible. A path $\sigma^*$
  connecting $s$ to $t$ is \emph{robustly optimal} if it is a shortest
  path for which the following holds: for any $\eps > 0$ there exists a
  $\delta$-robust path~$\sigma$ such that
  $|\sigma|\leq (1+\eps)|\sigma^*|$ for some fixed~$\delta>0$.
\end{definition}
\begin{definition}
  A sampling-based planner ALG is \emph{asymptotically
    $\zeta$-optimal}, for a given $\zeta \geq 1$, if for every
  robustly-feasible problem $(\F,s,t)$ it follows that
  $|\sigma_n|\leq \zeta |\sigma^*|+o(1)$ \as, where $\sigma_n$ denotes
  the solution returned by ALG with $n$ samples. A planner that is
  asymptotically $1$-optimal is simply called \emph{asymptotically
    optimal}.
\end{definition}

\begin{lemma}\label{lem:complete_planners}
  Let
  $\textup{ALG}\in \{\textup{PRM}, \textup{Bluetooth-PRM},
  \textup{Soft-PRM}, \textup{Embedded-PRM}\}$
  with a selection of parameters for which the corresponding random
  graph $\G_n$ be localizable for $\abd^\zeta$, for $\zeta \geq 1$.
  Then \textup{ALG} is asymptotically $\zeta$-optimal.
\end{lemma}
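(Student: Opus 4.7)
The plan is to mirror the proof of probabilistic completeness, but invoke Theorem~\ref{thm:bd} (bounded stretch in general domains) in place of Theorem~\ref{thm:conn}, while being careful to absorb the slack between a robustly optimal path and nearby $\delta$-robust approximations.

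Fix $\eps > 0$. Since $(\F,s,t)$ is robustly feasible with robustly optimal path $\sigma^*$, there exists a $\delta$-robust path $\sigma$ connecting $s$ to $t$ with $|\sigma| \leq (1+\eps)|\sigma^*|$ for some $\delta=\delta(\eps)>0$. Set $\delta':=\delta/2$ and $\Gamma := \B_{\delta'}(\sigma)$. Exactly as in the proof of Lemma~\ref{lem:complete_planners}, $s,t \in \Gamma$, $\B_{\delta'}(\Gamma) \subseteq \F$, so $\Gamma$ is $\delta'$-safe. Moreover, since $\sigma\subset\Gamma$ is a curve connecting $s$ to $t$ of length $(1+\eps)|\sigma^*|$, any two points $s',t'\in\Gamma$ with $\|s'-s\|_2, \|t'-t\|_2 = o(1)$ satisfy $\|s'-t'\|_\Gamma \leq |\sigma| + o(1) \leq (1+\eps)|\sigma^*| + o(1)$.

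Next I would bridge from $s,t$ to nearby vertices of $\X_n \cap \Gamma$. Exactly as at the end of the proof of Lemma~\ref{lem:complete_planners}, during the query stage \texttt{nearest\_neighbors} with radius $r_{\text{query}}=\gamma\left(\frac{\log n}{n}\right)^{1/d}$ connects $s$ (resp.\ $t$) to some $s_n \in \X_n \cap \B_{r_{\text{query}}}(s)$ (resp.\ $t_n\in \X_n \cap \B_{r_{\text{query}}}(t)$) almost surely; these neighbors lie in $\Gamma$ for $n$ large since $r_{\text{query}}\to 0$. By the assumption that $\G_n$ is localizable for $\abd^\zeta$ together with Theorem~\ref{thm:bd}, we obtain almost surely
\begin{equation*}
\dist\bigl(\G_n(\B_{\delta'}(\Gamma)),\, s_n,\, t_n\bigr) \leq \zeta\,\|s_n-t_n\|_\Gamma + o(1).
\end{equation*}
Since $\B_{\delta'}(\Gamma) \subseteq \F$, every edge on this graph path is automatically collision free, and therefore survives the \texttt{collision\_free} test in the preprocessing stage. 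Prepending the edge $(s,s_n)$ and appending $(t_n,t)$, both of length $o(1)$ and collision-free for $n$ large (as their endpoints lie in $\F$ and $r_{\text{query}}\to 0$), yields an admissible path $\sigma_n$ returned by ALG of length at most
\begin{equation*}
|\sigma_n| \leq \zeta\,\|s_n-t_n\|_\Gamma + o(1) \leq \zeta(1+\eps)|\sigma^*| + o(1) \quad \text{\as}
\end{equation*}

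The remaining, and only slightly delicate, step is to pass from the $\eps$-dependent bound to the claimed $|\sigma_n|\leq \zeta|\sigma^*|+o(1)$ \as Here the $o(1)$ term a priori depends on $\eps$ (through $\delta(\eps)$ and the tube $\Gamma$). I would handle this by applying the above argument to a countable sequence $\eps_k \downarrow 0$: on the intersection of the corresponding almost-sure events (still an almost-sure event), we obtain $\limsup_{n\to\infty}|\sigma_n| \leq \zeta(1+\eps_k)|\sigma^*|$ for every $k$, and letting $k\to\infty$ gives $\limsup_{n\to\infty}(|\sigma_n|-\zeta|\sigma^*|) \leq 0$ almost surely, which is exactly $|\sigma_n| \leq \zeta|\sigma^*| + o(1)$ \as The main conceptual obstacle is precisely this uniformization of the $o(1)$ over $\eps$; everything else is a direct reduction to Theorem~\ref{thm:bd} and the query-stage argument already used for completeness.
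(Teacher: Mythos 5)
Your argument coincides with the paper's through the main body: fix $\eps$, take the $\delta$-robust path $\sigma_\eps$ with $|\sigma_\eps|\leq(1+\eps)|\sigma^*|$, set $\Gamma=\B_{\delta/2}(\sigma_\eps)$, connect $s,t$ to nearby roadmap vertices in the query stage, and invoke Theorem~\ref{thm:bd} via localizability to obtain $|\sigma_n|\leq\zeta(1+\eps)|\sigma^*|+o(1)$ \as This part is fine (and you spell out the collision-freeness of the returned path more explicitly than the paper does).

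The gap is in your final uniformization step. Under this paper's convention, the phrase ``occurs \as'' for an event $B$ means $\limn\Pr[B(A_n)]=1$; it is an asymptotic, in-probability statement, not a probability-one event on a single underlying space. Consequently, (a) the quantity $\limsup_{n\to\infty}|\sigma_n|$ is not controlled by these statements --- for each fixed $n$ there is a failure event of small but positive probability, and nothing couples the roadmaps across different $n$ --- and (b) the claim that ``the intersection of the corresponding almost-sure events is still an almost-sure event'' is false for countably many events in this sense: if $E_k^{(n)}$ satisfies $\Pr[E_k^{(n)}]\to 1$ as $n\to\infty$ for each fixed $k$, the convergence need not be uniform in $k$, and $\Pr[\bigcap_k E_k^{(n)}]$ can fail to tend to $1$ (it can even equal $0$ for every $n$). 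So you cannot take $k\to\infty$ inside a single almost-sure event. The paper closes this gap by a diagonalization in which only one value of $\eps$ is used per $n$: letting $\P(n,\eps)$ denote the success event and $\eps_i=1/i$, choose $n_i$ minimal with $\Pr[\P(n,\eps_i)]\geq 1-\eps_i$ for all $n\geq n_i$, and for each $n$ set $i(n)$ with $n_{i(n)}\leq n<n_{i(n)+1}$; then $\Pr[\P(n,\eps_{i(n)})]\to 1$ while $\eps_{i(n)}=o(1)$, which yields $|\sigma_n|\leq\zeta|\sigma^*|+o(1)$ \as You should replace your intersection-and-$\limsup$ argument with this (or an equivalent) diagonal selection of $\eps=\eps(n)\to 0$.
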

\begin{proof}
  Suppose that $(\F,s,t)$ is robustly feasible. Denote by $\sigma^*$
  the robustly-optimal path. By definition, for every $\eps > 0$ there
  exists $\delta > 0$ and a $\delta$-robust path $\sigma_\eps$ such that
  $|\sigma_\eps|\leq (1+\eps)|\sigma^*|$. For now we will consider a
  fixed $\eps > 0$ and the corresponding path $\sigma_\eps$.

  Similarly to the proof of Lemma~\ref{lem:complete_planners}, Let
  $\delta':=\delta/2$ and let $\Gamma:={\B}_{\delta'}(\sigma_\eps)$. The
  query stage will succeed \as~and $s,t$ will be connected to some two
  vertices $x,y\in \X_n\cap \Gamma$ such that
  $\|s-x\|_2,\|t-y\|_2\leq R_n$. Observe that (\as)
  \[\|x-y\|_\Gamma\leq \|s-t\|_\Gamma+2R_n\leq |\sigma_\eps|+2R_n\leq
  (1+\eps)|\sigma^*|+o(1). \]
  Using this observation, together with Theorem~\ref{thm:bd}, and with
  the fact that $\G_n$ is localizable, we deduce that ALG finds a
  solution $\sigma_n$, which is contained in $\B_{\delta'}(\Gamma)$,
  such that $|\sigma_n|\leq \zeta(1+\eps)|\sigma^*|+o(1)$ \as

  We will now eliminate the $\eps$ factor from the distance bound.
  For a given $\eps >0$ and $n$, denote by $\P(n,\eps)$ the event
  \mbox{$\dist(\G_n(\Gamma),s,t) \leq \zeta(1+\eps)|\sigma^*| +o(1)$}.
  For every positive integer $i$ define $\eps_i:=1/i$. Let $n_i$ be
  the minimal integer such that for every $n\geq n_i$ it follows that
  $\Pr[\P(n,\eps_i)]\geq 1-\eps_i$. For a given $n$, let $i(n)$ be
  such that $n_i\leq n < n_{i+1}$. It follows that 
  \[\limn\Pr[\P(n,\eps_{i(n)})]\geq \limn 1-\eps_{i(n)}=1.\]
  As $\eps_{i(n)}=o(1)$ we may deduce that ALG is asymptotically
  $\zeta$-optimal.
\end{proof}
\begin{theorem}\label{thm:optimal_final}
  For $d\geq 2$ we have the following results:
  \begin{description}
  \item[(i)] \textup{PRM($r_n$)} is asymptotically $\zeta$-optimal for
    $r_n=\gamma\left(\frac{\log n}{n}\right)^{1/d}$, where
    $\gamma>\gamma^*$, and some constant $\zeta$;
  \item[(ii)] \textup{Embedded PRM($p_n$)} is asymptotically optimal
    for $p_n=\omega\left(\frac{\log^d n}{n}\right)$;
  \end{description}
\end{theorem}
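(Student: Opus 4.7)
The plan is to apply the localization--tessellation framework in the same two-step manner used for probabilistic completeness: first lift the RGG/REGG bounded-stretch results (Theorems~\ref{thm:span2} and~\ref{thm:span3}) from the free cube to arbitrary $\rho$-safe regions via the localization lemmas and Theorem~\ref{thm:bd}, and then invoke the asymptotic $\zeta$-optimality lemma to move from graphs to planners. The observation that makes this work is that bounded stretch is a monotone graph property in the sense required by Definition~\ref{def:localizable} (adding edges to a graph never increases graph distances), so Lemmas~\ref{lem:local_rgg} and~\ref{lem:local_regg} apply directly.

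For item \textbf{(i)} I would start from Theorem~\ref{thm:span2}, which gives a constant $\zeta$ such that the RGG with $r_n=\gamma(\log n/n)^{1/d}$, $\gamma>\gamma^*$, satisfies $\dist(\G_n,x,y)\leq \zeta\|x-y\|_2$ almost surely for all pairs with $\|x-y\|_2=\omega(r_n)$; this is precisely the statement that $\G_n\in\abd^\zeta$ holds up to an $o(1)$ additive slack on short pairs. By Lemma~\ref{lem:local_rgg} the property $\abd^\zeta$ is localizable, hence by Theorem~\ref{thm:bd} for every $\rho$-safe $\Gamma$ and every $x,y\in\X_n$ in the same component of $\Gamma$,
\[
\dist\bigl(\G_n(\B_\rho(\Gamma)),x,y\bigr)\leq \zeta\,\|x-y\|_\Gamma+o(1)\quad\text{\as}
\]
Plugging this into the asymptotic $\zeta$-optimality lemma applied to the PRM roadmap (whose underlying graph equals $\Gdisk(\X_n;r_n)\cap\F$) yields $|\sigma_n|\leq \zeta(1+\eps)|\sigma^*|+o(1)$ for every $\eps>0$, and the standard $\eps\downarrow 0$ diagonal argument used at the end of the proof of the optimality lemma eliminates the $\eps$.

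Item \textbf{(ii)} is essentially identical but with $\zeta=1$. I would invoke Theorem~\ref{thm:span3}, which says that for $\Gembed(\X_n;p_n)$ with $p_n=\omega(\log^d n/n)$ every pair satisfies $\dist(\G_n,x,y)\leq \|x-y\|_2+o(1)$ almost surely, so $\G_n\in\abd^{1}$ (up to the additive $o(1)$ slack already built into Theorem~\ref{thm:bd}). Since $p_n$ is, after possibly replacing by $\omega(\log^d n/n)$ that is non-decreasing in $n$, monotone, Lemma~\ref{lem:local_regg} gives localizability, and Theorem~\ref{thm:bd} lifts the bound to arbitrary $\rho$-safe domains. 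The asymptotic $\zeta$-optimality lemma then gives Embedded-PRM asymptotic optimality.

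The main subtle point I anticipate is reconciling the ``$\|x-y\|_2=\omega(r_n)$'' hypothesis of Theorem~\ref{thm:span2} with the pairwise-for-all-vertices definition of $\abd^\zeta$. This is exactly what the additive $o(1)$ in Theorem~\ref{thm:bd} absorbs: the segments $[x_i,x_{i+1}]$ produced in the tessellation have length of order $\eps$ within a cube scaled to $[0,1]^d$, hence after rescaling are of order $\omega(r_n)$, so Theorem~\ref{thm:span2} applies to each local subgraph and the short-pair regime never arises in the telescoping sum. The only other technicality is to verify that bounded stretch truly is monotone under edge addition---which is immediate because $\dist$ is non-increasing in the edge set---so that the hypotheses of Lemmas~\ref{lem:local_rgg} and~\ref{lem:local_regg} are met.
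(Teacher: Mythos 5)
Your proposal matches the paper's approach exactly: the paper offers no explicit proof of this theorem, relying (by analogy with the argument given after Theorem~\ref{thm:complete_final}) on precisely the combination you describe --- the bounded-stretch results of Theorems~\ref{thm:span2} and~\ref{thm:span3}, the localization Lemmas~\ref{lem:local_rgg} and~\ref{lem:local_regg} applied to the monotone property $\abd^\zeta$, the tessellation Theorem~\ref{thm:bd}, and the asymptotic $\zeta$-optimality lemma. Your explicit treatment of the $\|x-y\|_2=\omega(r_n)$ hypothesis in Theorem~\ref{thm:span2} addresses a subtlety the paper leaves implicit, but it does not change the route.
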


\begin{remark}
  The conditions in \textbf{(i)} are not only sufficient but also
  necessary, as without them PRM will be incomplete.
\end{remark}
 
%%%%%%%%%%%%%%%%%%%%%%%%%%%%%%%%%%%%%%%%%%%%%%%%%%%%%
%Evaluation
%%%%%%%%%%%%%%%%%%%%%%%%%%%%%%%%%%%%%%%%%%%%%%%%%%%%%
\section{Evaluation}\label{sec:evaluation}
In this section we present experiments demonstrating the behavior of
RGGs and SRGGs in the absence and in the presence of obstacles.  We
then proceed to compare the Soft-PRM and PRM algorithms.  For each
model, and each algorithm, we use the minimal parameters that are
required in order to ensure connectivity.  For our experiments we used
the Open Motion Planning Library (OMPL)~\cite{SMK12} with Randomly
Transformed Grids (RTG)~\cite{AKS14} as our nearest-neighbor (NN) data
structure.  RTG were shown to outperform other NN libraries for
several motion-planning algorithms~\cite{KSH15}.  All experiments were
run on a 2.8GHz Intel Core~i7 processor with 8GB of memory.  Results
are averaged over 100 runs, and computed for dimensions $d = 2, 6 , 9$
and~$12$.  Additionally, when results for different dimensions behave
similarly, we present only plots for $d=2$ and $d=12$.\vspace{20pt}

\noindent \textbf{Connectivity and stretch in the unit cube.}  We
begin by reporting the number of nodes that are \emph{not} in the
largest connected component (CC) for RGG and SRGG in the absence of
obstacles.  Clearly, when the graph is connected, this number is zero.
One can see (Fig.~\ref{fig:conn}) that as the number of nodes
increases, the number of nodes not in the largest CC approaches zero.
Additionally, the two models exhibit very similar trends.

We continue to asses how increasing the number of nodes affects the
stretch of the graphs. For each such $n$, we sampled $m = 50$ vertices
and computed the stretch for every pair of sampled vertices.  We then
report on the maximal stretch obtained among all $O(m^2)$ pairs of
nodes which gives a rough approximation of the average stretch of the
graph.  Results are depicted in Fig.~\ref{fig:stretch}.  Observe that
typically the stretch decreases as the number of nodes increases and
that RGG and SRRG behave very similarly. \vspace{5pt}

\noindent\textbf{Connectivity and stretch of RGGs in general domains.} 
The set of experiments come to demonstrate Theorems~\ref{thm:conn}
and~\ref{thm:bd}.  Namely, that the asymptotic behavior of RGGs with
respect to connectivity and stretch is maintained in the presence of
obstacles.  We constructed the following toy-scenario where we
subdivided the $d$-dimensional unit hypercube by halfing it along each
axis.  In the center of each one of the $2^d$ sub-cubes, we inserted
an axis-aligned hypercube as an obstacle.  The size of the obstacle
was chosen such that the obstacles covered $25\%$ of the unit
hypercube.  See Figure~\ref{fig:obs} for a visualization in two and
three dimensions.

\begin{figure}[t]
  \centering 
  	\subfloat [\sf ] 
  	{
	    \includegraphics[width=0.44\textwidth]{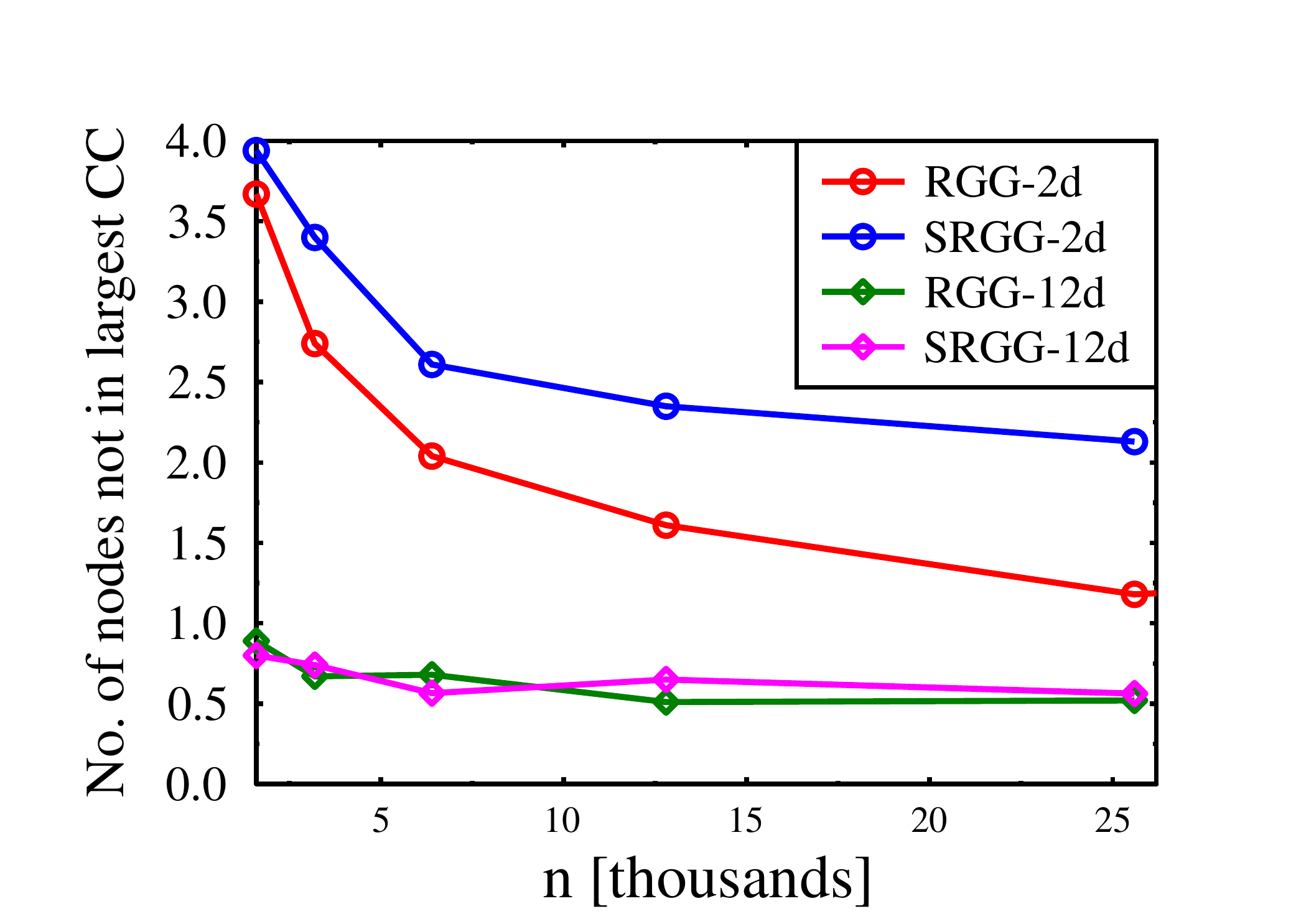}
		\label{fig:conn} 
	} 
%	\hspace{1mm}
	\subfloat [\sf ] 
	{
	    \includegraphics[width=0.44\textwidth]{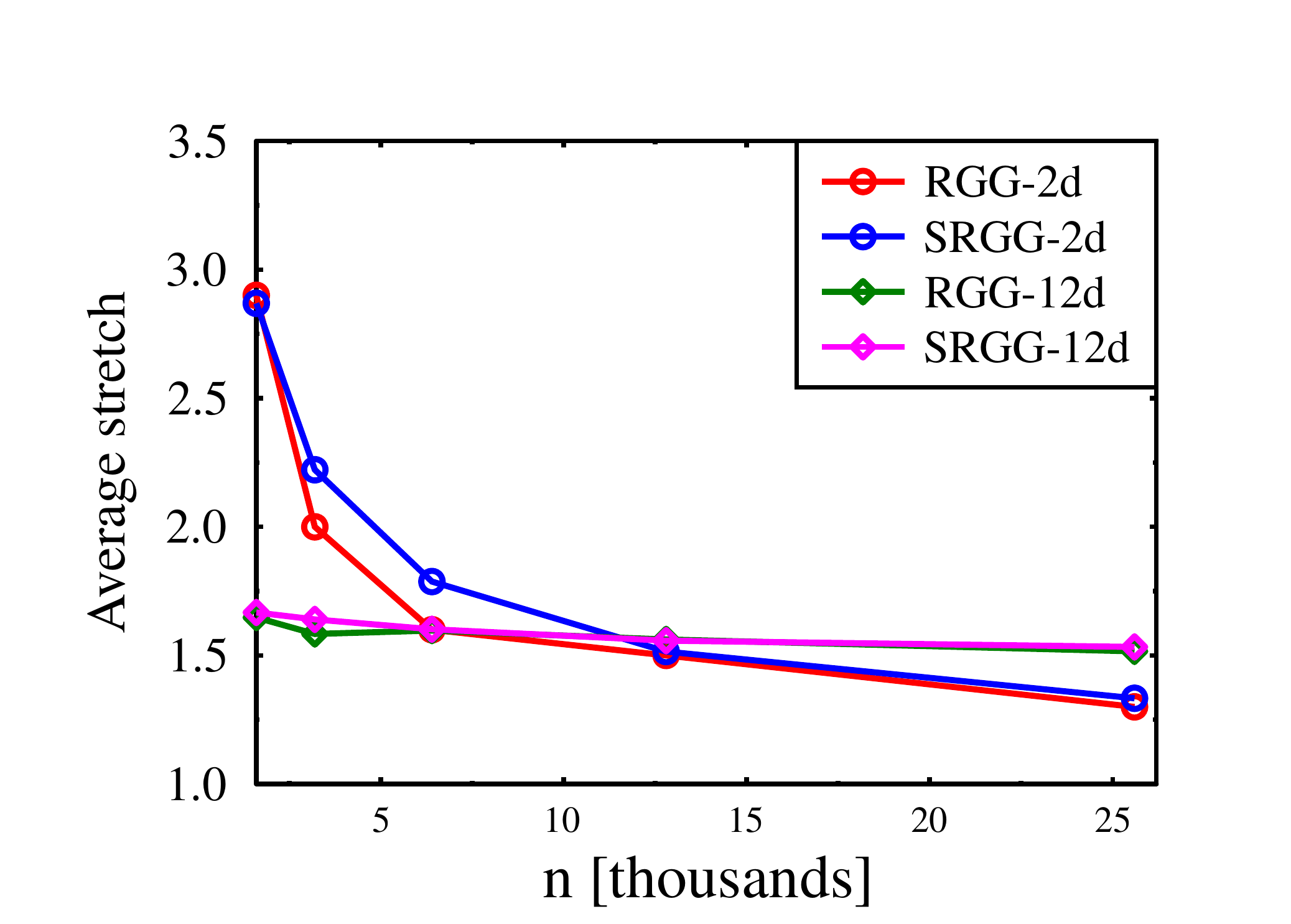}
		\label{fig:stretch} 
	}
	\caption{ \sf 	(a) Connectivity and (b) stretch in the absence of obstacles. 
%					for the RGG and SRGG models in $d=2,12$ dimensions.
%					(a) Number of nodes that are \emph{not} in the 
%					largest connected component as a function of the 
%					number of samples.
%					(b) Average stretch as a function of the 
%					number of samples.
} 
	\label{fig:no_obs}
\end{figure}
\begin{figure}[t]
  \centering 
  	\subfloat [\sf 2d] 
  	{
	    \includegraphics[width=0.4\textwidth]{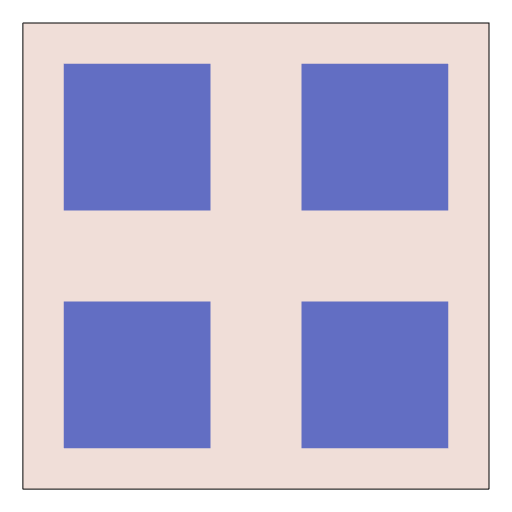}
		\label{fig:obs2D} 
	} 
	\hspace{5mm}
	\subfloat [\sf 3d] 
	{
	    \includegraphics[width=0.4\textwidth]{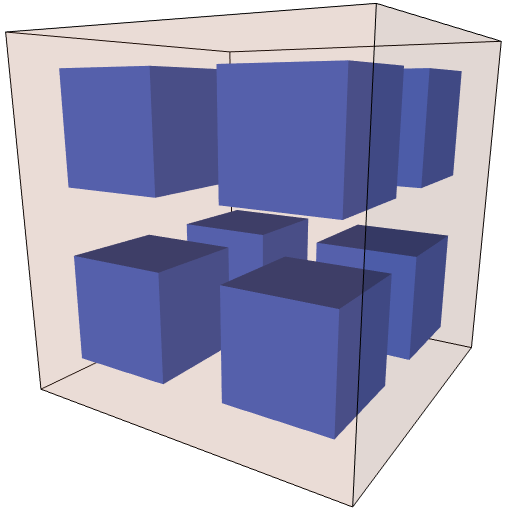}
		\label{fig:obs3D} 
	}
	\caption{ \sf Visualization of toy scenario for
				(a) two and (b) three
				dimensions.
				Obstacles are depicted in blue.} 
	\label{fig:obs}
\end{figure}

We report on the results for RGGs (Fig.~\ref{fig:res_obs}) and note that similar results were observed for SRGGs.
%and for SRGG computed using the critical connection radius in Fig.~\ref{fig:res_obs} and~\ref{fig:res_obs2}, respectively.
Stretch was computed between the origin $(0,\ldots,0)$ and the center
$(0.5,\ldots,0.5)$.
%We chose these points as we can exactly compute the shortest path between these points and it is not trivial (namely, not a straight line segment). 
Observe that for all dimensions, the graph is asymptotically connected
and the stretch tends to one.

\begin{figure}[t]
  \centering 
  	\subfloat [\sf ] 
  	{
	    \includegraphics[width=0.48\textwidth]{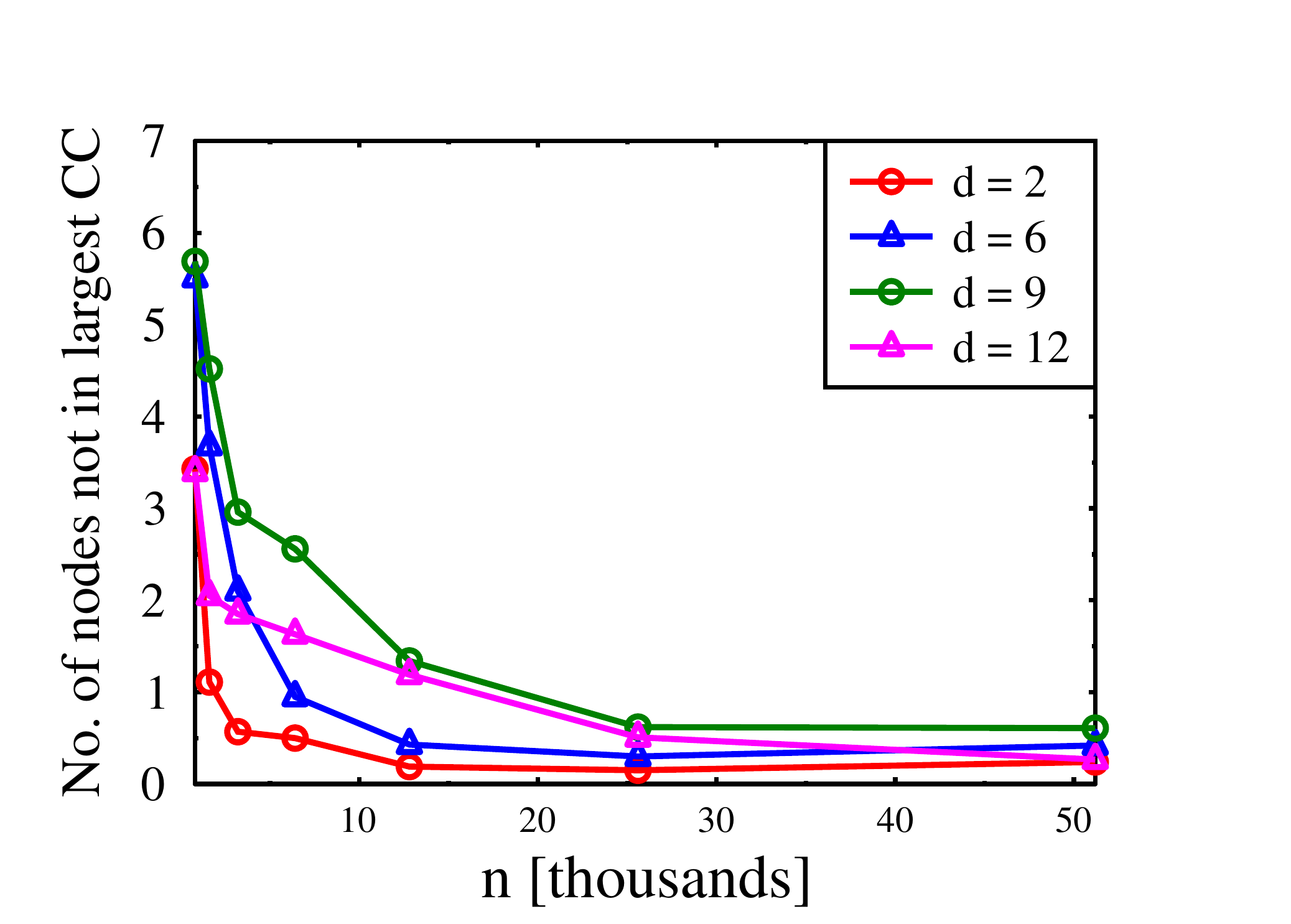}
		\label{fig:PRM_num_of_nodes_in_largest_cc_obs} 
	} 
	%\hspace{10mm}
	\subfloat [\sf ] 
	{
	    \includegraphics[width=0.48\textwidth]{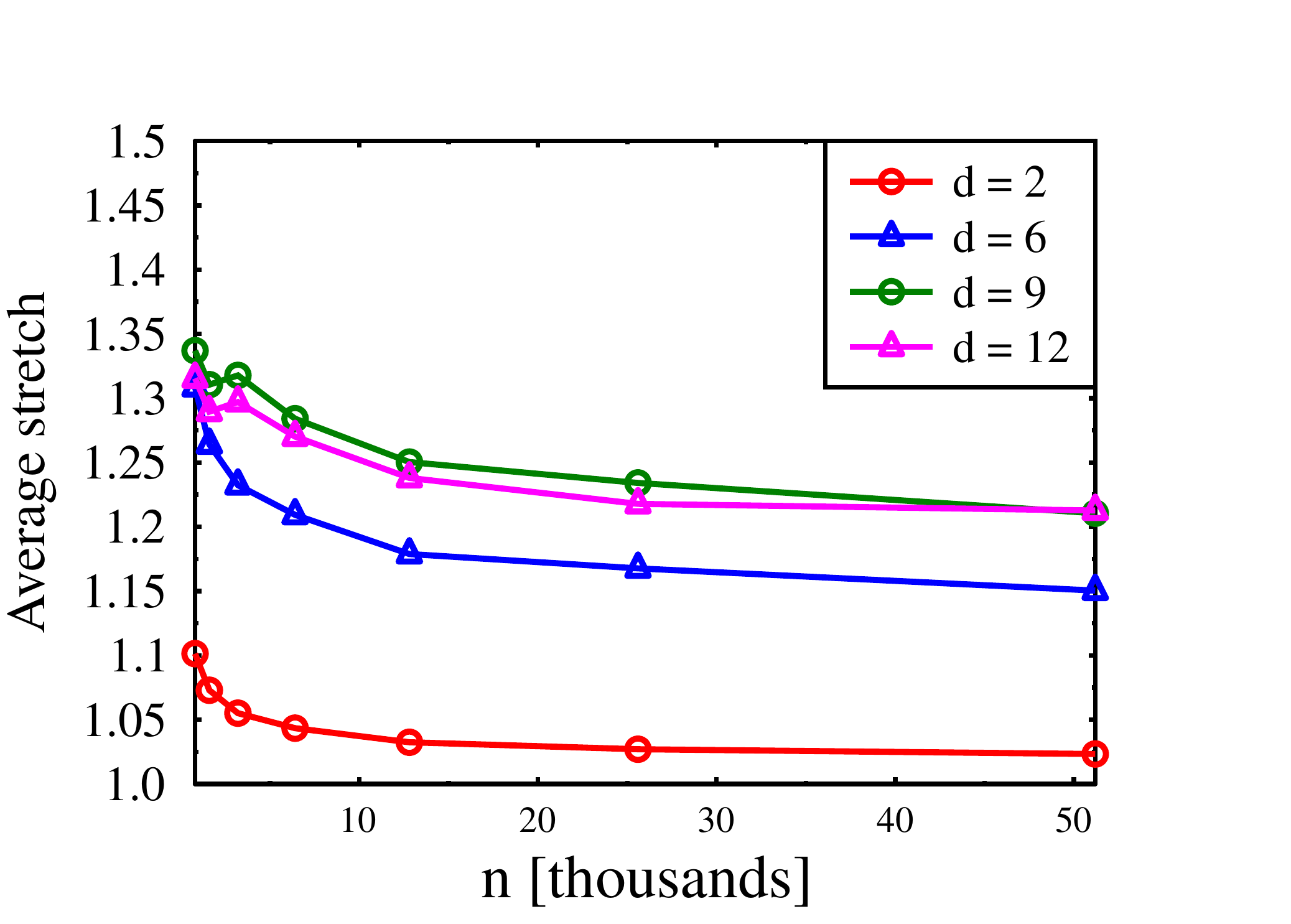}
		\label{fig:prm_stretch_obs} 
	}
	\caption{ \sf (a) Connectivity and (b) stretch for RGGs in the toy scenarios.
%	 (Fig.~\ref{fig:obs}).
} 
	\label{fig:res_obs}
\end{figure}

%\begin{figure}[t]
%  \centering 
%  	\subfloat [\sf ] 
%  	{
%	    \includegraphics[width=0.24\textwidth]{}
%		\label{fig:SoftPRM_num_of_nodes_in_largest_cc_obs} 
%	} 
%	%\hspace{10mm}
%	\subfloat [\sf ] 
%	{
%	    \includegraphics[width=0.24\textwidth]{}
%		\label{fig:SoftPRN_stretch_obs} 
%	}
%	\caption{ \sf (a) Connectivity and (b) stretch for SRGGs in the toy scenarios depicted (Fig.~\ref{fig:obs}).} 
%	\label{fig:res_obs2}
%\end{figure}

\vspace{5pt}
\noindent \textbf{Motion-planning algorithms.}  Finally, we compare
PRM and Soft-PRM as sampling-based planners for rigid-body motion
planning on the Home scenario (Fig.~\ref{fig:home}) provided by the
OMPL distribution.\footnote{We used a robot which was scaled down to
  $80\%$ the size of the robot provided by the OMPL distribution.}
This six-dimensional configuration space, SE3, includes both
translational and rotational degrees of freedom.  Thus, it is not
clear if the theoretic results presented in this paper still hold in
this non-Euclidean space.

To apply the results, a key question one has to address is how to
choose the connection radius when using a non-Euclidean metric.  Let
$x$ be a point sufficiently far from the boundary and let
$r_n = \gamma \left( \log n / n \right)^{1/d}$ be the connection
radius used.  When using the Euclidean metric, the average number of
neighbors of $x$ is
$\text{nbr}(n) = \left( 2^{d - 1} / d \right) \cdot \log n$.  Thus,
for each value of $n$, we sampled 100 random points and, for each one,
computed the radius $r$ for which the point had $\text{nbr}(n)$
neighbors.  Finally, we used the average value over all such points in
the experiments.

Figure~\ref{fig:mp_res} presents the cost of the solution produced by
each algorithm as a function of the running time.  Similar to the
previous tests, both algorithms exhibit similar behavior, and the cost
obtained approaches the optimum as the number of nodes increases.
%
%\removed{Interestingly, this is done using a different number of
%samples employed by each algorithm for a given time: the Soft-PRM
%typically reaches a larger number of samples than the PRM algorithm.
%We obtained similar results for different scenarios (results omitted).}

\begin{figure}[t]
  \centering 
  	\subfloat [\sf ] 
  	{
	    \includegraphics[trim=0.cm 0.0cm 1.5cm 1.5cm,height=3.4cm]{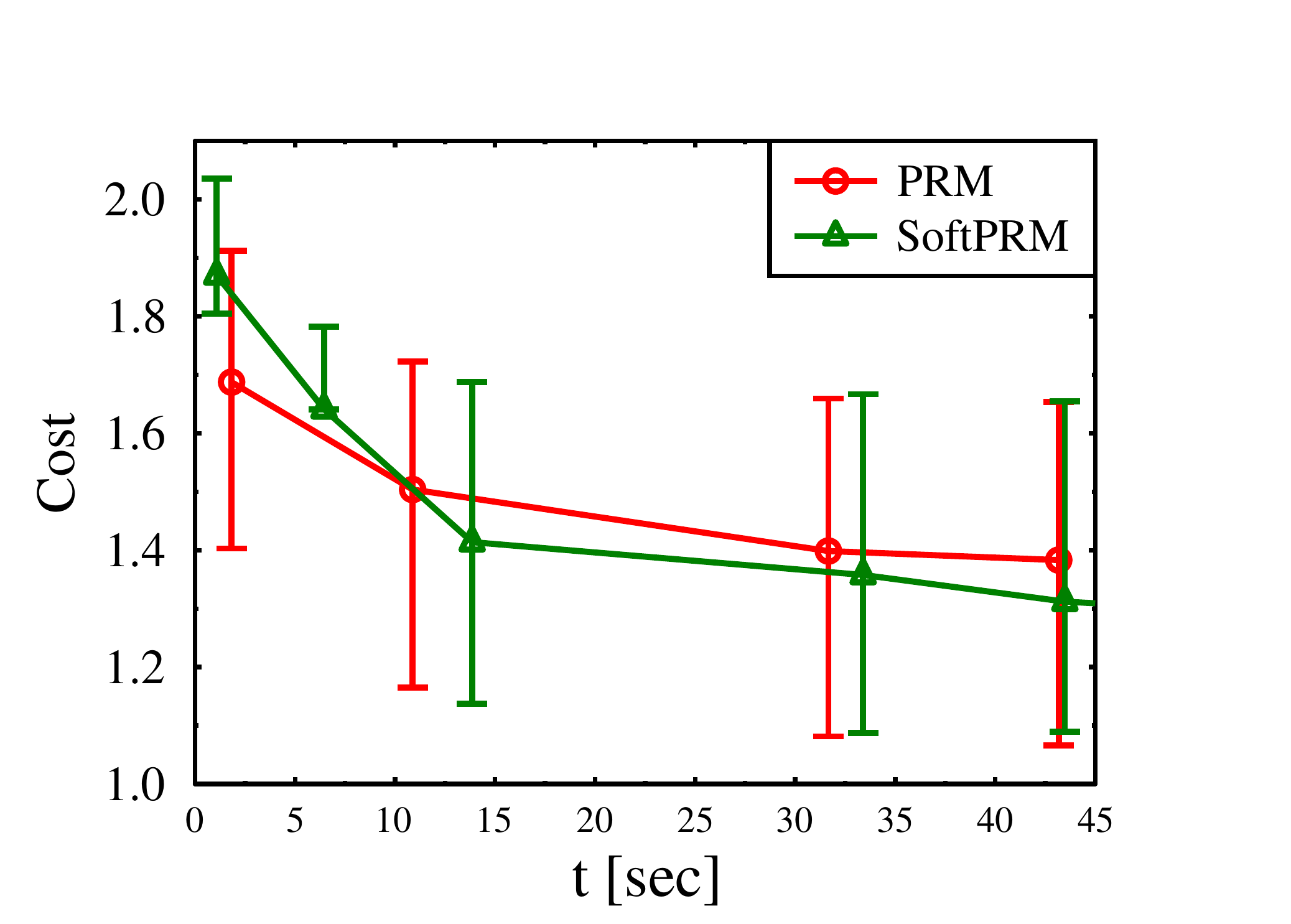}
		\label{fig:mp_res} 
	} 
	\subfloat [\sf ] 
	{
	    \includegraphics[height=3.3cm]{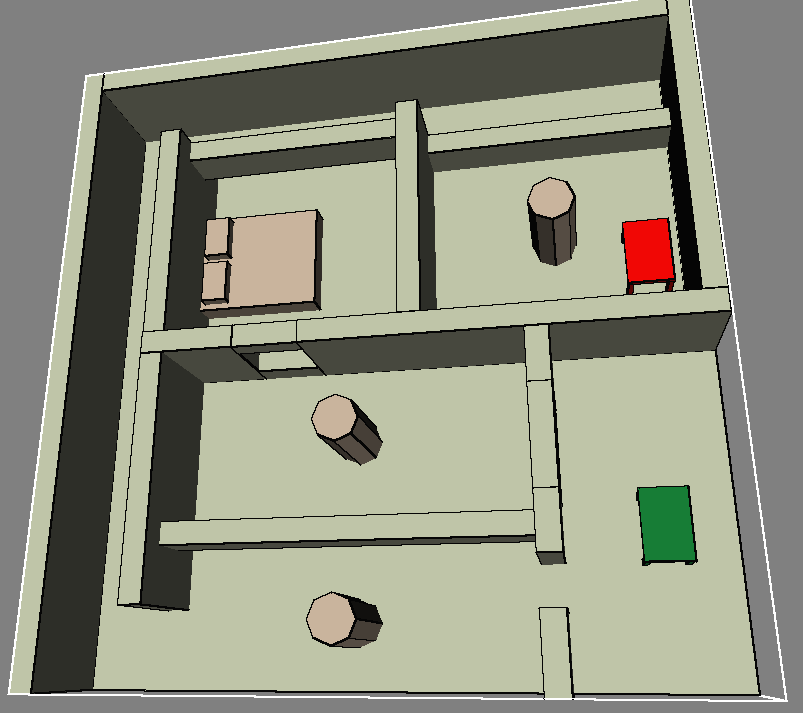}
		\label{fig:home} 
	}
	\caption{ \sf  (a) Average quality obtained by the PRM and Soft-PRM algorithms as a function of the running times in the Home scenario (b).
	Error bars in (a) denote the 20'th and 80'th percentile. Cost is
    normalized such that the unit cost represents the optimum. 
%	 (Fig.~\ref{fig:obs}).
  }
	\label{fig:mp}
\end{figure}

\section{Discussion}\label{sec:discussion}
We conclude this paper by describing a connection between the
Bluetooth-PRM and Soft-PRM algorithms and approximate
\emph{nearest-neighbor (NN) search} in sampling-based motion
planning. We then proceed to describe future research directions that
follow from our work.\vspace{5pt}

\noindent\textbf{Approximate NN search in motion planning.}
NN search is a key ingredient in the implementation of sampling-based
planners (see, e.g., line~3 in Alg.~\ref{alg:soft}). Typically, exact
NN computation, where all the neighbors of a query point in a given
area are reported, tends to be slow in high dimensions, due to the
``curse of dimensionality''~\cite{HIM12}. Thus, most implementations
of motion planners involve approximate NN libraries (see,
e.g.,~\cite{AMNSW98,KSH15,flann}), which are only guaranteed to return
a subset of the neighbors of a given query point (see, e.g.,
\cite{AKS14, B75,FBF77,IM98}).

However, existing proofs of probabilistic completeness and asymptotic
optimality of standard planners (see, e.g.,~\cite{JSCP15, KF11,
  KSLO96}) assume that NNs are computed exactly. Without these
assumptions, the proofs no longer hold (although it may be possible to
modify them to take this into account).

The analysis given in this paper bridges this gap: PRM, when
implemented with approximate NN search, can be modeled as a
Bluetooth-PRM or Soft-PRM.  Thus, the former algorithm is
probabilistically complete by using the probabilistic completeness of
the latter (see Theorem~\ref{thm:complete_final}).  \vspace{5pt}

\noindent \textbf{Future work.} %  In this work we presented the
% localization-tessellation framework for the extension of results
% concerning asymptotic properties of RGGs to the field of
% sampling-based motion planning.  Using the framework we presented
% alternative proofs for the probabilistic completeness and asymptotic
% optimality of PRM, which follow from similar properties of a related
% RGG model. We also introduced a new algorithm that is inspired by a
% more general model of RGGs and showed that it is probabilistically
% complete.
The literature of RGGs is rich and encompasses many models which were
not addressed in this work due to lack of space (see,
e.g.,~\cite{BroETAL14, BBSW09, Wade09}).  Such models can be used to
analyze existing planners and might lead to the development of novel
planners.

In this work our focus was on Euclidean configuration spaces and the
standard Euclidean distance. We mention that several works on RGGs
consider different metrics in the Euclidean space (see,
e.g.,~\cite{AppRus02, Pen99}). Such results can be imported to the
setting of motion planning using our framework, with slight
modification of the proofs.

Perhaps a more urgent issue involves the analysis of exiting planners
in complex configuration spaces. To the best of our knowledge the
behavior of standard planners such as PRM and RRT* is not well
understood for non-Euclidean spaces, even for the simple case of a
rigid-body robot translating and rotating in a three-dimensional
workspace. We believe that several results involving RGGs in complex
domains can shed light on this question.  For instance,
Penrose~\cite{Pen97} considers the case where points are sampled on a
torus, whereas Penrose and Yukich~\cite{PenYuk13} study the setting of
points on a manifold embedded in Euclidean space.

\appendix

\section*{Appendix: Connectivity of SRGGs}
Recall that an SRGG $\G_n:=\Gsoft(\X_n;r_n;\phi_n)$ represents a graph
such that for every two $x,y\in \X_n$, there is an edge in $\G_n$ with
probability $\phi_n (\|x-y\|_2)$ if $ \|x-y\|_2 \leq r_n$.  We refer
to $\phi_n:\dR^+ \rightarrow [0,1]^d$ as the \emph{connection
  function}. 

This appendix is devoted to the proof of
Theorem~\ref{thm:srgg_con}. Namely, we show that for
\[r_n=\rtrs,\quad \gamma >(d+1)^{1/d}\gamma^*\]
and for every $z\in \dR^+$ the connection function is defined to be
\[
\phi_n(z)= \begin{cases}
  0  & \quad \textup{if }z > r_n,\\
  1-\frac{z}{r_n}  & \quad \textup{if }z\leq r_n,\\
\end{cases}\]
the SRGG $\G_n$ is connected \as For the simplicity of presentation we
will use the notation $r:=r_n, \phi:=\phi_n$, which will be fixed
throughout the text. Furthermore, we assume that $n$ is large enough
so that $r<1/2$.

A ingredient of the proof is the following statement by
Penrose~\cite{Pen13}. Define 
\[ I_n = n \int_{x \in [0,1]^d} \exp \left( -n \int_{y \in [0,1]^d}
  \phi(y-x)\dy \right)\dx.\]
If $\limn I_n = 0$ then $\G_n$ is connected \as Thus, it suffices to
show that $I_n$ tends converges to $0$.

As is often the case with random geometric graphs (see
e.g.~\cite{Pen03, Pen13}), we will have to carefully consider boundary
effects, which result from samples that lie close to the boundary of
$[0,1]^d$.  Thus, before computing $I_n$ for our connection functions,
we introduce some notations that will allow us to handle such cases
more easily.

Recall that the $d$-dimensional ball of radius $r$, centered at
$x\in \dR^d$ is denoted by
$\B_r(x)=\left\{y\in \dR^d\middle| \|x-y\|_2\leq r\right\}$.
Given $x\in [0,1]^d$ define
$\tilde{\B}_{r}(x)=\B_{r}(x)\cap [0,1]^d$. Additionally, set
$\nu_d = \theta_d r^d$ to be the volume of $\B_r$.

We subdivide $[0,1]^d$ into the regions $R_0 \ldots R_d$: for
$0\leq j\leq d$ let $R_j$ to be the set of points for which $B_r(x)$
intersects the boundary of $[0,1]^d$ along exactly $j$ axis-aligned
hyper-planes. We bound the volume of $\tilde{\B}_r(x)$ according to
the region $R_j$ such that $x\in R_j$. For instance, when $d=2$, the
regions $R_0, R_1, R_2$ are the set of points $x$ where the disk
$B_r(x)$ does not intersect $[0,1]^d$ at all, intersects $[0,1]^d$
along exactly one line, or intersects $[0,1]^d$ along two lines,
respectively (see Figure~\ref{fig:regions}).

\begin{claim}\label{clm:volume}
  For every $0\leq j\leq d, x \in R_j$ we have that
  $\nu_d / 2^{j} \leq |\tilde{\B}_r(x)|$.
\end{claim}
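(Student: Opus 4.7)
The plan is to show the bound by induction on $j$, after first reducing to a canonical form via axis symmetries. Observe that since we assume $r<1/2$ (as noted earlier in the appendix), the ball $\B_r(x)$ cannot simultaneously intersect the two boundary hyperplanes $y_i=0$ and $y_i=1$ of the same axis $i$ (as this would force the diameter $2r\ge 1$). Hence when $x\in R_j$ the ball crosses at most one boundary hyperplane per axis, and there are exactly $j$ such axes. By reflecting $[0,1]^d$ across the hyperplanes $y_i=1/2$ for each offending coordinate (which is a measure-preserving isometry) and possibly renaming axes, we may assume without loss of generality that the $j$ hyperplanes in question are $y_1=0,\ldots,y_j=0$. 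In this case $x_i\in[0,r]$ for $i\le j$ and $x_i\in[r,1-r]$ for $i>j$, and
\[
\tilde{\B}_r(x) \;=\; \B_r(x)\cap\bigcap_{i=1}^j\{y_i\ge 0\},
\]
since the ball does not cross any of the remaining $2d-j$ bounding hyperplanes of $[0,1]^d$.

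Now define the intermediate sets $S_k = \B_r(x)\cap\bigcap_{i=1}^{k}\{y_i\ge 0\}$ for $0\le k\le j$, so that $S_0 = \B_r(x)$ and $S_j = \tilde{\B}_r(x)$. I would prove by induction on $k$ that $|S_k|\ge \nu_d/2^k$. The base case $k=0$ gives $|S_0|=\nu_d$. For the inductive step, the key observation is that $S_{k-1}$ is symmetric about the hyperplane $\{y_k=x_k\}$: the ball $\B_r(x)$ is symmetric about its center $x$, and the constraints defining $S_{k-1}$ only involve coordinates $y_1,\ldots,y_{k-1}$ and therefore are preserved by the reflection $y_k\mapsto 2x_k-y_k$. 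Consequently
\[
|S_{k-1}\cap\{y_k\ge x_k\}| \;=\; \tfrac{1}{2}|S_{k-1}|.
\]
Since $x_k\ge 0$, we have the inclusion $\{y_k\ge x_k\}\subseteq\{y_k\ge 0\}$, which gives
\[
|S_k| \;=\; |S_{k-1}\cap\{y_k\ge 0\}| \;\ge\; |S_{k-1}\cap\{y_k\ge x_k\}| \;=\; \tfrac{1}{2}|S_{k-1}| \;\ge\; \frac{\nu_d}{2^k},
\]
completing the induction. Setting $k=j$ yields the claim.

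The main subtlety, and the only place care is required, is the reduction step at the start: one has to justify that the relevant boundary hyperplanes can be put in the form $y_i=0$ and that the ball avoids all other faces, which is exactly where the standing assumption $r<1/2$ is used. The rest is a clean symmetry-plus-induction argument, so I do not anticipate any hidden difficulty.
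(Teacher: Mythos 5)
Your proof is correct, and it rests on the same geometric idea as the paper's --- that each of the $j$ boundary hyperplanes met by $\B_r(x)$ can remove at most half of the ball's volume --- but it executes that idea differently and, in fact, more rigorously. The paper's proof is a two-line extremal argument: it asserts \emph{without justification} that $|\tilde{\B}_r(x)|$ attains its minimum when $x$ lies on the intersection of the $j$ hyperplanes, and then evaluates the volume at that extremal position. You avoid the unproven extremal claim entirely by arguing pointwise: after using $r<1/2$ to reduce, via reflections, to the case where the offending faces are $y_1=0,\dots,y_j=0$, the reflection symmetry of $S_{k-1}$ about $\{y_k=x_k\}$ combined with the inclusion $\{y_k\ge x_k\}\subseteq\{y_k\ge 0\}$ gives $|S_k|\ge\tfrac12|S_{k-1}|$ for \emph{every} admissible $x$, and the induction closes. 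What your route buys is a complete proof where the paper only gestures at one (and it makes explicit where $r<1/2$ is used, which the paper's proof leaves implicit); the cost is a modest amount of extra bookkeeping. I see no gaps.
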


\begin{proof}
  $\tilde{\B}_r(x)$ attains the minimal volume when $x$ is located on
  the intersection of exactly $j$ $d$-dimensional hyperplanes that
  form the boundary of $[0,1]^d$. Every such hyperplane 
  cuts the volume of $\nu_d$ in half. Thus,
  $\nu_d / 2^{j} \leq |\tilde{\B}_r(x)|$.
\end{proof}

\begin{claim}\label{clm:regions}
  For every $0\leq j\leq d$ we have that $|R_j| \leq c_j r^j$, where
  $c_j$ is some positive constant.
\end{claim}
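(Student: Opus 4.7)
The plan is to exploit the assumption $r<1/2$, which forces significant geometric restrictions on which collections of boundary hyperplanes can be simultaneously intersected by a single ball $B_r(x)$. Observe first that the boundary of $[0,1]^d$ consists of $2d$ axis-aligned hyperplanes, organized as $d$ pairs of parallel hyperplanes (one pair per axis). Since $r<1/2$, the diameter of $B_r(x)$ is strictly less than $1$, which is exactly the distance between two parallel faces of the cube. Hence $B_r(x)$ cannot intersect both hyperplanes of any parallel pair. Consequently, if $x\in R_j$, then the $j$ hyperplanes hit by $B_r(x)$ correspond to $j$ distinct coordinate axes.

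Next I would upper bound $|R_j|$ via a union bound over the choice of these $j$ axes and the sign within each axis. For a fixed set $S\subseteq\{1,\dots,d\}$ of $j$ axes and a fixed choice of one boundary hyperplane per axis in $S$, the set of $x\in[0,1]^d$ whose ball $B_r(x)$ meets each of these $j$ specified hyperplanes is contained in a Cartesian product: in each of the $j$ coordinates the point $x$ lies in an interval of length $r$ (the slab within distance $r$ from the specified hyperplane intersected with $[0,1]$), and in each of the remaining $d-j$ coordinates $x$ lies in $[0,1]$. Hence the measure of this set is at most $r^j$.

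Combining the two counts, the number of valid selections of $j$ hyperplanes is $2^j\binom{d}{j}$, so by a straightforward union bound
\[
|R_j|\;\leq\;2^j\binom{d}{j}\,r^j,
\]
which gives the claim with $c_j:=2^j\binom{d}{j}$ (and $c_0=1$ handles the trivial case $R_0\subseteq[0,1]^d$). The main subtle point — which I would be careful to verify — is the parallel-hyperplane exclusion enabled by $r<1/2$; without it the $c_j$ would still exist but the combinatorics would require counting all $j$-subsets of the $2d$ hyperplanes, and one would need a separate argument to rule out incompatible parallel pairs. Everything else is an elementary volume estimate, and since $R_j$ is certainly contained in the union over all valid $j$-tuples of hyperplanes (dropping the "exactly $j$" condition in favor of "at least these $j$"), the bound proposed above is immediate.
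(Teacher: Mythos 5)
Your proof is correct and follows essentially the same route as the paper's: both identify the $2^j\binom{d}{j}$ pieces of $R_j$ indexed by the choice of $j$ axes and a side per axis, and bound each piece's volume by $r^j$ coming from the $j$ constrained coordinates. The only cosmetic difference is that you use a union bound over ``at least these $j$ hyperplanes'' while the paper partitions $R_j$ exactly into components of volume $(1-2r)^{d-j}r^j$; your version is slightly looser but equally valid, and your explicit use of $r<1/2$ to rule out hitting two parallel faces is a detail the paper leaves implicit.
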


\begin{proof}
  We fix a specific $j$. The number of maximal connected components of
  $R_j$ only depends on $d$. Denote this number by $c_j$.  The volume
  of each such component can be expressed as $(1-2r)^{d-j}r^j$ since
  the number of $d$-dimensional hyperplanes that are in contact with
  the $r$-radius ball is exactly~$j$. Thus,
  $|R_j|=c_j(1-2r)^{d-j}r^j\leq c_jr^j$.
\end{proof}

\begin{figure*}[t]
  \centering 
   	\includegraphics[width=0.5\textwidth]{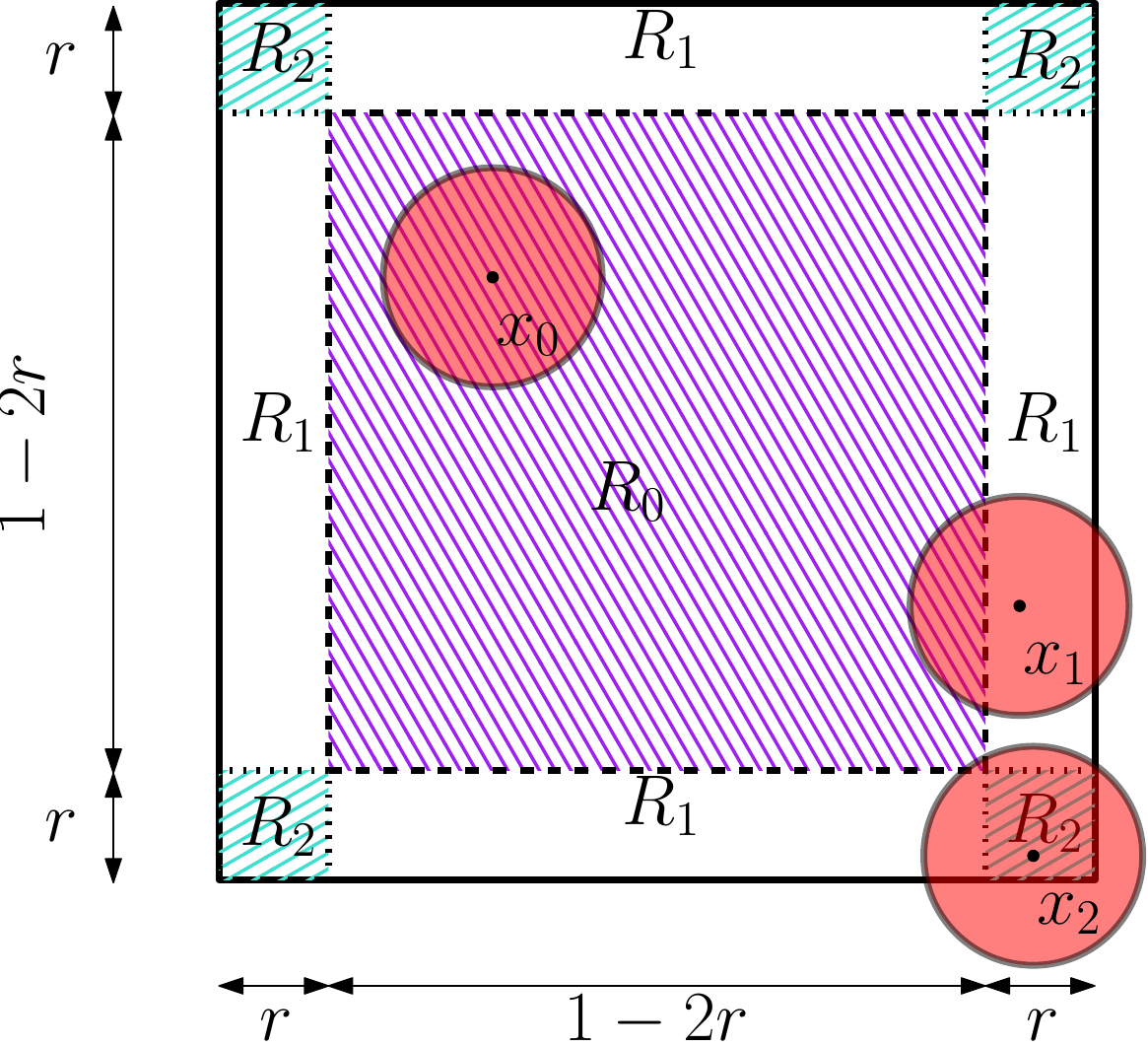}
	\label{fig:region} 
	\caption{ \sf Visualization of regions used in proof of
      Theorem~\ref{thm:srgg_con} for the planar case. The regions
      $R_0, R_1$ and $R_2$ are depicted by blue, white and cyan
      regions. For each region $R_j$ one point $x$ is shown together
      with a red disk surrounding it. Notice that the disk intersects
      exactly $j$ lines supporting the unit cube $[0,1]^2$.}
\end{figure*}

\begin{claim}\label{clm:y_int}
  For every $0\leq j\leq d, x\in R_j$ we have that 
  \[\int_{y \in [0,1]^d} \phi(y-x)\dy\geq \frac{\nu_d}{(d+1)2^j}.\]
\end{claim}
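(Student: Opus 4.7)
The plan is to reduce the claim to two separate calculations: (i) the integral of $\phi(\|y-x\|_2)$ over the full ball $B_r(x)$, and (ii) a reflection-based lower bound showing that cutting by the $j$ boundary hyperplanes removes at most a factor of $2^j$.

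First I would compute the full-ball integral using spherical coordinates centered at $x$. With the surface area of the $(d-1)$-sphere of radius $\rho$ equal to $d\theta_d \rho^{d-1}$, one obtains
\[
\int_{B_r(x)} \phi(\|y-x\|_2)\,dy = d\theta_d \int_0^r \left(1 - \frac{\rho}{r}\right)\rho^{d-1}\,d\rho = d\theta_d r^d\left(\frac{1}{d} - \frac{1}{d+1}\right) = \frac{\nu_d}{d+1}.
\]
This already establishes the $j=0$ case, since $\tilde{\B}_r(x) = B_r(x)$ there.

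For $j \geq 1$, I would first translate into coordinates centered at $x$. Since $r < 1/2$, at most one of the two hyperplanes perpendicular to any given coordinate axis can intersect $B_r(x)$, so the $j$ boundary hyperplanes that $B_r(x)$ meets are perpendicular to $j$ \emph{distinct} axes, hence pairwise orthogonal. After a permutation of coordinates we can write
\[
\tilde{\B}_r(x) - x \;=\; B_r(0) \cap \{y : y_1 \geq a_1\} \cap \cdots \cap \{y : y_j \geq a_j\}
\]
with each $a_i \in (-r, 0]$, where the sign follows from $x$ lying on the ``interior'' side of each hyperplane. Crucially, $\phi(\|y\|_2)$ depends only on $\|y\|_2$, so the function $f(y) := \phi(\|y\|_2)\,\mathbf{1}_{B_r(0)}(y)$ is even in every coordinate.

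The core step is then a short induction on $j$ to show that for any measurable $f \geq 0$ on $\dR^d$ which is even in each of $y_1,\ldots,y_j$, and any $a_1,\ldots,a_j \leq 0$,
\[
\int_{y_1 \geq a_1,\, \ldots,\, y_j \geq a_j} f(y)\,dy \;\geq\; \frac{1}{2^j}\int_{\dR^d} f(y)\,dy.
\]
The base case $j=1$ follows because $\{y_1 \geq 0\} \subseteq \{y_1 \geq a_1\}$ together with evenness gives $\int_{y_1 \geq a_1} f \geq \int_{y_1 \geq 0} f = \tfrac{1}{2}\int f$. For the inductive step, integrating $f$ over $\{y_1 \geq a_1\}$ in the first coordinate yields a function of $(y_2,\ldots,y_d)$ that is still even in $y_2,\ldots,y_j$ and is pointwise at least half of the full marginal; applying the induction hypothesis then costs another factor of $2^{j-1}$. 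Applying this lemma to $f$ above and combining with the full-ball computation gives the claimed bound $\nu_d/((d+1)2^j)$.

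I do not anticipate a real obstacle: the only subtle point is justifying orthogonality of the $j$ hyperplanes (which uses $r<1/2$, already assumed in the paper) and ensuring the inductive reflection argument applies to the \emph{indicator of the ball} as well—this is fine because the ball $B_r(0)$ is symmetric under each coordinate reflection, so the truncated integrand $\phi\cdot \mathbf{1}_{B_r(0)}$ remains even in every coordinate.
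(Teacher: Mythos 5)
Your proposal is correct, and its overall skeleton matches the paper's proof: both reduce the claim to (a) the exact computation $\int_{\B_r(x)}\phi(y-x)\dy=\nu_d/(d+1)$ via spherical shells, and (b) the assertion that restricting to $\tilde{\B}_r(x)=\B_r(x)\cap[0,1]^d$ costs at most a factor of $2^j$. The difference lies in how step (b) is justified. The paper derives it in transition (1) by invoking Claim~\ref{clm:volume} (the volume bound $|\tilde{\B}_r(x)|\geq\nu_d/2^j$) together with non-negativity of $\phi$; taken literally, a lower bound on the \emph{volume} of the retained region does not by itself bound the retained fraction of a weighted integral, so the paper is implicitly relying on the same symmetry you make explicit. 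Your reflection lemma --- that for a function even in each of $y_1,\dots,y_j$ and cut by halfspaces $\{y_i\geq a_i\}$ with $a_i\leq 0$ one keeps at least $2^{-j}$ of the integral, proved by induction using $\{y_i\geq 0\}\subseteq\{y_i\geq a_i\}$ --- is exactly the missing justification, and your observations that the $j$ cutting hyperplanes are perpendicular to distinct axes (using $r<1/2$) and that $\phi(\|\cdot\|_2)\,\mathbf{1}_{\B_r(0)}$ is even in every coordinate are the right supporting details. In short, you take the same route but supply a cleaner, fully rigorous argument for the $2^{-j}$ step; nothing in your proposal is gapped.
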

\begin{proof} 
  We use the notation
  $\S_r(x)= \left\{y\in \dR^d\middle| \|x-y\|_2=r\right\}$ for the
  $d$-dimensional $r$-sphere.

  \begin{align*}
    \int_{y \in [0,1]^d} \phi(y-x)\dy &  = \int_{y \in
                                        \tilde{\B}_r(x)}  \phi(y-x)\dy
    \\ & \stackrel{(1)}{\geq} \frac{1}{2^j}\int_{y \in
         \B_r(x)}  \phi(y-x)\dy \\ & = \frac{1}{2^j}\int_{y \in
                                     \B_r(x)}
                                     \frac{r-\|y-x\|_2}{r}\dy
    \\  & = \frac{1}{2^j}\left(\int_{y \in
          \B_r(x)}\dy
          - 
          \frac{1}{r}\int_{y \in
          \B_r(x)}\|y-x\|_2\dy\right)
    \\  & = \frac{1}{2^j}\left(\nu_d
          - 
          \frac{1}{r}\int_{y \in
          \B_r(x)}\|y-x\|_2\dy\right)
    \\  & \stackrel{(2)}{=} \frac{1}{2^j}\left(\nu_d
          - 
          \frac{1}{r}\int_{\rho\in [0,r]}|\S_\rho(x)|\rho\drho\right)
    \\  & \stackrel{(3)}{=} \frac{1}{2^j}\left(\nu_d
          - 
          \frac{1}{r}\int_{\rho\in
          [0,r]}\frac{d}{\rho}|\B_\rho(x)|\rho\drho\right) 
    \\  & = \frac{1}{2^j}\left(\nu_d
          - 
          \frac{d}{r}\int_{\rho\in [0,r]}|\B_\rho(x)|\drho\right) 
    \\  & \stackrel{(4)}{=} \frac{1}{2^j}\left(\nu_d
          - 
          \frac{d}{r}\cdot \frac{r}{d+1}\nu_d\right)  
    \\ & = \frac{\nu_d}{(d+1)2^j}.
  \end{align*}
  Explanation for the non-trivial transitions:
  (1)~Claim~\ref{clm:volume} and the fact that $\phi$ is
  non-negative. (2)~Changing integrating parameters: instead of
  integrating over all distances $\|x-y\|_2,$ we integrate over all
  radii $\rho \in [0, r]$.  For each such radius, we multiply the
  volume of the sphere $\S_\rho(x)$ by $\rho$. (3)~The relation that
  for every dimension $d$ we have
  $|S_\rho(x)| = \frac{d}{\rho} |\B_\rho(x)|$. (4)~The fact that
  $|\B_\rho(x)|=c\rho^d$ for some constant $c>0$, which reduces the
  integration to a polynomial.
\end{proof}

We are now ready to tame the beast.

\begin{proof}[Proof of Theorem~\ref{thm:srgg_con}]
  Recall that it suffices to show that $I_n$ converges to $0$. Then
  \begin{align*}
  	I_n &= n \int_{x \in [0,1]^d} 
          \exp 
          \left( 
          -n \int_{y \in [0,1]^d} \phi(y-x)\dy  	
          \right)
          \dx\\
        & = n\sum_{j=0}^d\int_{x\in R_j} \exp 
          \left( 
          -n \int_{y \in [0,1]^d} \phi(y-x)\dy  	
          \right)
          \dx \\ 
        & \stackrel{(*)}{\leq}
          n\sum_{j=0}^d\int_{x\in R_j} \exp 
          \left( 
          -n \cdot \frac{\nu_d}{(d+1)2^j}
          \right)
          \dx\\ 
        & =
          n\sum_{j=0}^d |R_j| \exp 
          \left( 
          -n \cdot \frac{\nu_d}{(d+1)2^j}
          \right)\\ 
        & =
          n\sum_{j=0}^d |R_j| \exp 
          \left( 
          -n \cdot \frac{\theta_d r_n^d}{(d+1)2^j}
          \right)\\
        & =
          n\sum_{j=0}^d |R_j| \exp 
          \left( 
          -n \cdot \frac{\theta_d}{(d+1)2^j}
          \cdot \frac{\gamma^d\log n}{n}\right)\\
        & =
          n\sum_{j=0}^d |R_j| \exp 
          \left( 
          -\frac{\theta_d\gamma^d}{(d+1)2^j}
          \cdot \log n\right) ,
     \end{align*}
     where $(*)$ is due to Claim~\ref{clm:y_int}. Denote
     $a:=\frac{\theta_d\gamma^d}{(d+1)2^j}$. We have that
     \begin{align*}
       I_n & \leq n\sum_{j=0}^d |R_j| \exp 
             \left( 
             -a\log n\right) =  n\sum_{j=0}^d |R_j| n^{-a}\\
           & \stackrel{(**)}{\leq} n \sum_{j = 0} ^d  c_j r^j n^{-a} =
             n\sum_{j=0}^d c_j \gamma^j (\log n)^{j/d}n^{-j/d}n^{-a}
       \\ & =
            \sum_{j=0}^d c_j\gamma^j n^{1-j/d-a} (\log n)^{j/d},
     \end{align*}
     where $(**)$ is due to Claim~\ref{clm:regions}.  We now show that
     for $\gamma >(d+1)^{1/d}\gamma^*$ we have that $j/d+a>1$, which
     implies that that $I_n$ converges to $0$.
     \begin{align*}
       \frac{j}{d}+a & = \frac{j}{d} +
                       \frac{\theta_d\gamma^d}{(d+1)2^j} > \frac{j}{d} +
                       \frac{\theta_d(d+1)(\gamma^*)^d}{(d+1)2^j}\\ & = \frac{j}{d} +
                                                                      \frac{\theta_d2^d}{2d\theta_d2^j} =
                                                                      \frac{j+2^{d-j-1}}{d}\geq 1.
     \end{align*}
\end{proof}

\bibliographystyle{abbrv}
\bibliography{rgg}

\begin{thebibliography}{10}

\bibitem{BhaRoy04}
R.~R. Abhay G.~Bhatt.
\newblock On a random directed spanning tree.
\newblock {\em Advances in Applied Probability}, 36(1):19--42, 2004.

\bibitem{AKS14}
D.~Aiger, H.~Kaplan, and M.~Sharir.
\newblock Reporting neighbors in high-dimensional {E}uclidean space.
\newblock {\em {SIAM} J. Comput.}, 43(4):1363--1395, 2014.

\bibitem{APD11}
R.~Alterovitz, S.~Patil, and A.~Derbakova.
\newblock Rapidly-exploring roadmaps: Weighing exploration vs. refinement in
  optimal motion planning.
\newblock In {\em IEEE International Conference on Robotics and Automation
  (ICRA)}, pages 3706--3712, 2011.

\bibitem{ABDJV98}
N.~M. Amato, O.~B. Bayazit, L.~K. Dale, C.~Jones, and D.~Vallejo.
\newblock {OBPRM}: {A} obstacle-based {PRM} for 3{D} workspaces.
\newblock In {\em International Workshop on the Algorithmic Foundations of
  Robotics (WAFR)}, pages 155--168, 1998.

\bibitem{AppRus02}
M.~J. Appel and R.~P. Russo.
\newblock The connectivity of a graph on uniform points on $[0,1]^d$.
\newblock {\em Statistics \& Probability Letters}, 60(4):351--357, 2002.

\bibitem{AT13}
O.~Arslan and P.~Tsiotras.
\newblock Use of relaxation methods in sampling-based algorithms for optimal
  motion planning.
\newblock In {\em IEEE International Conference on Robotics and Automation
  (ICRA)}, pages 2413--2420, 2013.

\bibitem{AMNSW98}
S.~Arya, D.~M. Mount, N.~S. Netanyahu, R.~Silverman, and A.~Y. Wu.
\newblock An optimal algorithm for approximate nearest neighbor searching in
  fixed dimensions.
\newblock {\em J. {ACM}}, 45(6):891--923, 1998.

\bibitem{BagSoh08}
A.~Bagchi and S.~Bansal.
\newblock Nearest-neighbor graphs on random point sets and their applications
  to sensor networks.
\newblock In {\em {ACM} Symposium on Principles of Distributed Computing},
  pages 434--434, New York, NY, USA, 2008. ACM.

\bibitem{BBSW09}
P.~N. Balister, B.~Bollob\'{a}s, A.~Sarakar, and M.~Walters.
\newblock A critical constant for the $k$-nearest-neighbour model.
\newblock {\em Advances in Applied Probability}, 41(1):1--12, 2009.

\bibitem{B75}
J.~L. Bentley.
\newblock Multidimensional binary search trees used for associative searching.
\newblock {\em Commun. ACM}, 18:509--517, 1975.

\bibitem{BEFSS10}
M.~Bradonjic, R.~Els{\"{a}}sser, T.~Friedrich, T.~Sauerwald, and A.~Stauffer.
\newblock Efficient broadcast on random geometric graphs.
\newblock In {\em {ACM}-{SIAM} Symposium on Discrete Algorithms ({SODA})},
  pages 1412--1421, 2010.

\bibitem{BroETAL14}
N.~Broutin, L.~Devroye, N.~Fraiman, and G.~Lugosi.
\newblock Connectivity threshold of bluetooth graphs.
\newblock {\em Random Struct. Algorithms}, 44(1):45--66, 2014.

\bibitem{Can88}
J.~Canny.
\newblock {\em The complexity of robot motion planning}.
\newblock MIT press, 1988.

\bibitem{clhbkt05}
H.~Choset, K.~Lynch, S.~Hutchinson, G.~Kantor, W.~Burgard, L.~E. Kavraki, and
  S.~Thrun.
\newblock {\em Principles of Robot Motion: Theory, Algorithms, and
  Implementations}.
\newblock MIT Press, 2005.

\bibitem{DMPP14}
J.~D\'{i}az, D.~Mitsche, G.~Perarnau, and X.~P. Gim\'{e}nez.
\newblock On the relation between graph distance and {E}uclidean distance in
  random geometric graphs.
\newblock {\em Advances in Applied Probability}, 2015.
\newblock to appear.

\bibitem{DB14}
A.~Dobson and K.~E. Bekris.
\newblock Sparse roadmap spanners for asymptotically near-optimal motion
  planning.
\newblock {\em International Journal of Robotics Research}, 33(1):18--47, 2014.

\bibitem{DubPan09}
D.~P. Dubhashi and A.~Panconesi.
\newblock {\em Concentration of measure for the analysis of randomized
  algorithms}.
\newblock Cambridge University Press, 2009.

\bibitem{EllMarYan07}
R.~B. Ellis, J.~L. Martin, and C.~Yan.
\newblock Random geometric graph diameter in the unit ball.
\newblock {\em Algorithmica}, 47(4):421--438, 2007.

\bibitem{FBF77}
J.~H. Friedman, J.~L. Bentley, and R.~A. Finkel.
\newblock An algorithm for finding best matches in logarithmic expected time.
\newblock {\em ACM Trans. Math. Softw.}, 3(3):209--226, 1977.

\bibitem{TobSauSta13}
T.~Friedrich, T.~Sauerwald, and A.~Stauffer.
\newblock Diameter and broadcast time of random geometric graphs in arbitrary
  dimensions.
\newblock {\em Algorithmica}, 67(1):65--88, 2013.

\bibitem{FriPeg14}
A.~Frieze and W.~Pegden.
\newblock Traveling in randomly embedded random graphs.
\newblock {\em arXiv:1411.6596}, 2014.

\bibitem{GSB15}
J.~D. Gammell, S.~S. Srinivasa, and T.~D. Barfoot.
\newblock Batch informed trees ({BIT}*): Sampling-based optimal planning via
  the heuristically guided search of implicit random geometric graphs.
\newblock In {\em {IEEE} International Conference on Robotics and Automation
  ({ICRA})}, pages 3067--3074, 2015.

\bibitem{GO07}
R.~Geraerts and M.~H. Overmars.
\newblock Creating high-quality paths for motion planning.
\newblock {\em International Journal of Robotics Research}, 26(8):845--863,
  2007.

\bibitem{Gil61}
E.~N. Gilbert.
\newblock Random plane networks.
\newblock {\em Journal of the Society for Industrial \& Applied Mathematics},
  9(4):533--543, 1961.

\bibitem{GuKum99}
P.~Gupta and P.~Kumar.
\newblock Critical power for asymptotic connectivity in wireless networks.
\newblock In W.~M. McEneaney, G.~Yin, and Q.~Zhang, editors, {\em Stochastic
  Analysis, Control, Optimization and Applications}, Systems \& Control:
  Foundations \& Applications, pages 547--566. Birkhäuser Boston, 1999.

\bibitem{HIM12}
S.~Har{-}Peled, P.~Indyk, and R.~Motwani.
\newblock Approximate nearest neighbor: Towards removing the curse of
  dimensionality.
\newblock {\em Theory of Computing}, 8(1):321--350, 2012.

\bibitem{HLM99}
D.~Hsu, J.-C. Latombe, and R.~Motwani.
\newblock Path planning in expansive configuration spaces.
\newblock {\em International Journal of Computational Geometry and
  Applications}, 9(4/5), 1999.

\bibitem{IM98}
P.~Indyk and R.~Motwani.
\newblock Approximate nearest neighbors: Towards removing the curse of
  dimensionality.
\newblock In {\em ACM Symposium on the Theory of Computing (STOC)}, pages
  604--613, 1998.

\bibitem{JSCP15}
L.~Janson, E.~Schmerling, A.~A. Clark, and M.~Pavone.
\newblock Fast marching tree: {A} fast marching sampling-based method for
  optimal motion planning in many dimensions.
\newblock {\em International Journal of Robotics Research}, 34(7):883--921,
  2015.

\bibitem{KarFra10}
S.~Karaman and E.~Frazzoli.
\newblock Optimal kinodynamic motion planning using incremental sampling-based
  methods.
\newblock In {\em {IEEE} Conference on Decision and Control ({CDC})}, pages
  7681--7687, 2010.

\bibitem{KF11}
S.~Karaman and E.~Frazzoli.
\newblock Sampling-based algorithms for optimal motion planning.
\newblock {\em International Journal of Robotics Research}, 30(7):846--894,
  2011.

\bibitem{KSLO96}
L.~E. Kavraki, P.~Svestka, J.-C. Latombe, and M.~H. Overmars.
\newblock Probabilistic roadmaps for path planning in high dimensional
  configuration spaces.
\newblock {\em IEEE Transactions on Robotics}, 12(4):566--580, 1996.

\bibitem{KSH15}
M.~Kleinbort, O.~Salzman, and D.~Halperin.
\newblock Efficient high-quality motion planning by fast all-pairs
  r-nearest-neighbors.
\newblock In {\em {IEEE} International Conference on Robotics and Automation
  ({ICRA})}, pages 2985--2990, 2015.

\bibitem{KozLotStu10}
G.~Kozma, Z.~Lotker, and G.~Stupp.
\newblock On the connectivity threshold for general uniform metric spaces.
\newblock {\em Information Processing Letters}, 110(10):356--359, 2010.

\bibitem{KL00}
J.~J. Kuffner and S.~M. La{V}alle.
\newblock {RRT}-{C}onnect: An efficient approach to single-query path planning.
\newblock In {\em {IEEE} International Conference on Robotics and Automation
  ({ICRA})}, pages 995--1001, 2000.

\bibitem{LadKav04b}
A.~M. Ladd and L.~E. Kavraki.
\newblock Fast tree-based exploration of state space for robots with dynamics.
\newblock In {\em Algorithmic Foundations of Robotics}, pages 297--312, 2004.

\bibitem{Latombe91}
J.-C. Latombe.
\newblock {\em Robot Motion Planning}.
\newblock Kluwer Academic Publishers, Norwell, MA, USA, 1991.

\bibitem{lavalle06}
S.~M. La{V}alle.
\newblock {\em Planning Algorithms}.
\newblock Cambridge University Press, Cambridge, U.K., 2006.

\bibitem{LiLitBek14}
Y.~Li, Z.~Littlefield, and K.~E. Bekris.
\newblock Sparse methods for efficient asymptotically optimal kinodynamic
  planning.
\newblock In {\em Algorithmic Foundations of Robotics}, pages 263--282, 2014.

\bibitem{LTA03}
J.-M. Lien, S.~L. Thomas, and N.~M. Amato.
\newblock A general framework for sampling on the medial axis of the free
  space.
\newblock In {\em {IEEE} International Conference on Robotics and Automation
  (ICRA)}, pages 4439--4444, 2003.

\bibitem{LLB13}
Z.~Littlefield, Y.~Li, and K.~E. Bekris.
\newblock Efficient sampling-based motion planning with asymptotic
  near-optimality guarantees for systems with dynamics.
\newblock In {\em {IEEE/RSJ} International Conference on Intelligent Robots and
  Systems ({IROS})}, pages 1779--1785, 2013.

\bibitem{LSMK13}
R.~Luna, I.~A. {\c S}ucan, M.~Moll, and L.~E. Kavraki.
\newblock Anytime solution optimization for sampling-based motion planning.
\newblock In {\em {IEEE} International Conference on Robotics and Automation
  (ICRA)}, pages 5053--5059, 2013.

\bibitem{McmETAL12}
T.~McMahon, S.~A. Jacobs, B.~Boyd, L.~Tapia, and N.~M. Amato.
\newblock Local randomization in neighbor selection improves {PRM} roadmap
  quality.
\newblock In {\em {IEEE/RSJ} International Conference on Intelligent Robots and
  Systems (IROS)}, pages 4441--4448, 2012.

\bibitem{MitPer12}
D.~Mitsche and G.~Perarnau.
\newblock On the treewidth and related parameters of random geometric graphs
  (stacs).
\newblock In {\em International Symposium on Theoretical Aspects of Computer
  Science}, pages 408--419, 2012.

\bibitem{flann}
M.~Muja and D.~G. Lowe.
\newblock Fast approximate nearest neighbors with automatic algorithm
  configuration.
\newblock In {\em International Conference on Computer Vision Theory and
  Applications (VISSAPP)}, pages 331--340. INSTICC Press, 2009.

\bibitem{MP10}
S.~Muthukrishnan and G.~Pandurangan.
\newblock Thresholding random geometric graph properties motivated by ad hoc
  sensor networks.
\newblock {\em Journal of Computer and System Sciences}, 76(7):686--696, 2010.

\bibitem{NRH10}
O.~Nechushtan, B.~Raveh, and D.~Halperin.
\newblock Sampling-diagram automata: A tool for analyzing path quality in tree
  planners.
\newblock In {\em International Workshop on the Algorithmic Foundations of
  Robotics (WAFR)}, pages 285--301, 2010.

\bibitem{Pen97}
M.~D. Penrose.
\newblock The longest edge of the random minimal spanning tree.
\newblock {\em The annals of applied probability}, pages 340--361, 1997.

\bibitem{Pen99}
M.~D. Penrose.
\newblock On $k$-connectivity for a geometric random graph.
\newblock {\em Random Structures \& Algorithms}, 15(2):145--164, 1999.

\bibitem{Pen03}
M.~D. Penrose.
\newblock {\em Random geometric graphs}.
\newblock Oxford University Press, 2003.

\bibitem{Pen13}
M.~D. Penrose.
\newblock Connectivity of soft random geometric graphs.
\newblock {\em arXiv:1311.3897}, 2013.

\bibitem{PenWad10a}
M.~D. Penrose and A.~Wade.
\newblock Random directed and on-line networks.
\newblock In W.~S. Kendall and I.~Molchanov, editors, {\em New Perspectives in
  Stochastic Geometry}, pages 248--264. Oxford University Press, 2010.

\bibitem{PenWad10}
M.~D. Penrose and A.~R. Wade.
\newblock Limit theorems for random spatial drainage networks.
\newblock {\em Advances in Applied Probability}, 42(3):659--688, 2010.

\bibitem{PenYuk13}
M.~D. Penrose and J.~E. Yukich.
\newblock Limit theory for point processes in manifolds.
\newblock {\em The Annals of Applied Probability}, 23(6):2161--2211, 2013.

\bibitem{PerETAL12}
A.~Perez, R.~Platt, G.~Konidaris, L.~Kaelbling, and T.~Lozano{-}P{\'{e}}rez.
\newblock {LQR}-{RRT}*: Optimal sampling-based motion planning with
  automatically derived extension heuristics.
\newblock In {\em {IEEE} International Conference on Robotics and Automation
  ({ICRA})}, pages 2537--2542, 2012.

\bibitem{REH11}
B.~Raveh, A.~Enosh, and D.~Halperin.
\newblock A little more, a lot better: Improving path quality by a path-merging
  algorithm.
\newblock {\em IEEE Transactions on Robotics}, 27(2):365--371, 2011.

\bibitem{Rei79}
J.~H. Reif.
\newblock Complexity of the mover’s problem and generalization.
\newblock In {\em {IEEE} Symposium on Foundations of Computer Science
  ({FOCS})}, pages 421--427, 1979.

\bibitem{SH14}
O.~Salzman and D.~Halperin.
\newblock Asymptotically near-optimal {RRT} for fast, high-quality, motion
  planning.
\newblock In {\em {IEEE} International Conference on Robotics and Automation
  ({ICRA})}, pages 4680--4685, 2014.

\bibitem{SH15}
O.~Salzman and D.~Halperin.
\newblock Asymptotically-optimal motion planning using lower bounds on cost.
\newblock In {\em {IEEE} International Conference on Robotics and Automation
  ({ICRA})}, pages 4167--4172, 2015.

\bibitem{SSAH14}
O.~Salzman, D.~Shaharabani, P.~K. Agarwal, and D.~Halperin.
\newblock Sparsification of motion-planning roadmaps by edge contraction.
\newblock {\em International Journal of Robotics Research}, 33(14):1711--1725,
  2014.

\bibitem{SchTha14}
M.~Schulte and C.~Thaele.
\newblock Central limit theorems for the radial spanning tree.
\newblock {\em arXiv preprint arXiv:1412.7462}, 2014.

\bibitem{SLN00}
T.~Sim{\'{e}}on, J.~Laumond, and C.~Nissoux.
\newblock Visibility-based probabilistic roadmaps for motion planning.
\newblock {\em Advanced Robotics}, 14(6):477--493, 2000.

\bibitem{SolHal15}
K.~Solovey and D.~Halperin.
\newblock On the hardness of unlabeled multi-robot motion planning.
\newblock In {\em Robotics: Science and Systems (RSS)}, 2015.

\bibitem{SucKav12}
I.~{\c{S}}ucan and L.~Kavraki.
\newblock A sampling-based tree planner for systems with complex dynamics.
\newblock {\em {IEEE} Transactions on Robotics}, 28(1):116--131, 2012.

\bibitem{SMK12}
I.~A. {\c{S}}ucan, M.~Moll, and L.~E. Kavraki.
\newblock The open motion planning library.
\newblock {\em {IEEE} Robotics \& Automation Magazine}, 19(4):72--82, 2012.

\bibitem{US03}
C.~Urmson and R.~G. Simmons.
\newblock Approaches for heuristically biasing {RRT} growth.
\newblock In {\em {IEEE/RSJ} International Conference on Intelligent Robots and
  Systems (IROS)}, pages 1178--1183, 2003.

\bibitem{Wade09}
A.~R. Wade.
\newblock Asymptotic theory for the multidimensional random on-line
  nearest-neighbour graph.
\newblock {\em Stochastic Processes and their Applications}, 119(6):1889 --
  1911, 2009.

\bibitem{Wal11}
M.~Walters.
\newblock Random geometric graphs.
\newblock In R.~Chapman, editor, {\em Surveys in Combinatorics 2011},
  chapter~8, pages 365--401. Cambridge University Press, 2011.

\bibitem{WebBer13}
D.~J. Webb and J.~P. van~den Berg.
\newblock Kinodynamic {RRT}*: Asymptotically optimal motion planning for robots
  with linear dynamics.
\newblock In {\em {IEEE} Conference on Robotics and Automation (ICRA)}, pages
  5054--5061, 2013.

\bibitem{XieETAL15}
C.~Xie, J.~P. van~den Berg, S.~Patil, and P.~Abbeel.
\newblock Toward asymptotically optimal motion planning for kinodynamic systems
  using a two-point boundary value problem solver.
\newblock In {\em {IEEE} International Conference on Robotics and Automation
  (ICRA)}, pages 4187--4194, 2015.

\bibitem{XueK04}
F.~Xue and P.~R. Kumar.
\newblock The number of neighbors needed for connectivity of wireless networks.
\newblock {\em Wireless Networks}, 10(2):169--181, 2004.

\end{thebibliography}

\end{document}